\newif\ifarxiv
\newtheorem{thm}{Theorem}
\newtheorem{lem}{Lemma}
\newtheorem{defn}{Definition}
\newtheorem{assum}{Assumption}
\newtheorem{cor}{Corollary}
\newtheorem{rem}{Remark}
\def \exp {\mathrm{exp}}
\newcommand{\Mnorm}[2]{{\left\vert\kern-0.30ex\left\vert\kern-0.30ex\left\vert #1 
		\right\vert\kern-0.30ex\right\vert\kern-0.30ex\right\vert}}
\newcommand{\Opnorm}[3]{{\left\vert\kern-0.25ex\left\vert\kern-0.25ex\left\vert #1 
		\right\vert\kern-0.25ex\right\vert\kern-0.25ex\right\vert}_{#2 \to #3}}
\newcommand{\norm}[2]{{\left\vert\kern-0.30ex\left\vert #1 
		\right\vert\kern-0.30ex\right\vert}}
\newcommand{\innerproductminconstant}[1]{\psi_0}
\newif\ifarxiv
\begin{document}
\ifarxiv
\doublespacing
\onecolumn
\else \fi
\sloppy

\title{\bf Thompson Sampling in Partially Observable Contextual Bandits} 
\author{Hongju Park$^1$ and Mohamad Kazem Shirani Faradonbeh$^2$}%
\date{    $^1$Department of Statistics, University of Georgia\\%
    $^2$Department of Mathematics, Southern Methodist University\\[2ex]}
\maketitle
 

\begin{abstract}
		Contextual bandits constitute a classical framework for decision-making under uncertainty. In this setting, the goal is to learn the arms of highest reward subject to contextual information, while the unknown reward parameters of each arm need to be learned by experimenting that specific arm. Accordingly, a fundamental problem is that of balancing exploration (i.e., pulling different arms to learn their parameters), versus exploitation (i.e., pulling the best arms to gain reward). To study this problem, the existing literature mostly considers perfectly observed contexts. However, the setting of partial context observations remains unexplored to date, despite being theoretically more general and practically more versatile. We study bandit policies for learning to select optimal arms based on the data of observations, which are noisy linear functions of the unobserved context vectors. Our theoretical analysis shows that the Thompson sampling policy successfully balances exploration and exploitation. Specifically, we establish the followings: (i) regret bounds that grow poly-logarithmically with time, (ii) square-root consistency of parameter estimation, and (iii) scaling of the regret with other quantities including dimensions and number of arms. Extensive numerical experiments with both real and synthetic data are presented as well, corroborating the efficacy of Thompson sampling. To establish the results, we introduce novel martingale techniques and concentration inequalities to address partially observed dependent random variables generated from unspecified distributions, and also leverage problem-dependent information to sharpen probabilistic bounds for time-varying suboptimality gaps. These techniques pave the road towards studying other decision-making problems with contextual information as well as partial observations.
	\end{abstract}


\section{Introduction}
\label{sec:1}
Contextual bandits have emerged in the recent literature as widely-used decision-making models involving time-varying information. In this setup, a policy takes action after (perfectly or partially) observing the context(s) at each time. The data collected thus far is utilized, aiming to maximize cumulative rewards determined by both the context(s) and unknown parameters. So, any desirable policy needs to manage the delicate trade-off between learning the best (i.e., exploration) and earning the most (i.e., exploitation). For this purpose, Thompson sampling stands-out among various competitive algorithms, thanks to its strong performance as well as computationally favorable implementations. Its main idea is to explore based on samples from a data-driven posterior belief about the unknown parameters. However, comprehensive studies are currently missing for imperfectly observed contexts, and it is adopted as the focus of this work.
	
	Letting the time-varying components of the decision options (e.g., contexts) be observed only partially, is known to be advantageous. More specifically, in various real-world problems including robot control and image processing~\citep{lin2012kalman,nagrath2006control,dougherty2020digital,kang2012lidar,aastrom1965optimal,kaelbling1998planning}, partial, transformed, or noisy signal-observation models have been used traditionally to obtain better performance. On the other hand, overlooking imperfectness of observations can lead to compromised decisions. For example, if disregarding uncertainty in medical profiles of septic patients, clinical decisions end-up with worse consequences~\citep{gottesman2019guidelines}. Accordingly, partial observation models are studied in canonical settings such as linear systems \citep{kargin2023thompson}, bandit monitoring \citep{kirschner2020information,lattimore2022minimax,tsuchiya2023best}, and Markov decision processes \citep{bensoussan2004stochastic,krishnamurthy2009partially}. The above have recently motivated some work on contextual bandit policies with partially observed contexts~\citep{park2021analysis,park2022efficient,park2022worst}. However, a reliable policy that can provably balance exploration and exploitation is not currently available, as will be elaborated shortly, after clarifying the technical setting and reviewing the literature. 
	
	The common bandit setting is the so-called linear one, where the \emph{expected} reward of each arm is the inner product of (adversarial or stochastic) context(s) and reward parameter(s). The latter in stochastic contextual bandits can be either \textit{arm-specific}~\citep{goldenshluger2013linear,bastani2020online}, or \textit{shared} across all arms~\citep{dani2008stochastic,chakraborty2023thompson}. We analyze both settings, with the focus being on the more general and challenging one of the former. For the sake of completeness, the authors also refer to a (non-exhaustive) variety of extant approaches in the realm of contextual bandits. That includes (possibly infinite but bounded) action sets in a Euclidean space \citep{abbasi2011improved,abeille2017linear}, as well as those with adversarial contexts \citep{dani2008stochastic,agarwal2014taming}, together with non-linear or non-parametric reward functions \citep{dumitrascu2018pg,guan2018nonparametric,wanigasekara2019nonparametric}. Notably, all of these references assume fully observed contexts, in contradistinction to this work.

	
	The discourse of efficient policies for contextual bandits has come a long way. Algorithms based on Optimism in the Face of Uncertainty (OFU) held prominent positions thanks to intuitively balanced exploration and exploitation, together with theoretical performance guarantees \citep{auer2002using,dani2008stochastic,abbasi2011improved}. Afterward, Thompson sampling has been recognized as the pioneer, first via excelling empirical performance~\citep{chapelle2011empirical}, and then supplemented with theoretical analyses \citep{agrawal2013thompson,russo2014learning,abeille2017linear}. More recently, it has come to light that Greedy policies can be nearly optimal in some contextual bandits, e.g., those with a shared reward parameter \citep{park2022worst} and those with only two arm-specific parameters \citep{raghavan2023greedy}. In contrast, for contextual bandits with multiple arm-specific reward parameters, it is known that vanilla Greedy algorithms are non-optimal \citep{bastani2021mostly}. That is caused, intuitively, by superior arms dominating some others, leaving them unexplored, and is also illustrated in our experiments at the end of this paper.
	
	Accordingly, the study of theoretical performance guarantees for Thompson sampling has gained much popularity and made significant progress in the recent literature with an emphasis on high-probability instance-dependent regret. First, regret bounds growing as square-root of time were shown for adversarial contextual bandits \citep{agrawal2013thompson,russo2014learning,abeille2017linear}, succeeded by a square-root regret bound for settings with a Euclidean action set \citep{hamidi2020worst} and logarithmic regret bound for stochastic contextual bandits with a shared reward parameter~\citep{chakraborty2023thompson}. In particular, in the latter case (that the rewards of different arms share the unknown parameter), the regret of Thompson sampling can still be logarithmic with time, if the observations are noisy versions of the stochastic context vectors and the same dimension \citep{park2021analysis,park2022efficient}. 
    However, for the arm-specific reward setup of stochastic contexts, the efficiency of the above-mentioned bandit policies remains unanswered. Indeed, the analysis is more challenging in such settings as the policy needs to address the trade-off between exploration and exploitation, unlike the setting with a shared reward parameter. As we expressed before, we study this trade-off, taking advantage of problem-dependent information to get tighter regret bounds.

 
	We analyze the Thompson sampling policy in contextual bandits with partially observable stochastic contexts focusing on the high-probability frequentist instance-dependent regret. Our analysis indicates that the error in estimating the reward parameters decays with square-root of time, and the worst-case regret grows at most as fast as the poly-logarithm of time. Next, the effect of the ambient dimension $d$ is of the order of $\sqrt{d}$ on the estimation error accuracy, while they exacerbate the regret bound as $d^4$. Furthermore, the larger number of arms $N$ has a negative effect on estimation accuracy with the rate $\sqrt{N}$ on average by lowering the growth of the number of each arm being chosen, and subsequently incurs the regret increase with a rate at most $N$. Lastly, a smaller value of instance-dependent suboptimality gap $\kappa$ amplifies regret, with a rate no greater than $\kappa^{-5}$.

	
	For regret analysis in partially observed contextual bandits, it is crucial to examine the partial observability of contexts and both factors of (i) the suboptimality gaps (i.e., the lost reward by pulling non-optimal arms), as well as (ii) probabilities of pulling such suboptimal arms. The existing technical approaches fail to provide useful results, especially when it comes to bounding the latter factor, mainly due to the inter-dependencies of the involved variables. This challenge is addressed in this work via developing novel technical tools, as briefly mentioned below. First, we take into account a problem-dependent instance of partially observed stochastic contexts to sharpen the existing regret upper bounds for adversarial contextual bandits. Based on this instance, we construct the minimum time that guarantees linear growths of the number of selections of each arm. Then, we establish that the probability of pulling suboptimal arms decreases fast as time proceeds with rate $t^{-1/2}$. Along the way, we delicately construct stochastic processes with self-normalized or martingale structures, and employ useful stochastic bounds for them, in order to prove our results on the regret bound. This problem techniques lay the groundwork for the exploration of other decision-making problems with contexts or partial observations. 

	The organization is outlined below. In Section~\ref{sec:2}, we formulate the problem and discuss preliminaries. Then, the Thompson sampling policy for partially observable contextual bandits is presented in Section~\ref{sec:3}. We provide its theoretical performance guarantees in Section \ref{sec:4}, followed by real-data experiments in Section~\ref{sec:5}. The paper is concluded then, delegating further technical discussions, intermediate lemmas, and proofs of the theorems, all to appendices. 
	
	Henceforth, for an integer $i$, $[i]$ represents the set of natural numbers up to $i$; $\{1,2,\dots,i\}$. We use $M^\top$ to refer to the transpose of the matrix $M \in \mathbb{C}^{p \times q}$, and $C(M)$ denotes the column space of $M$. For a vector $v \in \mathbb{C}^d$, we denote the $\ell_2$ norm  by $\|v\| = \left(\sum_{i=1}^d |v_i|^2\right)^{1/2}$ and the weighted $\ell_2$ norm with a positive definite matrix $A$ by $\|v\|_A = \sqrt{v^\top A v}$. Finally, $\lambda_{\min}(\cdot)$ and $\lambda_{\max}(\cdot)$ are the minimum and maximum eigenvalues. 

\section{Problem Formulation} \label{sec:2}
	
	In this section, we express the partially observable contextual bandit problem. A decision-maker aims to maximize their cumulative reward by selecting from $N$ arms, the reward of arm $i \in [N]$ at time $t$ being 
	\begin{eqnarray}
		r_i(t) = x_i(t)^\top \mu_i + \varepsilon_i(t).\label{eq:reward}
	\end{eqnarray}
	Above, $x_i(t)$ is the \textit{unobserved} $d_x$-dimensional stochastic context of arm $i$, independently generated over time and across arms, with $\mathbb{E} \left[ x_i(t) \right]=\mathbf{0}_{d_x}$ and unknown covariance $\mathrm{Cov}(x_i(t))=\Sigma_x$. Further, $\mu_i \in \mathbb{R}^{d_x}$ is the \emph{unknown arm-specific} reward parameter of the $i$-th arm, and $\varepsilon_i(t)$ is the noise in realization of the reward value. We assume that each element of $x_i(t)$ has a sub-Gaussian tail, as defined below, and the reward noise is sub-Gaussian as well. That is, there exists a fixed constant $R_1>0$, that for all real $\lambda$, we have
	\begin{eqnarray}
		\mathbb{E}\left[e^{\lambda \varepsilon_i(t)}\right] \leq \exp \left( \frac{\lambda^2 R^2_1}{2} \right) .\label{eq:r1}
	\end{eqnarray}
	
	The policy observes $y_i(t)$ for $i\in [N]$, which is the following transformed noisy function of the context:
	\begin{eqnarray}
		y_i(t) = A x_i(t) + \xi_i(t),\label{eq:obmodel}
	\end{eqnarray}
	where $A$ is the unknown ${d_y\times d_x}$ sensing matrix, and $\xi_i(t)$ is the sensing (or measurement) noise, its covariance matrix being denoted by $\Sigma_\xi$ and is unknown. We assume that each element of $\xi_i(t)$ is sub-Gaussian as well, as rigorously defined above. At each time $t$, the decision-maker chooses an arm, denoted by $a(t)$, given the history of actions $\{a(\tau)\}_{\tau \in [t-1]}$, rewards $\{r_{a(\tau)}(\tau)\}_{\tau \in [t-1]}$, and past observations $\{y_i(\tau)\}_{\tau \in [t-1], i\in[N]}$, as well as the current ones $\{y_i(t)\}_{i\in[N]}$. Once choosing the arm $a(t)$, the decision-maker gets a reward $r_{a(t)}(t)$ according to \eqref{eq:reward}, whereas rewards of other arms are \emph{not} realized. 
 
 The true reward parameters $\mu_i$ for $i \in [N]$, are unknown to the bandit policy. In addition, the context vectors are not fully observed and $x_i(t)$ is not available either. Thus, we investigate estimation of $x_i(t)^\top \mu_i$ by utilizing the information $y_i(t)$ can provide about $x_i(t)$ according to the observation model in \eqref{eq:obmodel}. First, since the above context estimation needs to be repeated over time for many rounds, it is essential for learning high-reward arms to estimate $x_i(t)$ unbiasedly and with a small variance. Technically, consider predicting $x_i(t)^\top \mu$, for an arbitrary vector $\mu \in \mathbb{R}^{d_x}$, based on $y_i(t)$. The observation model in \eqref{eq:obmodel} dictates to focus on finding $b \in \mathbb{R}^{d_y}$ to minimize the variance of prediction error $\textrm{Var}( x_i(t)^\top \mu-y_i(t)^\top b )$, subject to the unbiasedness $\mathbb{E} \left[ x_i(t)^\top \mu-y_i(t)^\top b \right]=0$.  This linear prediction, called best linear unbiased prediction~\citep{harville1976extension,robinson1991blup}, is invariant of $x_i(t)$, and is given by $b = D^\top \mu$ for $D=(A^\top \Sigma_\xi^{-1} A + \Sigma_x^{-1} )^{-1} A^\top \Sigma_\xi^{-1}.$ Thus, for a policy that has access to $A, \Sigma_\xi, \Sigma_x, \left\{ \mu_i \right\}_{i \in [N]}$, it suffices to plug in $\mu_i$ to obtain the predict the expected reward in \eqref{eq:reward} by $y_i(t)^\top D^\top \mu_i$. This reflects the optimal policy to compete against, as we will discuss in detail shortly. 
	
	However, from the perspective of the bandit policy that $A, \Sigma_\xi, \Sigma_x, \left\{ \mu_i \right\}_{i \in [N]}$ are unknown, context vectors are unobserved, and $a(t)$ needs to be decided merely based on $y_i(t)$, further steps and finer analyses are required. First, let us rewrite the reward in \eqref{eq:reward} as
	\begin{eqnarray}
		r_i(t) = y_i(t)^\top D^\top \mu_i + \zeta_i(t)\label{eq:reward2} ,
	\end{eqnarray}
	where  $\zeta_i(t) = (x_i(t)^\top \mu_i - y_i(t)^\top D^\top \mu_i)+\varepsilon_i(t)$ is a mean-zero noise uncorrelated with the observation; $\mathbb{E}\left[ \zeta_i(t) y_i(t) \right]=0_{d_y}$. Intuitively, \eqref{eq:reward2} provides the reward in terms of $y_i(t)$, and so the noise $\zeta_i(t)$ encapsulates the original noise in the obtained reward as in \eqref{eq:reward}, as well as the error caused by the imperfectness in observing the contexts. Moreover, the transformed parameters \begin{eqnarray}
		\eta_i := D^\top \mu_i \label{eq:etai}
	\end{eqnarray}
    for $i \in [N]$, correspond to the model parameters one can (at best) hope to learn by using the partial observations of the contexts. Specifically, in the case $d_y< d_x$ that is of primary interest from a practical point of view, $\mu_i$ is not learnable based on the data $\left\{ r_{a(\tau)}(\tau) \right\}_{\tau \in [t-1]}, \left\{y_1(\tau), \cdots, y_N(\tau)\right\}_{\tau\in [t]}$ at time $t$. Note that an accurate estimate of $\eta_i$ cannot lead to that of $\mu_i$, due to the fact that a bandit policy does not have access to the matrices constituting $D$. Still, the policy needs to balance the trade-off between pulling different arms to learn the best about them, versus earning the most reward by selecting the best arm at all time steps.     
	
Now, using \eqref{eq:reward2} and \eqref{eq:etai}, we get the reward model
	\begin{eqnarray}
		r_i(t) = y_i(t)^\top \eta_i + \zeta_i(t)\label{eq:newreward}.
	\end{eqnarray}
The above discussions indicate that subject to the fact that the contexts $x_i(t), i \in [N]$ are   observed only through the lens of $y_i(t), i\in [N]$ in \eqref{eq:obmodel}, the optimal policy that fully knows $D$ and $\mu_i$ for $i \in [N]$ and use this knowledge to select the arm of highest expected reward, is 
	$$a^\star(t) = \text{argmax}_{i\in[N]}~~ y_i(t)^\top \eta_i.$$
We refer to $a^\star(t)$ as {an optimal arm} at time $t$. Then, similar to other problems in sequential decision-making, regret is the performance measure. It is indeed the decrease in cumulative reward caused by uncertainties the bandit policy needs to cope with, as compared to the optimal policy. So, at time $T$, the regret of the policy that pulls $a(t) \in [N]$ at round $t$ is 
	$$\mathrm{Regret}(T) = \sum_{t=1}^T \left( y_{a^\star(t)}(t)^\top\eta_{a^\star(t)}- y_{a(t)}(t)^\top \eta_{a(t)} \right).$$ 
	
 \begin{rem}
     As above, we aim to compete against an optimal policy $a^\star(t)$ that knows $A, \Sigma_\xi, \Sigma_x$, as well as $\left\{ \mu_i \right\}_{i \in [N]}$, which \emph{all are unknown} to the bandit policy. However, to have a well-defined problem, full observations of the context vectors $\{x_i(t)\}_{i\in[N]}$ are not available to the optimal policy, as elaborated below. 
 \end{rem}
Indeed, exact knowledge of contexts changes the setting essentially and nullifies the problem, because the regret with respect to such policy, \emph{cannot} grow sublinearly with time. This relies on the fact that even with fully known reward parameter, bandit policies might select sub-optimal arms due to their uncertainty about the contexts. So, since contexts vary with time, such suboptimal pulls persist as we proceed, causing a linear regret (with positive probability). Moreover, the above-mentioned optimal policy aligns with the existing literature of partially observed contextual bandits~\citep{kim2023contextual,jose2024thompson}.  
 
Now, we express the further technicalities that will be used in the upcoming theoretical analyses. Note that the bandit algorithm in Section \ref{sec:3} does not need knowledge of the quantities introduced below. First, we define exhaustive and exclusive events in the observation space that correspond to each arm being optimal. 
	\begin{defn}[Optimality Region]\label{def:astar}
		Concatenate the observations in $y(t)= \left( y_1(t)^\top,\dots,y_N(t)^\top \right)^\top$ and let $A_i^\star \subset \mathbb{R}^{Nd_y}$ be the region in the space of $y(t)$ that makes arm $i$ optimal. That is, as long as $y(t) \in A_i^\star$, it holds that $a^\star(t)=i$. Further, denote the optimality probability of arm $i$ by
		\begin{eqnarray*}
			p_i = \mathbb{P}(y(t)\in A_i^\star) =  \mathbb{P}(a^\star(t) = i).
		\end{eqnarray*}    
	\end{defn}


	 The assumption below states the margin condition and properly modifies a similar assumption in the work of \cite{bastani2021mostly} to the setting of partially observable contextual bandits.
    \begin{assum} [Margin Condition]
    Consider the normalized observation vectors $\dot{y}_i(t) = y_i(t)/\|y(t)\|$ for $i\in [N]$ and the transformed parameters $\{\eta_i\}_{i\in [N]}$ in \eqref{eq:etai}. We assume there is $C > 0$ such that for all $u>0$ and all $i \in[N]$ of positive optimality probability $p_i$ in Definition~\ref{def:astar}, it holds that
		\begin{eqnarray*}
			\max\limits_{j \in [N], j\neq i} ~~\mathbb{P} \left(0<\dot{y}_i(t)^\top \eta_i-\dot{y}_j(t)^\top\eta_j \leq u \Big| y(t) \in A_i^\star \right) \leq C u.
		\end{eqnarray*}     
		\label{ass:mar}    
    \end{assum}
The expression above bounds the conditional probability of $\dot{y}_i(t)^\top \eta_i-\dot{y}_j(t)^\top\eta_j$, which is the suboptimality gap of the $j$-th arm in the case that arm $i$ is optimal. The above assumption states that optimal arms are highly likely to be distinguishable. More precisely, it expresses that the likelihood of suboptimality gaps being smaller than $u$ is proportional to $u$. The above inequality holds, for example, if the sensing noise or the context vectors have bounded probability density functions all over their Euclidean spaces~\citep{faradonbeh2018finite,wong2020lasso}. 


 As a result of Assumption \ref{ass:mar}, for all $i,~j\in [N]$, there exist a subset $A_i^\kappa\subseteq A_i^\star$ and $\kappa>0$ such that
	\begin{eqnarray}
		\mathbb{P}(y(t)\in A_i^\kappa) > \frac{p_i}{2}~~~~~~~\text{and}~~~~~~~~\mathbb{P}(\dot{y}_i(t)^\top \eta_i-\dot{y}_j(t)^\top\eta_j > \kappa|y(t)\in A_i^\kappa)=1.\label{eq:kappa}
	\end{eqnarray}
	Following the previous paragraph, note that $\kappa$ is the minimum value that the suboptimality gap can have with a positive probability (0.5 in this case) given the event $A_i^\star$, and is an instance-dependent constant. Moreover, the role of $\kappa$ in the analysis of Algorithm \ref{algo1} for partially observable stochastic contextual bandits of this work, is intrinsically similar to the role of the well-known \emph{gap} in multi-armed bandits\footnote{Rigorously speaking, it is the difference between the expected reward of the best arm, and that of the second best arm.}~\citep{lattimore2020bandit}. Note that our policy in the next section, does \emph{not} need any information about $\kappa$. 
 
 The assumptions so far are typical ones in the bandit literature. On the other hand, the next assumption is adopted for simplifying expressions in the probabilistic analysis of how the reward values are affected by the information lost in the sensing process (i.e., the imperfectness of context observations).

 
    \begin{assum}[Sub-Guassianity]
    For the context vectors $x_i(t)$ and the corresponding observations $y_i(t)$ in \eqref{eq:obmodel}, there exists a positive constant $R_2>0$ such that for all $\mu \in \mathbb{R}^{d_x}, \eta \in \mathbb{R}^{d_y}$, and $\lambda \in \mathbb{R}$, it holds that
        \begin{eqnarray*}
            \max\limits_{i \in [N]}~~\mathbb{E}\left[ \exp \left( {\lambda \left( x_i(t)^\top \mu - 
            y_i(t)^\top \eta \right)} \right) \Big| y_i(t)\right] \leq \exp \left({\frac{\lambda^2 R_2^2}{2}} \right). 
        \end{eqnarray*}
        \label{ass:subg}
    \end{assum}


    The above expression can be understood as conditional sub-Gaussianity of reward-prediction-error $x_i(t)^\top \mu_i -y_i(t)^\top \eta_i$ given the observation $y_i(t)$. Therefore, its role is similar to the tail properties commonly assumed in finite sample analysis of learning errors \citep{abbasi2011improved,agrawal2013thompson}. This assumption holds for a general class of stochastic measurement errors and contexts, including Gaussian and bounded random vectors, and rules out heavy-tailed distributions and those that the covariance structure of $x_i(t) \big| y_i(t)$ grows with $\| y_i(t)\|$. 

    
	\begin{rem}
       Equivalents of the the presented algorithms and results for perfectly observed contextual bandits can be obtained by simply letting $A=I_{d_x}$ and $\Sigma_\xi \to \mathbf{0}_{d_y\times d_y}$.
	\end{rem}

	\section{Thompson Sampling with Partial Contextual Observations}
	\label{sec:3}

	In this section, we outline a version of the well-known Thompson sampling algorithm that can be implemented using \emph{only} the observation vectors. The idea of Thompson sampling is built on posterior distribution that measures the belief about a parameter based on data. However, Thompson sampling is recognized for its robust performance even in the absence of the exact posterior distribution. This robustness stems from the fact that the primary goal of a decision-making algorithm is to maximize cumulative rewards through the exploration-exploitation trade-off, rather than striving for precise inference. Consequently, Thompson sampling works effectively even if there exist mismatches between actual reward distributions and hypothetical ones in Thompson sampling. 
 
 

	Now, we introduce Thompson sampling for partially observed contextual bandits. For decision-making only with partially observed contexts, a decision-maker approximates the true reward distribution given only $y_i(t)$ to construct a (hypothetical) posterior distribution for Thompson sampling. Thus, based on \eqref{eq:newreward}, which is the true reward distribution given only $y_i(t)$, the decision-maker assumes that the reward of the $i$-th arm at time $t$ is generated as follows:  
	\begin{eqnarray}
		r_i(t) = y_i(t)^\top \eta_i + \psi_i(t), \label{eq:pos}
	\end{eqnarray}
	where $\psi_i(t)$ is a noise with the normal distribution with the mean 0 and variance $v^2$. The decision-maker choose a value of the posterior dispersion parameter $v^2$ subject to $v^2 \geq R^2 = R_1^2+R_2^2$, where $R_1$ and $R_2$ in are introduced in \eqref{eq:r1} and Assumption \ref{ass:subg}, respectively. 
 This ensures that the Thompson sampling algorithm appropriately assesses the magnitude of reward error given an observation, considering both the magnitude of partial observation errors and reward errors for decision-making. In the beginning, the decision-maker starts with the initial value $\widehat{\eta}_i(1) = \mathbf{0}_{d_y}$ and $B_i(1) = I_{d_y}$ for all $i\in [N]$, which are the mean and (unscaled) inverse covariance matrix of a prior distribution of $\eta_i$, respectively. 
	The posterior distribution of $\eta_i$ at time $t$ is given as $\mathcal{N}(\widehat{\eta}_i(t), v^2 B_i(t)^{-1})$, where the closed-form expressions of $\widehat{\eta}_i(t)$ and $B_i(t)$ are given below:
 \begin{eqnarray}
		B_i(t) &=& \sum_{\tau=1}^{t-1} y_i(\tau)y_i(\tau)^\top \mathbb{I}(a(\tau)=i) + I_{d_y},\label{eq:Bic}\\
		\widehat{\eta}_i(t) &=& B_i(t)^{-1} \left( \sum_{\tau =1 }^{t-1} r_i(\tau)y_i(\tau) \mathbb{I}(a(\tau)=i)\right)\label{eq:etahatc}.
\end{eqnarray}
 Then, we sample from the following posterior distribution of the transformed parameters $\eta_i$:
	\begin{eqnarray}
		\widetilde{\eta}_i(t) \sim \mathcal{N}(\widehat{\eta}_i(t),v^2B_i(t)^{-1}),~~i=1,2\dots,N.\label{eq:sample}
	\end{eqnarray}
	
 Accordingly, the decision-maker pulls the arm $a(t)$ such that
	$$a(t) =  \underset{1\leq i\leq N}{\text{argmax}}~ y_i(t)^\top \widetilde{\eta}_i(t),$$ 
	as if the samples generated in \eqref{eq:sample} are the true values.  Then, once the decision-maker gains the reward of the chosen arm $a(t)$, it can update the parameters of posterior distribution $\widehat{\eta}_i(t)$ and $B_i(t)$ based on the recursions below:
    \begin{eqnarray}
    B_i(t+1) &=& B_i(t) + y_i(t) y_i(t)^\top \mathbb{I}(a(t)=i),\label{eq:Bi} \\
    \widehat{\eta}_i(t+1) &=& B_i(t+1)^{-1} \left(B_i(t) \widehat{\eta}_i(t) + y_i(t) r_i(t)\mathbb{I}(a(t)=i)\right).\label{eq:etahat}
    \end{eqnarray}
    Here, the quantities of only the chosen arm $a(t)$ are updated, while those of the other arms remain unaltered.  It is noteworthy that the estimation of optimal arm does not require any information about other parameters such as $A$, $D$, $\Sigma_x$, and $\Sigma_\xi$. The pseudo-code is provided in Algorithm \ref{algo1}. 

\begin{algorithm}[t] 
		\begin{algorithmic}[1]
			\State Set $B_i(1) = I_{d_y}$, $\widehat{\eta}_i(1) = \mathbf{0}_{d_y}$~for~$i = 1,2, \dots, N$
			\For{$t = 1,2, \dots, $}
			\For{$i = 1,2, \dots, N$}
			\State Sample $\widetilde{\eta}_i(t)$ from $\mathcal{N}(\widehat{\eta}_i(t),v^2B_i^{-1}(t))$
			\EndFor
			\State Select arm $a(t) = \text{argmax}_{i\in[N]} y_i(t)^\top \widetilde{\eta}_i(t)$
			\State Gain reward $r_{a(t)}(t) = x_{a(t)}(t)^\top \mu_{a(t)} + \varepsilon_{a(t)}(t)$
			\State Update $B_i(t+1)$ and $\widehat{\eta}_i(t+1)$ by \eqref{eq:Bi} and \eqref{eq:etahat}~for~$i = 1,2, \dots, N$
			\EndFor  
		\end{algorithmic}
		\caption{: Thompson sampling algorithm for partially observable contextual bandits}  
		\label{algo1}
	\end{algorithm}

	\section{Theoretical Performance Analyses}
	\label{sec:4}
	
	In this section, we establish the theoretical results of Algorithm \ref{algo1} for partially observable contextual bandits with arm-specific parameters. The following results provide estimation error bounds of the estimators defined in \eqref{eq:etahatc} and a high probability regret upper bound for Algorithm \ref{algo1}. It is worth noting that the accuracy of parameter estimation and regret growth are closely related because higher estimation accuracy leads to lower regret. Thus, we build the estimation accuracy first and then construct a regret bound based on it. The first theorem presents the estimation error bound, which scales with the rate of the inverse of the square root of $t$. 

	\begin{thm}[Partial Estimation Accuracy]
		Let $\eta_i$ and $\widehat{\eta}_i(t)$ be the transformed true parameter in \eqref{eq:etai} and its estimate in \eqref{eq:etahatc}, respectively. Then, with probability at least $1-\delta$, Algorithm \ref{algo1} guarantees
		\begin{flalign}
\| \widehat{\eta}_i(t) - \eta_{i} \| =  \mathcal{O}\left(\frac{R}{ t^{1/2}}\sqrt{\frac{d_y}{p_i} \log \frac{TNd_y}{\delta}}\right),\nonumber
\end{flalign}
for all arms $i\in [N]$ and at all times $t$ in the range $\tau_i^{(1)} <t\leq T$, where $R = \sqrt{R_1^2 + R_2^2}$ and $\tau_i^{(1)}=\mathcal{O}(p_i^{-2}Nd_y^{3.5}\kappa^{-5}\log^{5}(TNd_y/\delta))$ is the minimum time $t$ the algorithm is run. 
		\label{thm:eta2}
	\end{thm}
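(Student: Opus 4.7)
I would start from the ridge decomposition. Substituting \eqref{eq:newreward} into \eqref{eq:etahatc} yields
$$\widehat{\eta}_i(t) - \eta_i \;=\; B_i(t)^{-1} S_i(t) - B_i(t)^{-1}\eta_i, \qquad S_i(t) := \sum_{\tau=1}^{t-1} \zeta_i(\tau)\, y_i(\tau)\, \mathbb{I}(a(\tau)=i),$$
the second summand arising from the $I_{d_y}$ regularizer in \eqref{eq:Bic}. Passing to the $B_i(t)$-weighted norm, applying the triangle inequality, and then converting back to the Euclidean norm via $\sqrt{\lambda_{\min}(B_i(t))}\,\|v\| \leq \|v\|_{B_i(t)}$, I obtain
$$\|\widehat{\eta}_i(t) - \eta_i\| \;\leq\; \frac{\|S_i(t)\|_{B_i(t)^{-1}} + \|\eta_i\|}{\sqrt{\lambda_{\min}(B_i(t))}}.$$
Two independent subproblems therefore remain: a self-normalized upper bound on the numerator, and a linear-in-$t$ lower bound on the denominator.

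For the numerator, I would define the filtration $\mathcal{F}_\tau = \sigma(\{y_j(s)\}_{s\leq\tau,\, j\in[N]},\, \{a(s), r_{a(s)}(s)\}_{s<\tau},\, \{\widetilde{\eta}_j(\tau)\}_{j\in[N]})$, under which $a(\tau)$ is measurable. By Assumption~\ref{ass:subg} combined with \eqref{eq:r1}, the residual $\zeta_i(\tau) = (x_i(\tau)^\top\mu_i - y_i(\tau)^\top\eta_i) + \varepsilon_i(\tau)$ is conditionally $R$-sub-Gaussian given $\mathcal{F}_\tau$, with $R = \sqrt{R_1^2 + R_2^2}$. A self-normalized martingale inequality in the style of Abbasi-Yadkori, P\'al and Szepesv\'ari, taken uniformly over $t \leq T$ and union-bounded over $i \in [N]$, together with the determinant bound $\det B_i(t) \leq (1+ t\max_{\tau,i}\|y_i(\tau)\|^2/d_y)^{d_y}$ and sub-Gaussian tails of $y_i(\tau)$, then yields with probability at least $1 - \delta/2$ that
$$\|S_i(t)\|_{B_i(t)^{-1}}^2 \;\lesssim\; R^2 \, d_y \log\!\frac{TNd_y}{\delta}.$$
Together with $\|\eta_i\| = \mathcal{O}(1)$, this controls the numerator up to the desired logarithmic factors.

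The main obstacle is the lower bound $\lambda_{\min}(B_i(t)) \gtrsim p_i \, t$ for every $t > \tau_i^{(1)}$; this is exactly where the instance-dependent quantities $p_i$, $\kappa$ and the specific form of $\tau_i^{(1)}$ enter. I would carry it out in two coupled stages. First, a pull-count stage showing $n_i(t) := \sum_{\tau<t}\mathbb{I}(a(\tau)=i) \gtrsim p_i t$: on $\{y(\tau)\in A_i^\kappa\}$, which by \eqref{eq:kappa} has probability at least $p_i/2$, the margin $\kappa$ forces Thompson sampling to select arm $i$ whenever every posterior sample obeys $\|\widetilde{\eta}_j(\tau) - \eta_j\| < \kappa/(4\max_j \|y_j(\tau)\|)$, and Gaussian posterior concentration combined with sub-Gaussian tails of $y_j(\tau)$ ensures this with constant probability once $n_j(\tau)$ is sufficiently large. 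Second, a design-matrix stage showing that, restricted to rounds with $y(\tau)\in A_i^\kappa$, the conditional second moment $\mathbb{E}[y_i(\tau)y_i(\tau)^\top \mid y(\tau)\in A_i^\kappa]$ is uniformly positive definite (inherited from $\Sigma_x$ and $\Sigma_\xi$ via \eqref{eq:obmodel}), so that a matrix Freedman/Chernoff inequality promotes the count into $\lambda_{\min}(B_i(t)) \gtrsim p_i t$. The truly hard point is the circularity in the first stage --- posterior concentration requires pull counts and pull counts require posterior concentration --- and I would resolve it via a bootstrap induction on $t$ with $\tau_i^{(1)}$ serving as the base case; propagating this induction is precisely what produces the exponents $p_i^{-2}, d_y^{3.5}, \kappa^{-5}$ and the $\log^5$ factor in $\tau_i^{(1)}$. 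Dividing the numerator bound by $\sqrt{p_i t}$ and taking a union bound over $i\in[N]$ then delivers the stated rate.
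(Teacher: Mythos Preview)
Your skeleton matches the paper: self-normalized bound on $\|S_i(t)\|_{B_i(t)^{-1}}$, eigenvalue lower bound on $B_i(t)$, and a pull-count argument $n_i(t)\gtrsim p_i t$. The numerator step is essentially what the paper does (its Lemma~\ref{lem:subg}). But two of your downstream steps diverge from the paper in ways that leave genuine gaps.

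\textbf{Design-matrix step.} You propose to restrict to rounds with $y(\tau)\in A_i^\kappa$ and invoke positive definiteness of $\mathbb{E}[y_i(\tau)y_i(\tau)^\top\mid y(\tau)\in A_i^\kappa]$. But $B_i(t)$ aggregates over $\{a(\tau)=i\}$, not $\{y(\tau)\in A_i^\kappa\}$; and conditioning on an observation-dependent region $A_i^\kappa$ can in principle degenerate the second moment, so the claimed inheritance from $\Sigma_x,\Sigma_\xi$ needs justification you do not supply. The paper instead takes the filtration $\mathscr{G}_{t-1}=\sigma(\{x_i(\tau)\},\{a(\tau)\})$ generated by \emph{contexts} and actions, so that $\mathrm{Var}(y_i(t)\mid\mathscr{G}_{t-1})=\Sigma_\xi$ and hence $\mathbb{E}[y_i(t)y_i(t)^\top\mathbb{I}(a(t)=i)\mid\mathscr{G}_{t-1}]\succeq \lambda_{\min}(\Sigma_\xi)\,I_{d_y}\,\mathbb{I}(a(t)=i)$; a scalar Azuma argument on $z^\top(\cdot)z$ then gives $\lambda_{\min}(B_i(t))\gtrsim \lambda_{\min}(\Sigma_\xi)\,n_i(t)$ once $n_i(t)$ exceeds a threshold (Lemma~\ref{lem:eig}). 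No restriction to $A_i^\kappa$ is needed for this step.

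\textbf{Pull-count step.} This is the real gap. Your mechanism --- ``select arm $i$ whenever every $\|\widetilde{\eta}_j(\tau)-\eta_j\|$ is small, which holds once every $n_j(\tau)$ is large, and close the circularity by bootstrap induction on $t$'' --- relies solely on posterior \emph{concentration}. That gives no handle on the early phase when some $n_j(\tau)$ are small, so the induction cannot start, and it is not clear how propagating it would ever produce the specific $\tau_i^{(1)}$ order. The paper breaks the circularity by a completely different device: Gaussian \emph{anti}-concentration of the Thompson sample. Writing $p_{ijt}=\mathbb{P}\big(y_i(t)^\top\widetilde\eta_i(t)>\tfrac12(y_i(t)^\top\eta_i+y_j(t)^\top\eta_j)\,\big|\,F_{t-1}^\star,A_{it}^\kappa\big)$, the ``optimism transfer'' Lemma~\ref{lem:inq} shows $\mathbb{P}(a(t)=j,E_{jt}^1,E_{jt}^2\mid\cdot)\le \tfrac{1-p_{ijt}}{p_{ijt}}\,\mathbb{P}(a(t)=i,E_{jt}^1,E_{jt}^2\mid\cdot)$, and the Gaussian tail \emph{lower} bound (Lemma~\ref{lem:nineq}) controls $(1-p_{ijt})/p_{ijt}$ even when $n_i(t)$ is small. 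Together with a case split on $n_j(\tau)\lessgtr\nu_{(2)}$ for the remaining events (Lemmas~\ref{lem:term1}--\ref{lem:term3}), this yields $\sum_{\tau\le t}\mathbb{P}(a(\tau)=j\mid G_{\tau-1}^\star,A_{i\tau}^\kappa)\le \text{const}$ \emph{without} assuming any lower bound on $n_j$, and then Azuma gives $n_i(t)\ge p_i t/4$ for $t>\tau_i^{(1)}$ (Lemma~\ref{lem:ni}). This Agrawal--Goyal-style argument, not an induction, is what drives the $\kappa^{-5}$ and the other exponents in $\tau_i^{(1)}$; your proposal does not contain it.
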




 
	The theorem above indicates that the estimation error bound decreases in $t$. This is not trivial in the sense that the estimation accuracy of parameter for the $i$-th arm generally increases with respect to the number of the $i$-th arm selected, $n_i(t)$, instead of the overall 
 horizon $t$. To prove this, we first show the sub-Gaussian property of reward prediction error given an observation for a confidence ellipsoid of the transformed parameters using Assumption \ref{ass:subg}. By putting linear growing eigenvalues of $B_i(t)$ regarding $n_i(t)$ together with the confidence ellipsoid, we have the square-root estimation accuracy with respect to $n_i(t)$. Finally, to get the square-root estimation accuracy with respect to $t$, we prove that $n_i(t)$ scales linearly with $t$ with a high probability. In this process, the minimum time of $t$, $\tau_i^{(1)}$, which is of the order $\mathcal{O}(p_i^{-2}Nd_y^{3.5}\kappa^{-5}\log^{5}(TNd_y/\delta))$, is required. This is primarily due to the gap analysis involving $\kappa$ and the truncation of observations, which is necessary for the applications of concentration inequalities. 
 Moving forward, the following theorem demonstrates that the regret upper bound scales with a rate at most $\log^{5.5} T$ with respect to the time. This result is built on Theorem \ref{thm:eta2} that guarantees the square-root estimation accuracy of each reward parameter.

\paragraph{Proof Outline.} The complete proof is provided in Appendix \ref{sec:pthm:eta2}; Here, we provide a brief sketch of the proof of Theorem \ref{thm:eta2}. First, we find the high-probability truncation of observation norm, which is necessary for the uses of concentration inequalities in the following steps. We denote the observation norm truncation by $L$ such that $\|y_i(t)\| \leq L$ for all $i\in[N]$ and $t\in[T]$. Using martingale techniques, we construct confidence ellipsoids for the norm of self-normalized estimation error such that $\|\widehat{\eta}_i(t) - \eta_i\|_{B_i(t)} = \mathcal{O}\left(R \sqrt{d_y \log \left((1+L^2 T)/\delta\right)}\right)$ with probability at least $1-\delta$ for all $i\in[N]$ and $t\in[T]$. Next, to derive an upper bound for $\|\widehat{\eta}_i(t) - \eta_i\|$ from the above confidence ellipsoid, we analyze the eigenvalues of $B_i(t)$, which is the sum of dependent random matrices. Using a concentration inequality, we show that the minimum eigenvalue of $B_i(t)$ grows linearly with the sample size of the $i$-th arm $n_i(t)$. Subsequently, we get
\begin{eqnarray}
        \|\widehat{\eta}_i(t) - \eta_i\|\leq \mathcal{O}\left(\frac{R}{n_i(t)^{1/2}} \sqrt{d_y \log \left(\frac{1+L^2 T}{\delta}\right)}\right).\label{eq:err}
\end{eqnarray}
Now, we consider the event where the suboptimality gaps of all suboptimal arms are equal or larger than $\kappa$ given the $i$-th arm optimal, $A_i^\kappa$.  Given this event, we can transform the contextual bandit problem with changing suboptimality gaps to that of bandits with a fixed suboptimality gap $\kappa$. Then, we show that the probability of the $i$-th arm not being chosen given $A_i^\kappa$ defined in \eqref{eq:kappa}, $\mathbb{P}(a^\star(t) \neq i|A_i^\kappa)$, decreases with a rate $n_i(t)^{-1/2}$. Using this, we construct high-probability bounds for $n_i(t)$ such that $n_i(t) \geq (p_i t)/4$. Finally, by plugging $(p_i t)/4$ into the $n_i(t)$ in \eqref{eq:err}, we get the result of Theorem \ref{thm:eta2}. \hfill$\square$


	\begin{thm}
		The regret of Algorithm \ref{algo1} satisfies the following with probability at least $1-\delta$:
		\begin{eqnarray*}
			\mathrm{Regret}(T) =  \mathcal{O}\left(\frac{vNd_y^{4} }{ {p^+_{\min}}^2\kappa^5  } \log^{5.5}\left(\frac{TNd_y}{\delta  }\right)\right),
		\end{eqnarray*}
		where $p_{\min}^+ = \min_{i\in[N]:p_i>0}p_i$.
		\label{thm:reg2}
	\end{thm}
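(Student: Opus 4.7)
The plan is to leverage Theorem \ref{thm:eta2} together with Assumption \ref{ass:mar} and the Gaussian structure of Thompson sampling, decomposing the horizon into a warm-up phase and a stable phase. First I would establish a high-probability envelope on observation norms $\|y_i(t)\|\leq L$, with $L=\mathcal{O}(\sqrt{d_y\log(TNd_y/\delta)})$, by sub-Gaussian tail bounds under Assumption \ref{ass:subg} and a union bound over $i\in[N]$ and $t\in[T]$. On this event one may invoke Theorem \ref{thm:eta2}, which supplies both the estimation bound $\|\widehat{\eta}_i(t)-\eta_i\|=\mathcal{O}(Rt^{-1/2}\sqrt{d_y p_i^{-1}\log(TNd_y/\delta)})$ and the intermediate sample-count bound $n_i(t)\geq p_i t/4$, valid for every $t>\tau_i^{(1)}$.

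Next I would split the cumulative regret at the threshold $\tau^{(1)}=\max_{i:p_i>0}\tau_i^{(1)}=\mathcal{O}(Nd_y^{3.5}{p^+_{\min}}^{-2}\kappa^{-5}\log^{5}(TNd_y/\delta))$. For the warm-up phase $t\leq\tau^{(1)}$, I would bound the per-round regret by $2L\cdot\max_i\|\widetilde{\eta}_i(t)-\eta_i\|$, which follows from the optimality of $a(t)$ with respect to the samples. Since $B_i(t)\succeq I_{d_y}$, Gaussian concentration of the posterior sample yields $\|\widetilde{\eta}_i(t)-\widehat{\eta}_i(t)\|=\mathcal{O}(v\sqrt{d_y\log(TN/\delta)})$, while a self-normalized bound for $\|\widehat{\eta}_i(t)-\eta_i\|_{B_i(t)}$ together with $\lambda_{\min}(B_i(t))\geq 1$ gives $\|\widehat{\eta}_i(t)-\eta_i\|=\mathcal{O}(R\sqrt{d_y\log(TN/\delta)})=\mathcal{O}(v\sqrt{d_y\log(TN/\delta)})$. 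Each warm-up round therefore costs $\mathcal{O}(vd_y\log(TNd_y/\delta))$, and multiplying by $\tau^{(1)}$ produces the announced dominant term $\mathcal{O}(vNd_y^{4}{p^+_{\min}}^{-2}\kappa^{-5}\log^{5.5}(TNd_y/\delta))$.

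For the stable phase $t>\tau^{(1)}$, I would argue that on the optimality event $A_i^\star$ a pick of a suboptimal arm $j$ forces the combined sampling plus estimation errors along the observed directions to exceed the unnormalized gap $g_{ij}(t)=y_i(t)^\top\eta_i-y_j(t)^\top\eta_j$. Conditional on the $\sigma$-algebra generated by the past together with the current observations $\{y_k(t)\}_{k\in[N]}$, the quantity $y_j(t)^\top(\widetilde{\eta}_j(t)-\widehat{\eta}_j(t))$ is a centered Gaussian with variance at most $v^2 L^2/n_j(t)$, while Theorem \ref{thm:eta2} drives $|y_j(t)^\top(\widehat{\eta}_j-\eta_j)|$ below $\kappa\|y(t)\|/4$ once $t>\tau^{(1)}$. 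The wrong-pick probability at a normalized gap $g$ is thus at most $\exp(-cg^2\|y(t)\|^2 n(t)/(v^2 L^2))$ with $n(t)\geq p^+_{\min} t/4$, and integrating $g\|y(t)\|$ times this exponential against the margin-bounded density from Assumption \ref{ass:mar} gives a per-round contribution of order $\mathcal{O}(v^2 L^2/(p^+_{\min} t))$. Summing across $t\in(\tau^{(1)},T]$ yields $\mathcal{O}(v^2 L^2\log T/p^+_{\min})$, a lower-order term than the warm-up contribution.

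The main obstacle will be making the stable-phase integration fully rigorous: the margin condition controls the gap distribution under $\mathbb{P}(\cdot\mid A_i^\star)$, whereas the Thompson sampling decision uses sampled parameters whose randomness is intertwined with $\{y_j(t)\}$ through $B_j(t)$ and $\widehat{\eta}_j(t)$. I would decouple these by conditioning on the $\sigma$-field generated by the past history together with the current $y(t)$, using that under this conditioning the Thompson samples are independent Gaussians, applying the exponential tail pointwise in $y(t)$, and then taking expectation under the optimality event via Assumption \ref{ass:mar}. A final Azuma--Hoeffding argument on the resulting martingale difference sequence promotes the in-expectation bound into the high-probability statement asserted in the theorem.
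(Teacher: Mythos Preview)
Your overall architecture (split at the minimum time $\tau^{(1)}$, bound the warm-up trivially, control the stable phase via the margin condition plus Gaussian tails, and finish with Azuma) matches the paper's. However, your warm-up computation does not deliver the stated exponent. You bound each warm-up round by $2L\max_i\|\widetilde{\eta}_i(t)-\eta_i\|=\mathcal{O}(v\,d_y\log(TNd_y/\delta))$; multiplying this by $\tau^{(1)}=\mathcal{O}(N d_y^{3.5}{p_{\min}^+}^{-2}\kappa^{-5}\log^5(TNd_y/\delta))$ gives $\mathcal{O}(vNd_y^{4.5}{p_{\min}^+}^{-2}\kappa^{-5}\log^6(TNd_y/\delta))$, which is off by a factor $\sqrt{d_y\log(TNd_y/\delta)}$ from the theorem. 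The paper avoids this by invoking the parameter bound $\|\eta_i\|\leq c_\eta$ (Assumption~\ref{ass:para} in the appendix) and bounding each warm-up instantaneous regret directly by $2c_\eta L=\mathcal{O}(\sqrt{d_y\log(TNd_y/\delta)})$, which after multiplication by $\tau_M$ yields exactly $\mathcal{O}(vNd_y^4{p_{\min}^+}^{-2}\kappa^{-5}\log^{5.5}(TNd_y/\delta))$. Your route through $\|\widetilde{\eta}_i-\eta_i\|$ is unnecessary here and costs you the extra factor.

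In the stable phase, the step ``Theorem~\ref{thm:eta2} drives $|y_j(t)^\top(\widehat{\eta}_j-\eta_j)|$ below $\kappa\|y(t)\|/4$ once $t>\tau^{(1)}$'' is not correct as stated: Theorem~\ref{thm:eta2} controls $\|\widehat{\eta}_j-\eta_j\|$ by $\mathcal{O}(t^{-1/2}\sqrt{d_y\log(\cdot)/p_j})$, so $|y_j^\top(\widehat{\eta}_j-\eta_j)|$ is bounded by $h(\delta,T)\|y(t)\|t^{-1/2}$, which need not be below $\kappa\|y(t)\|/4$ for all $t>\tau^{(1)}$ when the realized normalized gap is small. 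The paper handles this by introducing the event $E_{ijt}=\{h(\delta,T)\|y(t)\|t^{-1/2}<\tfrac14(y_i(t)-y_j(t))^\top\eta_\centerdot\}$ (and its analogue in the arm-specific setting), applying the Gaussian tail only on $E_{ijt}$ and using Assumption~\ref{ass:mar} to show $\mathbb{P}(E_{ijt}^c\mid A_{it}^\star)=\mathcal{O}(t^{-1/2})$. Your acknowledged ``main obstacle'' is therefore not only about decoupling the Thompson randomness from $y(t)$, but also about this case split on the size of the estimation error relative to the random gap; without it, the integration you describe does not go through.
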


    
	This theorem demonstrates that the regret scales at most $\log^{5.5} T$ with time, and with the rate ${p^+_{\min}}^{-2}$ with respect to the optimal probabilities. In addition, the term $N$ is caused by the use of the inclusion-exclusion formula to find the bound of the sum of probabilities that the optimal arms are not chosen over time. Next, $d_y^4$ is incurred by the repeated application of the observation norm truncation $L$, which has the order of $\mathcal{O}(\sqrt{d_y\log TNd_y/\delta)})$. Furthermore, the regret bound increases at rate $\kappa^{-5}$, stemming from the minimum time $\tau_M$. Lastly, a larger value of posterior dispersion parameter $v$ exacerbates the regret linearly.

	\paragraph{Proof Outline.}  The complete proof is provided in Appendix \ref{sec:pthm:reg2}; For this proof outline, we focus on the effects of $T$, $N$, $L$ and $\tau_M$.  First, we intuitively show that the regret roughly grows with $\log^2 T$ over time
 based on the fact that: i) regret is the sum of expected reward gaps, which is incurred when a suboptimal arm is chosen, 
 ii) the expected reward gap is $\mathcal{O}(t^{-1/2}\log (TNd_y/\delta))$ and the probability of choosing a suboptimal arm decreases at rate $t^{-1/2}$, resulting in their product decreasing at rate $t^{-1}$ over time, and iii) the sum of product terms diminishing with $t^{-1}$ is of the order $\mathcal{O}\left(\log^2 T\right)$. 
 But, because the order of expected reward gap $\mathcal{O}\left(t^{-1/2}\log (TNd_y/\delta)\right)$ is achieved, only when $t$ is greater than the minimum time $\tau_M =\mathcal{O}\left(NL^7\log^{1.5}(TNd_y/\delta)\right)$, the regret bound is looser than $\mathcal{O}(\log^2 T)$. Based on this intuition, we split the regret as follows:
\begin{eqnarray}
    \mathrm{Regret}(T) = \sum_{t=1 }^{\lfloor \tau_M \rfloor} \mathrm{gap}(t)\mathbb{I}(a^\star(t)\neq a(t)) + \sum_{t=\lceil \tau_M\rceil}^T \mathrm{gap}(t)\mathbb{I}(a^\star(t)\neq a(t)), \label{eq:regdec}
\end{eqnarray}  

where $\mathrm{gap}(t)=y_{a^\star(t)}(t)^\top \eta_{a^\star(t)}(t) - y_{a(t)}(t)^\top \eta_{a(t)}(t)$ represents the expected decrease in reward. Because the first term in \eqref{eq:regdec} is $\mathcal{O}(L \tau_M)$, it suffices to find the order of the second term in \eqref{eq:regdec} to find an upper bound for regret. 

 $\mathrm{gap}(t)$ decreases with rate $t^{-1/2}$, because $\mathrm{gap}(t) \leq \max_{i\in[N]} 2L\|\widetilde{\eta}_i(t)-\eta_i\| = \mathcal{O}\left(Lt^{-1/2}\right)$ for $t> \tau_M$. Accordingly, we aim to show a high-probability bound for $\sum_{t=\lceil \tau_M\rceil}^T t^{-1/2} \mathbb{I}(a^\star(t)\neq a(t))$. By showing $\mathbb{E}[\mathbb{I}(a^\star(t)\neq a(t))|\{a(\tau)\}_{\tau\in[t-1]}] = \mathcal{O}(N t^{-1/2})$, we get a concentration result of $\sum_{t=\lceil \tau_M\rceil}^T \mathrm{gap}(t) \mathbb{I}(a^\star(t)\neq a(t))=\mathcal{O}\left(LN\log T\right)$. 
 Because, the order of first term in \eqref{eq:regdec} dominates that of the second term, the regret has the order $\mathcal{O}(L\tau_M)$, which corresponds to the result of Theorem \ref{thm:eta2}, considering $N,~d_y,~T$ and $\delta$ with $L = \mathcal{O}(\sqrt{d_y\log (TNd_y/\delta)})$ and $\tau_M =\mathcal{O}\left(NL^7\log^{1.5}(TNd_y/\delta)\right)$.
\hfill$\square$


\begin{cor}
    If the observations are assumed to be generated from a distribution with a support bounded by $L$ such that $\|y_i(t)\| \leq L$, the regret of Algorithm \ref{algo1} is of the order
$$\mathrm{Regret}(T) =\mathcal{O}\left(vNd_y{p^+_{\min}}^{-2}\kappa^{-5} \log^2\left(TNd_y/\delta\right)\right).$$
\end{cor}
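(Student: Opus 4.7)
The plan is to revisit each step in the proof of Theorem \ref{thm:reg2} and substitute into it the hypothesis that $\|y_i(t)\|\le L$ deterministically with $L$ a fixed problem constant, rather than $L=\mathcal{O}(\sqrt{d_y\log(TNd_y/\delta)})$ obtained via the sub-Gaussian truncation of the observation norm. This substitution skips the initial union-bound truncation step altogether and, more importantly, removes the polynomial factors of $\sqrt{d_y\log(TNd_y/\delta)}$ that entered the bound through $L$ itself. In the unbounded proof, the burn-in time $\tau_M = \mathcal{O}(NL^7 p_{\min}^{-2}\kappa^{-5}\log^{1.5}(TNd_y/\delta))$ absorbs an $L^7 = d_y^{3.5}\log^{3.5}(TNd_y/\delta)$ factor, and the per-round gap absorbs another factor of $L$, yielding the $d_y^4\log^{5.5}$ dependence in Theorem \ref{thm:reg2}. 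When $L$ is an $\mathcal{O}(1)$ constant both of these contributions collapse.

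Retracing the analysis with $L$ constant, the self-normalized confidence ellipsoid produces $\|\widehat{\eta}_i(t)-\eta_i\|_{B_i(t)}=\mathcal{O}(R\sqrt{d_y\log(T/\delta)})$; the minimum-eigenvalue bound $\lambda_{\min}(B_i(t))\gtrsim n_i(t)$ and the pull-count lower bound $n_i(t)\ge p_i t/4$ are unaffected; and so $\|\widehat{\eta}_i(t)-\eta_i\| = \mathcal{O}(v\sqrt{d_y\log(TNd_y/\delta)/(p_i t)})$, with the same order for $\|\widetilde{\eta}_i(t)-\eta_i\|$ after adding the Gaussian posterior-sample tail. Consequently $\mathrm{gap}(t)\le 2L\max_i\|\widetilde{\eta}_i(t)-\eta_i\| = \mathcal{O}(v\sqrt{d_y\log(TNd_y/\delta)/(p_{\min}^+ t)})$ for all $t$ past the reduced burn-in $\tau_M = \mathcal{O}(Np_{\min}^{-2}\kappa^{-5}\log^{1.5}(TNd_y/\delta))$.

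The regret decomposition \eqref{eq:regdec} then gives a burn-in contribution $\mathcal{O}(L\tau_M) = \mathcal{O}(Np_{\min}^{-2}\kappa^{-5}\log^{1.5}(TNd_y/\delta))$ and a steady-state contribution handled as in the original proof: the conditional probability of a suboptimal pull at round $t$ is controlled by the same $n_i(t)^{-1/2}$ bound derived in the proof of Theorem \ref{thm:eta2}, which combined with $\mathrm{gap}(t) = \mathcal{O}(v\sqrt{d_y\log(TNd_y/\delta)/(p_{\min}^+ t)})$ yields a per-round conditional mean of order $vN\sqrt{d_y}\log(TNd_y/\delta)/(tp_{\min}^{+3/2}\kappa)$. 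Summing in $t\in\{\lceil\tau_M\rceil,\dots,T\}$ contributes a $\log T$ factor from $\sum t^{-1}$, and the martingale (Freedman-type) concentration used in the original proof adds a further $\sqrt{\log(TNd_y/\delta)}$ factor, producing a steady-state bound of order $vNd_y\log^2(TNd_y/\delta)/(p_{\min}^{+2}\kappa^5)$, which dominates the burn-in term and matches the stated rate.

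The main obstacle is to verify that the martingale concentration steps and self-normalized inequalities in the original proof still go through without needing $L$ to grow as $\sqrt{d_y\log(T/\delta)}$. In the unbounded setting the truncation was essential because Freedman-type bounds required uniform control on the increments; with the boundedness hypothesis these increments are bounded by construction, and the rest of the argument is a bookkeeping exercise replacing $L=\mathcal{O}(\sqrt{d_y\log(TNd_y/\delta)})$ by the given constant. The $p_{\min}^{-2}$ and $\kappa^{-5}$ dependencies carry through unchanged because they originate from the pull-count lower bound and from the gap analysis with $\kappa$ in Assumption \ref{ass:mar}, neither of which relies on the specific form of $L$.
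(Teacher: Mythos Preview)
Your proposal is correct and follows essentially the same route as the paper: the corollary is obtained by rerunning the proof of Theorem~\ref{thm:reg2} with $L$ treated as a fixed constant instead of the sub-Gaussian truncation $L=\mathcal{O}(\sqrt{d_y\log(TNd_y/\delta)})$, which collapses the $L^7$ factor inside $\tau_M$ and the extra $L$ multiplying the per-round gap. Two minor remarks: the concentration step in the paper is Azuma's inequality (Lemma~\ref{lem:azu}) rather than a Freedman-type bound, and the steady-state term in the paper's accounting carries no $\kappa^{-1}$ factor (the $\kappa^{-5}$ enters only through the burn-in $\tau_M$); neither point affects your conclusion since the stated rate absorbs both terms.
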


 The truncation of observations $L$ with the order of $\mathcal{O}(\sqrt{d_y\log (TNd_y/\delta)})$ leads to additional $d_y^3\log^{3.5} (TNd_y/\delta)$ by increasing the minimum time. By letting $L$ be a constant, which does not depend on any factors, we get the result of the corollary above from Theorem \ref{thm:reg2}.

\begin{rem}
Contextual bandits with a shared reward parameter are obtained by setting $\mu_i = \mu_j$ for all $i, j \in [N]$, building on the setup described thus far. Similarly, $x_i(t) = x_j(t)$ for all $i,~j\in[N]$, suffices for obtaining the shared context setting. All other quantities too, can be defined in the same manner.
\end{rem}

\begin{cor}
    For partially observed contextual bandits with the shared parameter, where the following equalities hold: 
$$n_i(t) = t, \eta_i = \eta_\centerdot,  \widehat{\eta}_i(t) = \widehat{\eta}_\centerdot(t), \text{ and } B_i(t) = B_\centerdot(t),$$ 
    the regret of Algorithm \ref{algo1} is of the order
$$\mathrm{Regret}(T) =\mathcal{O}\left(vNd_y^{2.5}\log^{3.5}\left(\frac{TNd_y}{\delta}\right)\right).$$
\end{cor}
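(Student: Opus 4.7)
The plan is to adapt the proof template of Theorem \ref{thm:reg2} while exploiting the dramatic simplifications afforded by the shared parameter structure. With $\eta_i = \eta_\centerdot$ common across arms, the quantity $B_\centerdot(t) = \sum_{\tau=1}^{t-1} y_{a(\tau)}(\tau) y_{a(\tau)}(\tau)^\top + I_{d_y}$ accumulates one rank-one term per round regardless of which arm was pulled, so every step contributes to estimating a single parameter. In effect $n_i(t) = t$ everywhere, so the delicate gap analysis in Theorem \ref{thm:eta2} that was needed to ensure $n_i(t) \gtrsim p_i t / 4$ is entirely bypassed. Consequently, the minimum-time threshold loses all $\kappa^{-1}$ and $p_i^{-1}$ factors, shrinking to a value that depends only on $d_y$, $L$, and $\log(TNd_y/\delta)$.

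I would begin by reusing the self-normalized martingale argument from the proof of Theorem \ref{thm:eta2}, which yields the confidence ellipsoid $\|\widehat{\eta}_\centerdot(t) - \eta_\centerdot\|_{B_\centerdot(t)} = \mathcal{O}(R\sqrt{d_y \log((1+L^2 T)/\delta)})$. Since $B_\centerdot(t)$ pools observations across all rounds, the same matrix-concentration ingredient used in that proof delivers $\lambda_{\min}(B_\centerdot(t)) \gtrsim t$ with high probability once $t$ exceeds a modest threshold $\tau_M$ depending only on $d_y$ and log factors. Combining the two gives the uniform estimation bound $\|\widehat{\eta}_\centerdot(t) - \eta_\centerdot\| = \mathcal{O}(R t^{-1/2} \sqrt{d_y \log(TNd_y/\delta)})$ for all $t > \tau_M$.

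I would then transport this into a regret bound by mirroring the decomposition \eqref{eq:regdec} from the proof outline of Theorem \ref{thm:reg2}. The initial piece $t \leq \tau_M$ contributes at most $\mathcal{O}(L \tau_M)$ to the regret. For $t > \tau_M$, I would bound $\mathrm{gap}(t) \leq 2 L \max_i \|\widetilde{\eta}_i(t) - \eta_\centerdot\|$ and split this into the estimation error above plus the Gaussian posterior deviation $\|\widetilde{\eta}_i(t) - \widehat{\eta}_\centerdot(t)\| = \mathcal{O}(v t^{-1/2}\sqrt{d_y \log(TN/\delta)})$ (via a union bound over arms and rounds), giving $\mathrm{gap}(t) = \mathcal{O}(v L t^{-1/2} \sqrt{d_y \log(TN/\delta)})$ with high probability. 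Following the conditional-expectation argument in the proof outline of Theorem \ref{thm:reg2}, Assumption \ref{ass:mar} together with the Thompson sampling anti-concentration bound yields $\mathbb{E}[\mathbb{I}(a(t) \neq a^\star(t)) \mid \{a(\tau)\}_{\tau \in [t-1]}] = \mathcal{O}(N t^{-1/2})$, so the product decays at rate $t^{-1}$ and a summation over $t$ produces only a $\log T$ factor. Substituting $L = \mathcal{O}(\sqrt{d_y \log(TNd_y/\delta)})$ and consolidating log factors along the lines of Theorem \ref{thm:reg2} yields the claimed order $\mathcal{O}(v N d_y^{2.5} \log^{3.5}(TNd_y/\delta))$.

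The main obstacle I anticipate is making the concentration $\lambda_{\min}(B_\centerdot(t)) \gtrsim t$ rigorous without arm-specific lower bounds, since the summands $y_{a(\tau)}(\tau) y_{a(\tau)}(\tau)^\top$ are not i.i.d.\ (their law depends on the history-dependent arm choice). A matrix-martingale argument should suffice, because conditional on $a(\tau)$ the observation $y_{a(\tau)}(\tau)$ inherits a non-degenerate sub-Gaussian covariance from \eqref{eq:obmodel}, so the conditional second moments dominate a constant multiple of $I_{d_y}$, allowing a clean lower bound on the cumulative information matrix. Once this ingredient is in place, the remaining steps reduce to bookkeeping of log and dimension factors lifted directly from the analysis underlying Theorem \ref{thm:reg2}.
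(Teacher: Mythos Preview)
Your proposal is correct and follows essentially the same route as the paper's proof (which appears as Theorem~\ref{thm:reg1} in Appendix~\ref{sec:rss}): exploit $n_i(t)=t$ to bypass the $\kappa$- and $p_i$-dependent minimum time, apply Lemma~\ref{lem:eig} (the matrix-martingale argument you identify, using $\mathbb{E}[y_{a(\tau)}(\tau)y_{a(\tau)}(\tau)^\top\mid\mathscr{G}_{\tau-1}]\succeq\Sigma_\xi$) to get $\lambda_{\min}(B_\centerdot(t))\gtrsim t$ after the threshold $\nu_{(1)}=\mathcal{O}(L^4\log(TN/\delta))$, then combine the confidence ellipsoid, the posterior-deviation bound, Assumption~\ref{ass:mar}, and Azuma exactly as you describe. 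The obstacle you flag is precisely the content of Lemma~\ref{lem:eig}, and your proposed resolution matches it.
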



	
	The above results are unprecedented even for fully observed contextual bandits as well as partially observed ones with the arm-specific parameter setup to the best of our knowledge. Especially, a high probability poly-logarithmic regret bound for Thompson sampling with respect to the time horizon has not been shown for stochastic contextual bandits, even though the previously available regret bound for Thompson sampling has a square-root order with respect to time for the adversarially chosen contexts \citep{agrawal2013thompson}. A logarithmic regret upper bound for stochastic contextual bandits is shown to be achieved for the arm-specific parameter setup by the greedy first algorithm that takes a greedy action if a criterion is met and explores otherwise 
 \citep{bastani2021mostly}. However, the above regret bound is valuable in that the greedy first algorithm needs another standard algorithm such as Thompson sampling and OFU-type algorithms for exploration.
	
	
	
	For the shared parameter setup, Thompson sampling for contextual bandits with stochastic contexts has been more widely studied compared to the setup involving arm-specific parameters \citep{abeille2017linear,chakraborty2023thompson,ghosh2022breaking}. In the case where stochastic contexts have a positive covariance matrix, it becomes possible to naturally explore the entire parameter space, eliminating the necessity for explicit exploration strategies. As a result, Thompson sampling directly ensures square root accuracy in estimating the reward parameter, leading to the achievement of a poly-logarithmic upper bound on regret.
	
\section{Numerical Experiments}
	\label{sec:5}

	\begin{figure*}[h]
		\centering
		\psfrag{dx=10~~~~}{\scriptsize$d_x=10$}
		\psfrag{dx=20~~~~}{\scriptsize$d_x=20$}
		\psfrag{dx=40~~~~}{\scriptsize$d_x=40$}
		\psfrag{dx=80~~~~}{\scriptsize$d_x=80$}
		\psfrag{~~~~dy=10}{\scriptsize$d_y=10$}
		\psfrag{~~~~dy=20}{\scriptsize$d_y=20$}
		\psfrag{~~~~dy=40}{\scriptsize$d_y=40$}
		\psfrag{~~~~dy=80}{\scriptsize$d_y=80$}
		\psfrag{time}{\scriptsize time}
		\psfrag{Regret~~~~~~~~~~}{\scriptsize Normalized Regret}
  \includegraphics[width=0.8\textwidth]{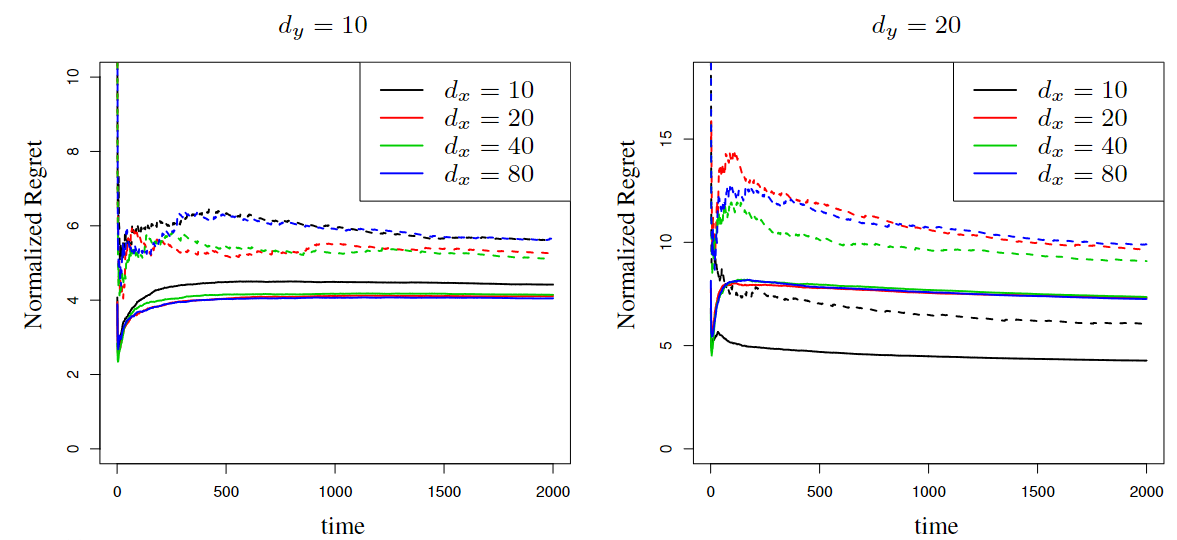}
  \includegraphics[width=0.8\textwidth]{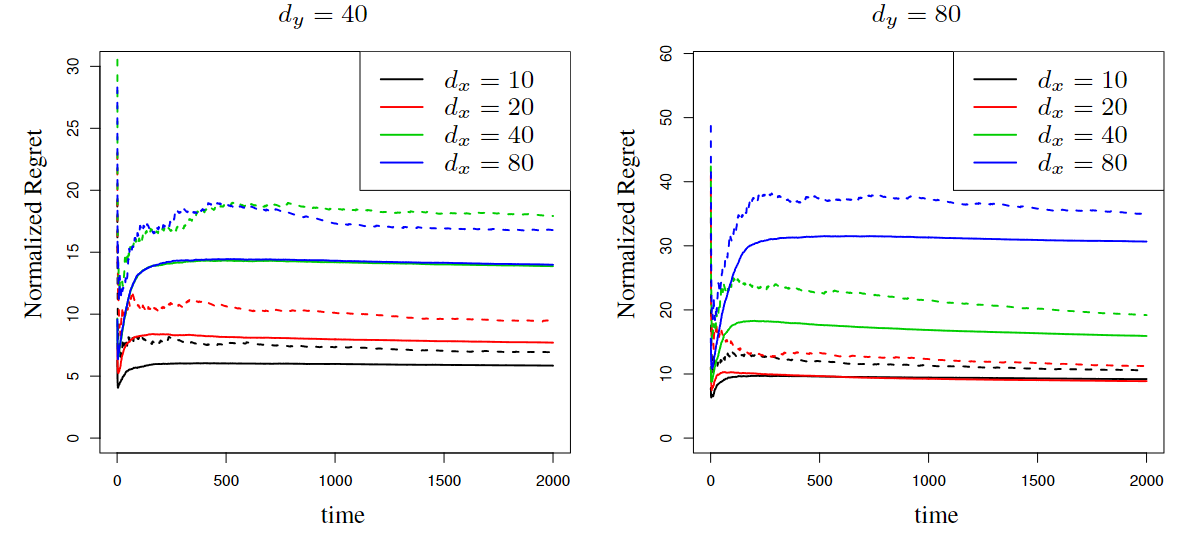}
		\normalsize 
		\caption{Plots of $\mathrm{Regret}(t)/(\log t)^2$ over time for the different dimensions of context at $N = 5$ and  $d_y=10,20,40,80$. The solid and dashed lines represent the average-case and worst-case regret curves, respectively.} 
	\label{fig:1}
\end{figure*}

\begin{figure*}[h]
	\centering
	\psfrag{time}{\scriptsize time}
	\psfrag{Error~~~~~~~~~~}{\scriptsize Normalized Error}
	\psfrag{Estimation~~~~~~}{\scriptsize Normalized Regret}
	\psfrag{Estimation1~~~~~~}{\scriptsize $d_x=10,~d_y=20$}
	\psfrag{Estimation2~~~~~~}{\scriptsize $d_x=20,~d_y=20$}
	\psfrag{Estimation6~~~~~~}{\scriptsize $d_x=40,~d_y=20$}
	\psfrag{1~~~~~~}{\scriptsize arm 1}
	\psfrag{2~~~~~~}{\scriptsize arm 2}
	\psfrag{3~~~~~~}{\scriptsize arm 3}
	\psfrag{4~~~~~~}{\scriptsize arm 4}
	\psfrag{5~~~~~~}{\scriptsize arm 5}
 \includegraphics[width=1\textwidth]{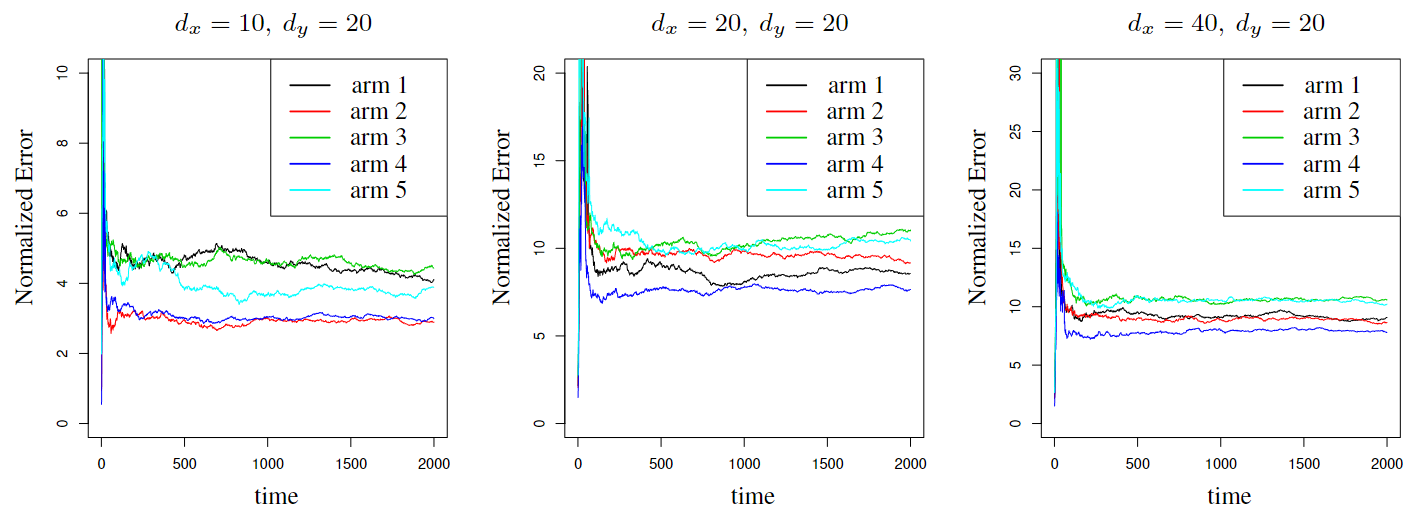}
	\normalsize 
	\caption{Plots of normalized estimation errors $\sqrt{t}\|\widehat{\eta}_i(t)-\eta_i\| $ of Algorithm \ref{algo1} over time for partially observable stochastic contextual bandits with five arm-specific parameters and dimensions of observations and contexts $d_y=20$, $d_x=10,~20,~40$.} 
\label{fig:2}
\end{figure*}

\begin{figure*}[h]
\centering
\psfrag{t}{\scriptsize time}
\psfrag{gr~~~~~~~~~~}{\tiny Greedy}
\psfrag{ts~~~~~~~~~~~~~~~~~~~~~~~~~~~}{\tiny Thompson Sampling}
\psfrag{regret}{\scriptsize Regret}
\psfrag{N=5~~~}{\scriptsize $N=5$}
\psfrag{N=10~~~}{\scriptsize $N=10$}
\psfrag{N=20~~~}{\scriptsize $N=20$}
\psfrag{N=30~~~}{\scriptsize $N=30$}
\includegraphics[width=1\textwidth]{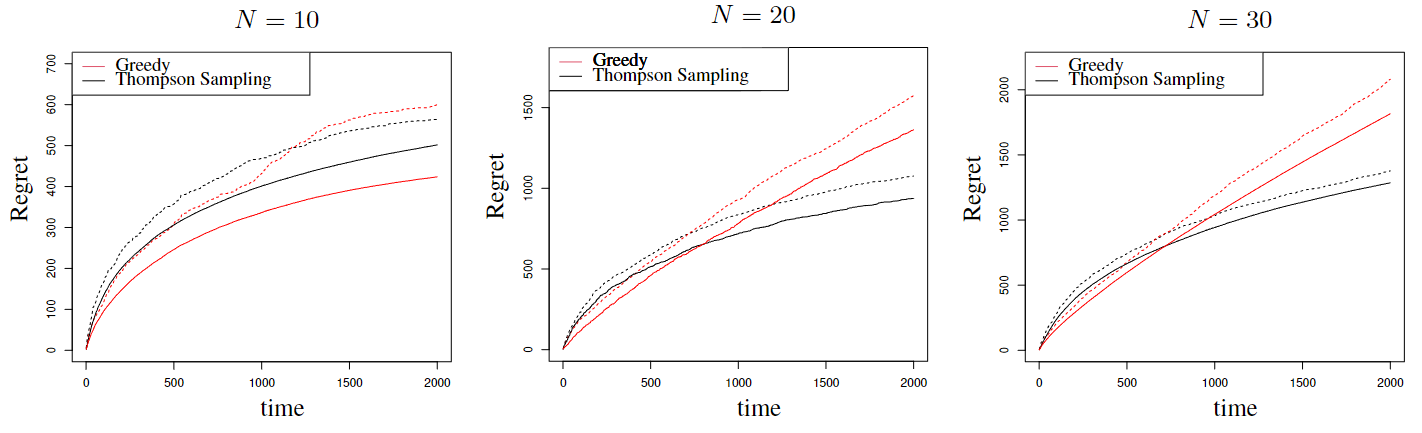}
\normalsize 
\caption{Plots of regrets over time with the different number of arms $N = 10,~20,~30$ for Thomson sampling versus the Greedy algorithm. The solid and dashed lines represent the average-case and worst-case regret curves, respectively.}
\label{fig:3}
\end{figure*}

\paragraph{Simulation Experiments:}
\label{ssec:se}
In this sub-section, we numerically show the results in Section \ref{sec:4} with synthetic data. First, to explore the relationships between the regret and dimension of observations and contexts, we simulate various scenarios for the model with arm-specific parameters with $N=5$ arms and different dimensions of the observations $d_y=10,~20,~40,~80$ and context dimension $d_x=10,~20,~40,~80$. Each case is repeated $50$ times and the average and worst quantities amongst all $50$ scenarios are reported. Figure \ref{fig:1} illustrates regret normalized by $(\log t)^2$, which is the regret growth that the minimum time effect is removed. Second, Figure \ref{fig:2} showcases the average estimation errors of the estimates in \eqref{eq:etahatc} for five different arm-specific parameters defined in \eqref{eq:etai}, changing dimensions of observations and contexts. These errors are normalized by $t^{-1/2}$ based on Theorem \ref{thm:eta2}. Since the error decreases with a rate $t^{-1/2}$, the normalized errors for all the arms are flattened over time. This demonstrates that the square-root accuracy estimations of $\{\eta_i\}_{i=1}^N$ are available regardless of whether the dimension of observations is greater or less than that of contexts.

Moving on, Figure \ref{fig:3} provides insights into the average and worst-case regrets of Thompson sampling compared to the Greedy algorithm, with variations in the number of arms ($N=10, 20, 30$). It is worth noting that the Greedy algorithm is considered optimal for the model with a shared parameter, but the worst-case regret of it exhibits linear growth in the model with arm-specific parameters. The worst-case linear regret growth of the greedy algorithm can occur when some arms, which are totally dominated by other arms, are missing in potential action because of no explicit exploration scheme. In Figure \ref{fig:3}, the plots represent the average and worst-case regrets of the models with arm-specific parameters, showing that the greedy algorithm has greater worst-case regret for the model with arm-specific parameters, especially for the case with a large number of arms.

\begin{figure*}[h]
\centering
\psfrag{time}{\scriptsize time}
\psfrag{mc~~~~~~~~~~~~~~~~~~~~~~~~~~~~~~}{\tiny Average Correct Decision Rate}
\psfrag{oc~~~~~~~~~~}{\tiny Regression Oracle}
\psfrag{ts~~~~~~~~~~~~~~~~~~~~}{\tiny Thompson Sampling}
\psfrag{egg~~~}{\scriptsize EGG}
\psfrag{t}{\scriptsize time}
\psfrag{emv~~~~~~~~~~~~~~~}{\scriptsize Eye Movement}    
\includegraphics[width=0.48\textwidth]{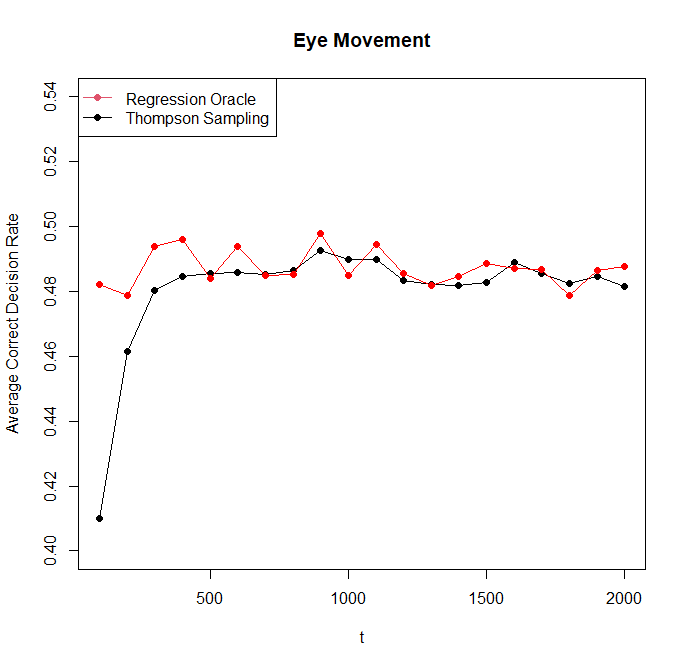}
\includegraphics[width=0.48\textwidth]{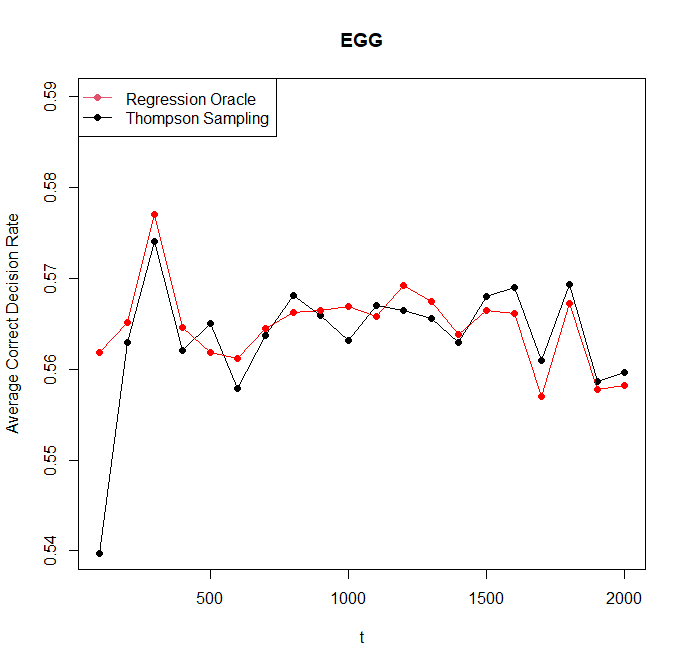}
\normalsize 
\caption{Plots of average correct decision rates of the regression oracle and Thompson sampling for Eye movement (left) and EGG dataset (right).}
\label{fig:real}
\end{figure*}

\paragraph{Real Data Experiments:}
\label{ssec:re}

In this sub-section, we assess the performance of the proposed algorithm using two healthcare datasets: Eye Movement and EGG\footnote{The datasets are publicly available at: \url{https://www.openml.org/}}. These two datasets are presented in previous studies by \cite{bastani2020online,bietti2021contextual} using contextual bandits with arm-specific parameters and shared context. These datasets involve classification tasks based on patient information. The Eye Movement and EGG data sets are comprised of 26 and 14-dimensional (shared) contexts, respectively, with the corresponding patient class. 
Also, the number of class for Eye Movement and EGG datasets are 3 and 2, respectively, and each category of patient class is considered an arm in the perspective of the bandit problem. We analyze these datasets under the logistic linear regression assumption, where reward is assumed to be generated based on \eqref{eq:reward} and 
\begin{eqnarray}
    \log \frac{\mathbb{P}(l(t) = i)}{1-\mathbb{P}(l(t)=i)} = x(t)^\top \mu_i = \mathbb{E}[r_i(t)],\label{eq:reward22}
\end{eqnarray}
where $l(t)$ is the true label of the patient at time $t$. Because the datasets do not have rewards based on this setup, we generated rewards based on \eqref{eq:reward} and \eqref{eq:reward22} with artificial noises.

For evaluation, we generate 100 scenarios for each dataset. We calculate the average correct decision rate defined as $t^{-1} \sum_{\tau=1}^t \mathbb{I}(a(\tau) = l(\tau))$. We compare the suggested algorithm against the regression oracle with the estimates trained on the entire data in hindsight, which are not updated over time. We artificially create observations of the patients' contexts based on the structure given in \eqref{eq:obmodel} with a sensing matrix $A$ consisting of 0 and 1 only. We reduce the dimension of the patient contexts from 26 to 13 for the Eye movement dataset and from 14 to 10 for the EGG dataset. Figure \ref{fig:real} displays the average correct decision rates of the regression oracle and Thompson sampling for the two real datasets. We evaluate the mean correct decision rates over every 100 patients and then average them across 100 scenarios. Accordingly, each dot represents the sample mean of 10,000 results. For both data sets, the correct decision rate of Thompson sampling converges to that of the regression oracle over time. More results of real data experiments are provided in Appendix \ref{sec:real2}.

\section{Concluding Remarks and Future Work}
\label{sec:6}

We studied Thompson sampling for partially observable stochastic contextual bandits under relaxed assumptions with a particular focus on the arm-specific parameter setup. Indeed, the suggested model is versatile, encompassing a wide range of possible observation structures and offering estimation methods suitable for stochastic contexts. Further, we showed that Thompson sampling guarantees the square-root consistency of parameter estimation for reward parameters. Finally, we proved regret bounds for Thompson sampling with a poly-logarithmic rate for the most common two cases of parameter setups. Our techniques for the analysis hold for other analogous reinforcement learning problems such as a Markov Decision Process thanks to the inclusive assumptions and comprehensive approaches.

A topic of prospective research involves proposing and examining algorithms designed for partially observable contextual bandits, where arms are clustered. Additionally, there is an opportunity to explore the introduction of non-linear structures into both the observation and reward models. Lastly, investigating this framework in the presence of an adversary presents a fascinating challenge for future investigations.

\newpage
\addcontentsline{toc}{part}{Appendices}
\bibliography{mybib1}
\bibliographystyle{abbrvnat}

\newpage
\tableofcontents
\newpage

\appendix

\section*{Organization of Appendices}
The appendices are organized as follows. First, we provide experiments with real datasets, which are not shown in Section \ref{sec:4}  due to space constraints. Second, Appendix \ref{sec:sps} describes the shared parameter setup and Thompson sampling algorithm for it. Next,  Appendix \ref{sec:rgf} presents the theoretical results for the general model, with comprehensive proofs. Furthermore, Appendix \ref{sec:rss} provides the worst-case regret upper bounds for the model with a shared parameter, accompanied by its detailed proof. Lastly,  Appendix \ref{sec:pthm:eta2} and Appendix \ref{sec:pthm:reg2} present the complete proofs of Theorem \ref{thm:eta2} and Theorem \ref{thm:reg2}, respectively.

\section{Real Data Experiments}

\label{sec:real2}

\begin{figure*}[h]
\centering
\psfrag{time}{\scriptsize time}
\psfrag{mc~~~~~~~~~~~~~~~~~~~~~~~~~~~~~~}{\tiny Average Correct Decision Rate}
\psfrag{oc~~~~~~~~~~}{\tiny Regression Oracle}
\psfrag{ts~~~~~~~~~~~~~~~~~~~~}{\tiny Thompson Sampling}
\psfrag{egg~~~}{\scriptsize EGG}
\psfrag{t}{\scriptsize time}
\psfrag{emv~~~~~~~~~~~~~~~}{\scriptsize Eye Movement}
\includegraphics[width=0.4\textwidth]{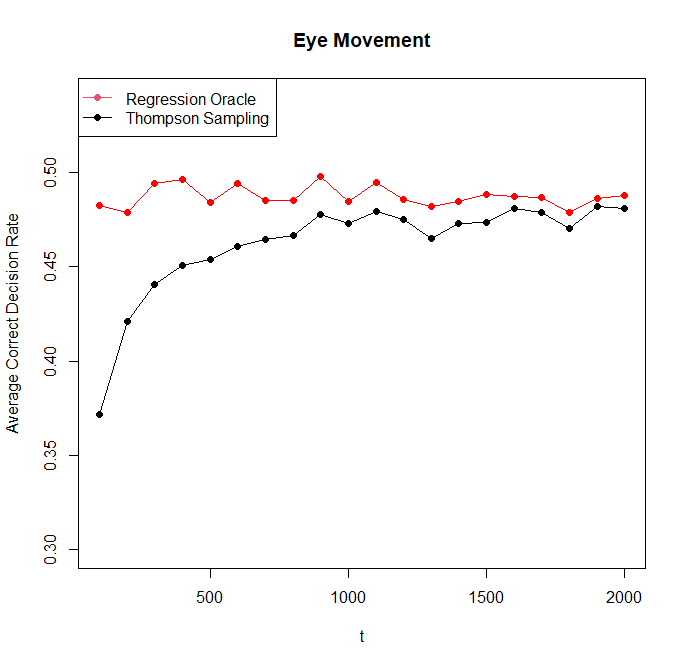}
\includegraphics[width=0.4\textwidth]{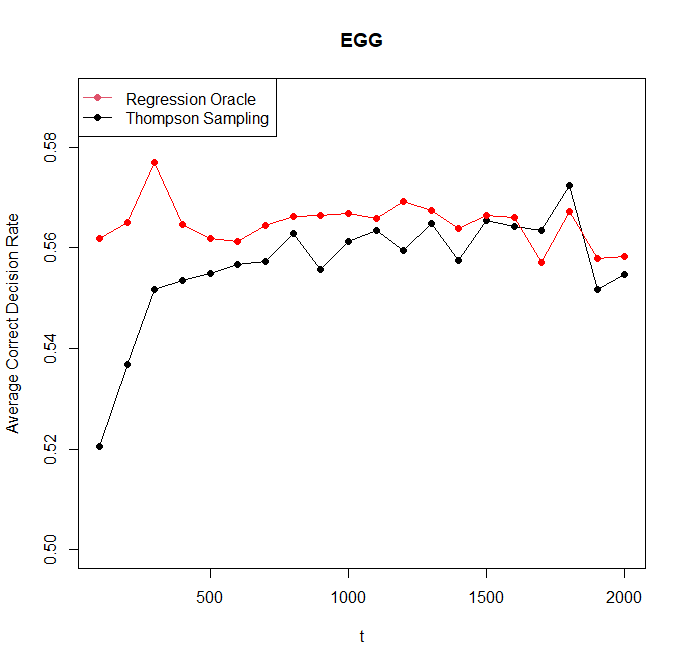}
\includegraphics[width=0.4\textwidth]{eye1.png}
\includegraphics[width=0.4\textwidth]{egg1.png}
\normalsize 
\caption{Plots of average correct decision rates of the regression oracle and Thompson sampling for Eye Movement (top left) and EGG dataset (top right) under the simple linear regression setup and Eye Movement (bottom left) and EGG dataset (bottom right) under the logistic linear regression setup.}
\label{fig:real1}
\end{figure*}

\begin{figure*}[h]
\centering
\psfrag{time}{\scriptsize time}
\psfrag{mc~~~~~~~~~~~~~~~~~~~~~~~~~~~~~~}{\tiny Average Correct Decision Rate}
\psfrag{oc~~~~~~~~~~}{\tiny Regression Oracle}
\psfrag{ts~~~~~~~~~~~~~~~~~~~~}{\tiny Thompson Sampling}
\psfrag{egg~~~}{\scriptsize EGG}
\psfrag{t}{\scriptsize time}
\psfrag{emv~~~~~~~~~~~~~~~}{\scriptsize Eye Movement}
\includegraphics[width=0.4\textwidth]{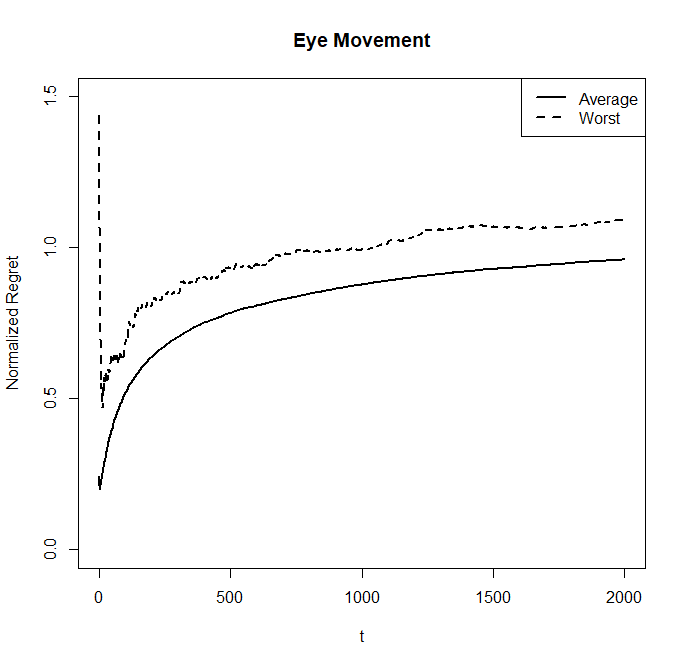}
\includegraphics[width=0.4\textwidth]{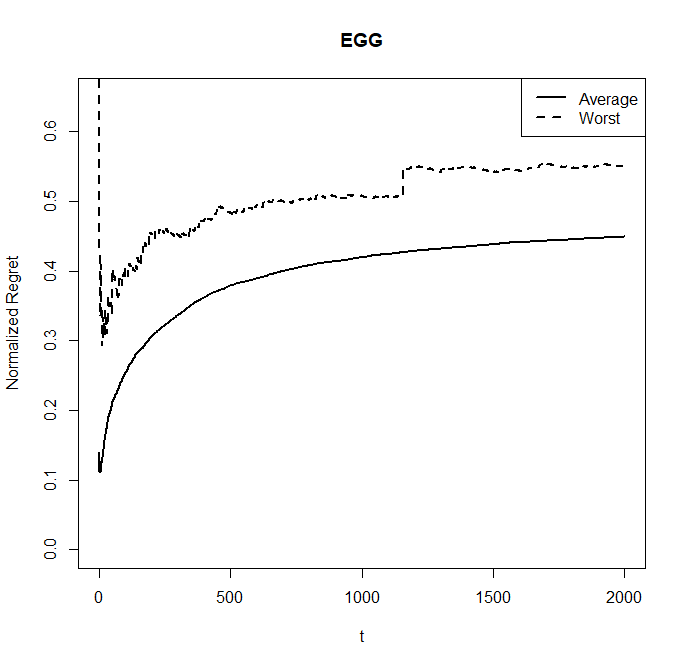}
\includegraphics[width=0.4\textwidth]{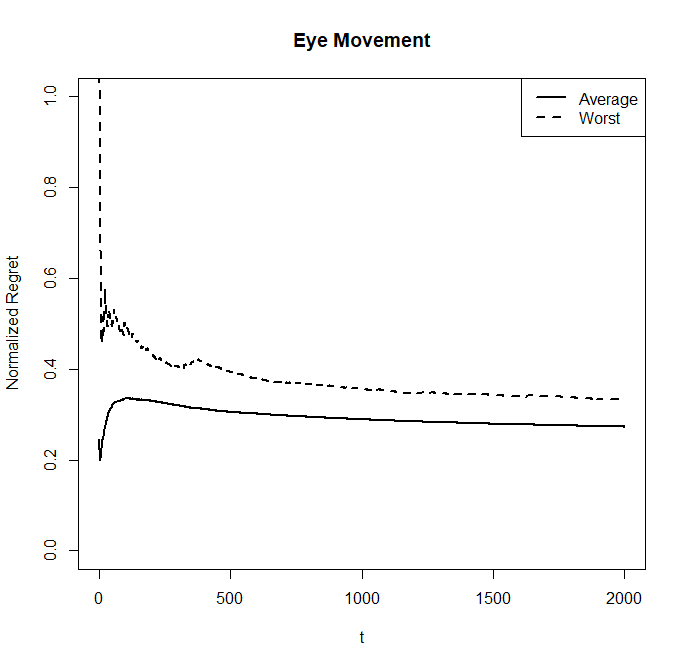}
\includegraphics[width=0.4\textwidth]{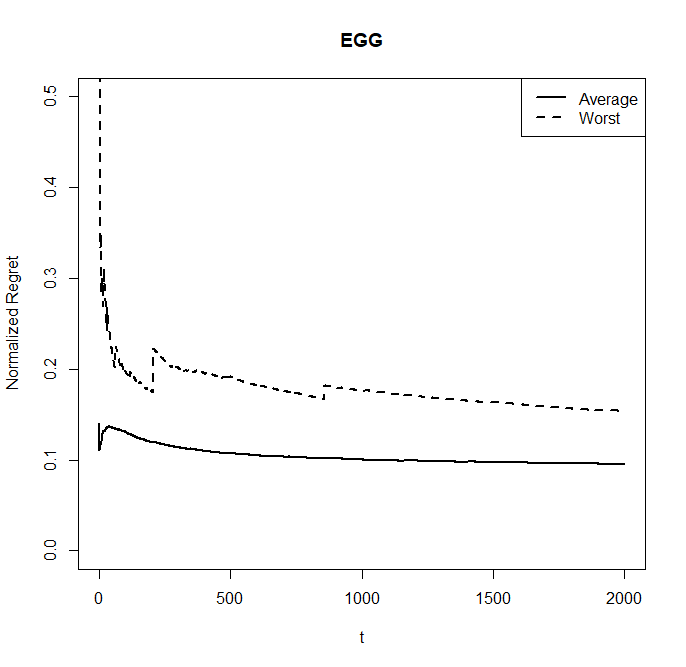}
\normalsize 
\caption{Plots of normalized regret of the regression oracle and Thompson sampling for Eye Movement (top left) and EGG dataset (top right) under the simple linear regression setup and Eye Movement (bottom left) and EGG dataset (bottom right) under the logistic linear regression setup.}
\label{fig:real2}
\end{figure*}

In this section, we analyze two realdata sets used in Section \ref{sec:4} under two different assumptions. The first assumption is a simple linear regression, where a decision-maker gets a reward of 1 for successful classification and 0 otherwise.  The reward is assumed to be generated as follows:
\begin{eqnarray*}
    r_i(t) = x(t)^\top \mu_i + \varepsilon_i(t) = \mathbb{I}(l(t)=i),
\end{eqnarray*}
where $l(t)$ is the true label of the patient randomly chosen at time $t$ and $x(t)^\top \mu_i$ represents $\mathbb{P}(l(t)=i|x(t))$. 
This assumption is prone to a reward model misspecification since the expected value $\mathbb{P}(l(t)=i|x(t))$ is constrained to be between 0 and 1. Next, the second assumption is a logistic linear regression, introduced in Section \ref{sec:4}.

For evaluation, we generate 100 scenarios for each dataset. We calculate the (average and worst case) regret as well as the average correct decision rate introduced in Section \ref{sec:4} for Thompson sampling versus regression oracle. We consider the estimates of regression oracle as the truth for regret evaluation for Thompson sampling. The observations are generated in the same manner introduced in Section \ref{sec:4}.

Figure \ref{fig:real1} displays the average correct decision rates of the regression oracle and Thompson sampling for the two real datasets under the two assumptions. We evaluate the mean correct decision rates over every 100 patients and then average them across 100 scenarios. Accordingly, each dot represents a sample mean of 10,000 results. For both data sets, the correct decision rate of Thompson sampling converges to that of the regression oracle over time. In addition, Figure \ref{fig:real2} illustrates the average and worst cases of normalized regret of Thompson sampling against the regression oracle. Regret grows slightly faster than $\log^2 t$ for the simple linear model, but seems to scale with at most $\log^2 t$ in the logistic regression model. The over growth of regret in the first model can be caused by a potential model misspecification \citep{foster2020adapting}. 

\newpage
\section{Shared Parameter Setup}
\label{sec:sps}
In Section \ref{sec:2}, we describe the arm-specific parameter setup. In this section, we introduce the shared parameter setup, where the reward is generated
\begin{eqnarray*}
    r_i(t) = x_i(t)^\top \mu_\centerdot + \varepsilon_i(t).
\end{eqnarray*}
In this setup, the parameter $\mu_\centerdot$ is shared across all arms. Accordingly, the parameter can be learned regardless of the actions taken. Thus, the transformed parameter can be written as 
$$\eta_\centerdot = D^\top \mu_\centerdot.$$ 
The optimal arm is 
\begin{eqnarray*}
    a^\star(t) = \text{argmax}_{i\in[N]} ~y_i(t)^\top \eta_\centerdot
\end{eqnarray*}
and subsequently regret is
\begin{eqnarray*}
\mathrm{Regret}(T) =\sum_{t=1}^T (y_{a^\star(t)}(t) -y_{a(t)}(t))^\top \eta_\centerdot.
\end{eqnarray*}

The Thompson sampling algorithm for the shared parameter setup is basically the same as Algorithm \ref{algo1}, but simpler than it in that the estimate of transformed parameters and (unscaled) inverse covariance matrices are the same for all arms. Thus, we use the notation $\widehat{\eta}_\centerdot(t)$ and $B_\centerdot(t)$ for the estimate and (unscaled) inverse covariance, respectively. The update procedure for estimators is
\begin{eqnarray}
B_\centerdot(t+1) &=& B_\centerdot(t) + y_{a(t)}(t) y_{a(t)}(t)^\top,\label{eq:Bi2} \\
\widehat{\eta}_\centerdot(t+1) &=& B_\centerdot(t+1)^{-1} \left(B_\centerdot(t) \widehat{\eta}_\centerdot(t) + y_{a(t)}(t) r_{a(t)}(t)\right),\label{eq:etahat2}
\end{eqnarray}
which is similar to \eqref{eq:Bi} and \eqref{eq:etahat}, but there is not the indicator function $\mathbb{I}(a(t)=i)$. This implies that a decision-maker can learn the entire reward parameter regardless of the chosen arm in the shared parameter setup, while it learns an arm-specific parameter only given the arm chosen in the arm-specific parameter setup. The pseudo-code for the Thompson Sampling algorithm is described in Algorithm \ref{algo2}.

\begin{algorithm}[t] 
	\begin{algorithmic}[1]
		\State Set $B_\centerdot(1) = I_{d_y}$, $\widehat{\eta}_\centerdot(1) = \mathbf{0}_{d_y}$~for~$i = 1,2, \dots, N$
		\For{$t = 1,2, \dots, T$}
        \For{$i = 1,2, \dots, N$}
        \State Sample $\widetilde{\eta}_i(t)$ from $\mathcal{N}(\widehat{\eta}_\centerdot(t),v^2B_\centerdot(t)^{-1})$
        \EndFor
        \State Select arm $a(t) = \text{argmax}_{i\in[N]} y_i(t)^\top \widetilde{\eta}_i(t)$ 
        \State Gain reward $r_{a(t)}(t) = x_{a(t)}(t)^\top \mu_\centerdot + \varepsilon_{a(t)}(t)$
        \State Update $B_\centerdot(t+1)$ and $\widehat{\eta}_\centerdot(t+1)$ by \eqref{eq:Bi2} and \eqref{eq:etahat2}
		\EndFor  
	\end{algorithmic}
\caption{: Thompson sampling algorithm for partially observable contextual bandits with a shared parameter}  
\label{algo2}
\end{algorithm}

\newpage
\section{Auxiliary Lemmas}
\label{sec:rgf}


In this section, we present auxiliary lemmas that serve as building blocks for the main results in Section \ref{sec:4}. To begin, Lemma \ref{lem:tr} establishes a truncation bound for the following steps of proofs. Next, Lemma \ref{lem:subg}, supported by Lemma \ref{lem:dti}, guarantees the sub-Gaussian tail property for the reward prediction error given an observation. Additionally, Lemma \ref{lem:eig} demonstrates that the minimum eigenvalue of $B_i(t)$ grows linearly with the sample size $n_i(t)$ with a high-probability. Furthermore, Lemma \ref{lem:azu} is Azuma's inequality. Lastly, Lemma \ref{lem:eta0} and \ref{lem:eta1} provide upper bounds for the estimation error and sample bias, respectively.

We first show the general results which are valid for both arm-specific and shared parameter setups. To proceed, we state an assumption about the parameter space, which is commonly adapted in the antecedent literature \citep{dani2008stochastic,goldenshluger2013linear,bastani2021mostly,kargin2023thompson}. 

\begin{assum}[Parameter Bounds]
		For the transformed reward parameters $\{\eta_i\}_{i\in[N]}$,  there exists a positive constant $c_\eta$ such that
		$\|\eta_i\| \leq c_\eta$, for all $i=1,\dots,N$. 
		\label{ass:para}
\end{assum}
	
 Note that according to the above assumption, a similar bound also holds for the parameters $\{\mu_i\}_{i\in[N]}$ so that their norms are bounded by a positive constant $c_\mu$. This assumption expresses that the unknown reward parameters live in an unknown bounded region. Intuitively, this enables us to control the effect of parameter sizes on regret growth.

Since each element of a context and observation noise is sub-Gaussian and the sum of two sub-Gaussian random variables is sub-Gaussian as well, based on \eqref{eq:obmodel}, a positive number $c_y$ exists such that
\begin{eqnarray}
    \mathbb{E}\left[e^{\lambda y_{ij}(t)}\right] \leq \exp\left(\frac{\lambda^2c_y^2}{2}\right),\label{eq:cy}
\end{eqnarray}
for all real $\lambda > 0$, where $y_{ij}(t)$ is the $j-$th element of $y_i(t)$. Next, we find a high-probability upper bound for the norm of observations for the following steps. To find the high probability bound for $\|y_i(t)\|$ for a confidence level $\delta> 0$, we define $W_T$ such that

\begin{eqnarray}
W_T = \left\{\underset{\{i\in[N],\tau \in [T]\}}{\max} ||y_i(\tau)||_{\infty} \leq  v_T(\delta)\right\} \label{eq:WT},
\end{eqnarray} 
where $v_T(\delta) = c_y\sqrt{2\log (2TNd_y/\delta)}$. In the next lemma, we show that the event $W_T$ happens with probability at least $1-\delta$.

\begin{lem}
For the event $W_T$ defined in \eqref{eq:WT}, we have $\mathbb{P}(W_T) \geq 1 -\delta$.\label{lem:tr}
\end{lem}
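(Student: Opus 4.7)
The plan is to reduce the event $W_T$ to a finite collection of scalar sub-Gaussian deviations and then apply a union bound. By hypothesis \eqref{eq:cy}, each coordinate $y_{ij}(\tau)$ (for $i\in[N]$, $\tau\in[T]$, $j\in[d_y]$) is a centered sub-Gaussian random variable with parameter $c_y$. The standard sub-Gaussian tail inequality then gives
\begin{equation*}
\mathbb{P}\bigl(|y_{ij}(\tau)| > u\bigr) \;\leq\; 2\exp\!\left(-\frac{u^2}{2c_y^2}\right)
\end{equation*}
for every $u > 0$.

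Next I would rewrite the complementary event as a union over the $TNd_y$ indices:
\begin{equation*}
W_T^c \;=\; \bigcup_{i\in[N]}\bigcup_{\tau\in[T]}\bigcup_{j\in[d_y]} \bigl\{|y_{ij}(\tau)| > v_T(\delta)\bigr\},
\end{equation*}
and bound its probability using Boole's inequality together with the scalar tail bound above. This yields
\begin{equation*}
\mathbb{P}(W_T^c) \;\leq\; 2TNd_y\exp\!\left(-\frac{v_T(\delta)^2}{2c_y^2}\right).
\end{equation*}

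Finally, substituting the chosen threshold $v_T(\delta) = c_y\sqrt{2\log(2TNd_y/\delta)}$ collapses the right-hand side to exactly $\delta$, giving $\mathbb{P}(W_T) \geq 1-\delta$ as required. The argument is essentially a bookkeeping exercise; there is no genuine obstacle, though one must be careful that the sub-Gaussian constant $c_y$ claimed in \eqref{eq:cy} is indeed \emph{coordinate-wise}, which follows since both $x_i(t)$ (coordinate-wise sub-Gaussian by assumption) and $\xi_i(t)$ (coordinate-wise sub-Gaussian by assumption) transform into $y_i(t) = A x_i(t) + \xi_i(t)$ whose coordinates are finite linear combinations of sub-Gaussian variables and hence sub-Gaussian with a parameter that depends only on $A$, $\Sigma_x$, and the sensing-noise constant.
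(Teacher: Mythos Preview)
Your proof is correct and follows essentially the same approach as the paper: a coordinate-wise sub-Gaussian tail bound combined with a union bound over the $TNd_y$ indices, then substitution of $v_T(\delta)$. The paper presents the union bound in two stages (first over coordinates to control $\|y_i(t)\|_\infty$, then over $i$ and $\tau$), but this is only a cosmetic difference from your single union over all triples.
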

\begin{proof}
By \eqref{eq:cy} and the properties of sub-Gaussian random variables, 
\begin{eqnarray*}
    \mathbb{P}\left(|y_{ij}(t)| \geq \varepsilon \right) \leq 2\cdot e^{-\frac{\epsilon^2}{2c_y^2}},
\end{eqnarray*}
is satisfied for given $i,j\in [N]$. Accordingly, we have

$$\mathbb{P}\left( \|y_i(t)\|_{\infty} \geq \varepsilon \right) \leq 2 d_y \cdot e^{-\frac{\varepsilon^2}{2c_y^2}}.$$ 

By taking the union of the events over time, we get
\begin{eqnarray*}
\mathbb{P}\left( \max_{i\in[N],\tau \in [T]}\|y_i(t)\|_\infty \geq \varepsilon \right) \leq 2TNd_y  \cdot e^{-\frac{ \varepsilon^2}{2c_y^2}}
\end{eqnarray*}

By plugging $c_y(2\log (2TNd_y/\delta))^{1/2}$ in $\varepsilon$, we have
\begin{eqnarray*}
\mathbb{P}\left( \max_{i\in[N],\tau \in [T]}\|y_i(t)\|_\infty \geq  c_y( 2 \log (2TNd_y/\delta))^{1/2} \right) \leq 2TNd_y  \cdot \exp\left(-\frac{ 2c_y^2\log (2TNd_y/\delta)}{2c_y^2}\right)=\delta.
\end{eqnarray*}
Thus,
\begin{eqnarray*}
\mathbb{P} (W_T) \geq 1 - \mathbb{P}\left( \max_{i\in[N],\tau \in [T]}\|y_i(t)\|\geq v_T(\delta) \right) \geq 1-\delta.
\end{eqnarray*}
\end{proof}

Then, by Lemma \ref{lem:tr}, we have
\begin{eqnarray}
    \|y_i(t)\| \leq \sqrt{d_y} v_T(\delta):=L=\mathcal{O}\left(\sqrt{d_y \log(TNd_y/\delta)}\right)\label{eq:ybnd},
\end{eqnarray}
for all $1\leq i \leq N$ and $1\leq t \leq T$ with probability at least $1-\delta$.

 The next lemma presents that reward prediction errors given observations have the sub-Gaussian property when observations and rewards have sub-Gaussian distributions, and thereby, a confidence ellipsoid is constructed for the estimator in \eqref{eq:etahatc}. This result is built on Theorem 1 in \cite{abbasi2011improved} with proper modifications.

\begin{lem} Let $w_t = r_{a(t)}(t) - y_{a(t)}(t)^\top \eta_{a(t)}$ and  $\mathscr{F}_{t-1}=\sigma\{\{y(\tau)\}_{\tau=1}^t,\{a(\tau)\}_{\tau=1}^t,\{r_{a(\tau)}(\tau)\}_{\tau=1}^{t-1}\}$. Then, $w_t$ is $\mathscr{F}_{t-1}$-measurable and conditionally $R$-sub-Gaussian for some $R>0$ such that
\begin{eqnarray*}
    \mathbb{E}[e^{\nu w_t}|\mathscr{F}_{t-1}]\leq \exp\left(\frac{\nu^2 R^2 }{2}\right).
\end{eqnarray*}
In addition, for any $\delta > 0$, with probability at least $1-\delta$, we have
    \begin{eqnarray*}
        \|\widehat{\eta}_i(t) - \eta_i\|_{B_i(t)} = \left\|\sum_{\tau=1}^{t-1}  y_i(\tau)w_\tau\mathbb{I}(a(\tau)=i) \right\|_{B_i(t)^{-1}}
        \leq R \sqrt{d_y \log \left(\frac{1+L^2 n_i(t)}{\delta}\right)} +c_\eta.
    \end{eqnarray*}
\label{lem:subg}
\end{lem}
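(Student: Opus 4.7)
The plan is to establish the lemma in two stages: first verify that $w_t$ is $\mathscr{F}_{t-1}$-measurable and conditionally $R$-sub-Gaussian, then combine this with the closed-form expression for $\widehat{\eta}_i(t)$ to obtain the self-normalized ellipsoid bound via a standard concentration inequality (presumably Lemma \ref{lem:dti}, which is the analogue of Theorem 1 of Abbasi-Yadkori et al.). Measurability of $w_t$ is immediate since $\mathscr{F}_{t-1}$ contains $y(t)$, $a(t)$, and $r_{a(t)}(t)$ (it does actually include the current reward as $w_t$ uses the time-$t$ reward; one should either redefine the filtration so that $r_{a(t)}(t)$ is in $\mathscr{F}_t$ and state sub-Gaussianity accordingly, or check that the statement as written is consistent with how the concentration bound is invoked).

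For the sub-Gaussianity, I would rewrite $w_t = (x_{a(t)}(t)^\top \mu_{a(t)} - y_{a(t)}(t)^\top \eta_{a(t)}) + \varepsilon_{a(t)}(t)$ using the reward representation in \eqref{eq:reward2}. Because $a(t)$ and $y(t)$ are already known from the filtration, and because contexts are independent across arms and time while $\varepsilon_{a(t)}(t)$ is independent of all contexts and observations, the moment-generating function factorizes: $\mathbb{E}[e^{\nu w_t}\mid \mathscr{F}_{t-1}]$ equals the product of $\mathbb{E}[e^{\nu(x_{a(t)}(t)^\top\mu_{a(t)}-y_{a(t)}(t)^\top\eta_{a(t)})}\mid y_{a(t)}(t)]$, which is bounded by $\exp(\nu^2 R_2^2/2)$ via Assumption \ref{ass:subg}, and $\mathbb{E}[e^{\nu \varepsilon_{a(t)}(t)}]$, which is bounded by $\exp(\nu^2 R_1^2/2)$ via \eqref{eq:r1}. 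Multiplying yields the stated constant $R=\sqrt{R_1^2+R_2^2}$.

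For the ellipsoid bound, I would start from \eqref{eq:Bic}--\eqref{eq:etahatc} and expand $r_i(\tau)=y_i(\tau)^\top\eta_i+w_\tau$ on the event $\{a(\tau)=i\}$ (where $w_\tau$ coincides with $r_i(\tau)-y_i(\tau)^\top\eta_i$). This produces the identity
$$B_i(t)\left(\widehat{\eta}_i(t)-\eta_i\right) \;=\; -\eta_i+\sum_{\tau=1}^{t-1} w_\tau\, y_i(\tau)\,\mathbb{I}(a(\tau)=i).$$
Taking the $\|\cdot\|_{B_i(t)^{-1}}$-norm and applying the triangle inequality gives
$$\|\widehat{\eta}_i(t)-\eta_i\|_{B_i(t)} \;\le\; \|\eta_i\|_{B_i(t)^{-1}}+\Bigl\|\sum_{\tau=1}^{t-1} w_\tau\, y_i(\tau)\,\mathbb{I}(a(\tau)=i)\Bigr\|_{B_i(t)^{-1}}.$$
Since $B_i(t)\succeq I_{d_y}$, the first term is at most $\|\eta_i\|\le c_\eta$ by Assumption \ref{ass:para}. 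For the second term, I invoke Lemma \ref{lem:dti} (the self-normalized martingale bound), which, given the sub-Gaussianity of $w_\tau$ together with the fact that $y_i(\tau)\mathbb{I}(a(\tau)=i)$ is predictable, produces a bound of order $R\sqrt{2\log(\det(B_i(t))^{1/2}/\delta)}$ with probability at least $1-\delta$. Finally, I use the truncation $\|y_i(\tau)\|\le L$ from Lemma \ref{lem:tr} together with $\mathrm{tr}(B_i(t))\le d_y+L^2 n_i(t)$ and AM-GM to conclude $\log\det B_i(t)\le d_y\log(1+L^2 n_i(t)/d_y)\le d_y\log(1+L^2 n_i(t))$, which reshapes the tail bound into the stated form.

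The main obstacle is the subtle conditioning argument in the sub-Gaussianity step: one must carefully verify that the distribution of $x_{a(t)}(t)$ given all of $\mathscr{F}_{t-1}$ depends only on $y_{a(t)}(t)$, so that Assumption \ref{ass:subg} (which conditions only on $y_i(t)$) can actually be applied. This reduction relies on the independence of $\{x_i(\tau)\}$ across arms and times, the independence of $\varepsilon_{a(t)}(t)$ from contexts and past data, and the fact that $a(t)$ is $\sigma(\{y(\tau)\}_{\tau\le t},\mathscr{F}_{t-2})$-measurable so that selecting an arm does not introduce extra coupling. Once this conditional independence is carefully justified, the rest is a routine combination of closed-form regression algebra and the self-normalized martingale inequality.
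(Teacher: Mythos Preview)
Your proposal is correct and follows essentially the same route as the paper: decompose $w_t$ into the reward noise and the prediction error, apply \eqref{eq:r1} and Assumption~\ref{ass:subg} to get $R=\sqrt{R_1^2+R_2^2}$, then combine the supermartingale property (Lemma~\ref{lem:dti}) with the self-normalized bound of \cite{abbasi2011improved} and the determinant estimate $\det B_i(t)\le (1+L^2 n_i(t)/d_y)^{d_y}$. You are in fact slightly more careful than the paper in isolating the $-\eta_i$ correction term (which produces the $+c_\eta$) and in flagging the measurability slip in the statement ($w_t$ is $\mathscr{F}_t$-measurable, not $\mathscr{F}_{t-1}$-measurable, since $r_{a(t)}(t)\notin\mathscr{F}_{t-1}$); the paper glosses over both points but arrives at the same bound.
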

This lemma provides the sub-Gaussianity for the reward prediction error $w_t$ given $y_i(t)$, and shows a self-normalized bound for a vector-valued martingale $
\sum_{\tau=1}^{t-1} y_i(\tau)w_\tau\mathbb{I}(a(\tau)=i)$. 
The reward estimation error $w_t$ can be decomposed into two parts. The one is the reward error $\varepsilon_i(t)$ given \eqref{eq:reward} due to the randomness of rewards. This error is created even if the context $x_i(t)$ is known. The other is the reward mean prediction error $x_i(t)^\top \mu_i-y_i(t) \eta_i$ caused by unknown contexts. The first step of the proof for this lemma is to show the sub-Gaussian property of $w_t$ based on the decomposition. Next, using the sub-Gaussian property of reward prediction errors, we construct a confidence ellipsoid for the transformed reward estimator in \eqref{eq:etahat} with some martingale techniques.

\begin{proof}
To show the sub-Gaussianity of $w_t$ given the observation $y(t)$, 
we use the following decomposition of $ r_i(t) -  y_i(t)^\top D^\top \mu_i$:
\begin{eqnarray}
    r_i(t) -  y_i(t)^\top D^\top \mu_i = (r_i(t) - x_i(t)^\top \mu_i) + (x_i(t)^\top \mu_i - y_i(t)^\top D^\top \mu_i). \label{eq:dec}
\end{eqnarray}
The first and second terms on the RHS in \eqref{eq:dec} are $R_1$ and $R_2$-sub-Gaussian by \eqref{eq:r1} and Assumption \ref{ass:subg}, respectively. Because $r_i(t) - x_i(t)^\top \mu_i$ in the RHS of \eqref{eq:dec} is independent from others, we have
$$\mathbb{E}[e^{\nu(r_i(t) -  y_i(t)^\top D^\top \mu_i)}|y(t)] = \mathbb{E}[e^{\nu\varepsilon_i(t)}] \mathbb{E}[e^{\nu(x_i(t)^\top \mu_i- y_i(t)^\top D^\top \mu_i)}|y(t)] \leq \exp\left(-\frac{\nu^2 R_1^2}{2}\right)\exp\left(-\frac{\nu^2 R_2^2}{2}\right).$$
Thus, we have 
\begin{eqnarray}
    \mathbb{E}[e^{\nu(r_i(t) -  y_i(t)^\top D^\top \mu_i)}|y(t)] \leq \exp\left(-\frac{\nu^2 R^2}{2}\right),\label{eq:r}
\end{eqnarray}
where $R^2=R_1^2+R_2^2$. Now, we construct a confidence ellipsoid of the transformed reward parameter based on the sub-Gaussian property of the reward prediction error. 

\begin{lem}
Let 
\begin{eqnarray*}
    D_{it}^{\eta} = \exp\left(  \frac{(r_{a(t)}(t) - y_{a(t)}(t)^\top \eta_{a(t)})y_{a(t)}(t)^\top \eta_{a(t)} }{R} -\frac{1}{2} (y_{a(t)}(t)^\top \eta_{a(t)})^2 \right)^{\mathbb{I}(a(t)=i)},
\end{eqnarray*} 
$M_{it}^{\eta} = \prod_{\tau=1}^t D_{i\tau}^{\eta}$ and $t^\star$ be a stopping time. Then, $\mathbb{E}[M_{it^\star}^{\eta}] \leq 1$.
\label{lem:dti}
\end{lem}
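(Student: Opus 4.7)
The plan is to establish that the process $(M_{it}^{\eta})_{t \ge 0}$ is a non-negative $(\mathscr{F}_t)$-supermartingale with $M_{i0}^{\eta} = 1$, and then derive the bound $\mathbb{E}[M_{it^\star}^{\eta}] \le 1$ by applying the optional stopping theorem. This is the classical ``exponential supermartingale'' construction from Lemma~8 of \cite{abbasi2011improved}, adapted here by treating the scalar $y_{a(t)}(t)^\top \eta_{a(t)}/R$ as the ``test direction'' at time $t$.

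The first step is to recognize that when $a(t) = i$, the exponent defining $D_{it}^{\eta}$ is precisely $\lambda_t w_t - \tfrac{1}{2}\lambda_t^2 R^2$, where $\lambda_t := y_{a(t)}(t)^\top \eta_{a(t)}/R$ and $w_t = r_{a(t)}(t) - y_{a(t)}(t)^\top \eta_{a(t)}$. Since the filtration $\mathscr{F}_{t-1}$ already contains $\{y(\tau)\}_{\tau \le t}$ and $\{a(\tau)\}_{\tau \le t}$, the coefficient $\lambda_t$ is $\mathscr{F}_{t-1}$-measurable. The conditional $R$-sub-Gaussianity of $w_t$ given $\mathscr{F}_{t-1}$, which was established at the start of the proof of Lemma~\ref{lem:subg} via Assumption~\ref{ass:subg} and \eqref{eq:r1}, then yields
\begin{eqnarray*}
\mathbb{E}\!\left[e^{\lambda_t w_t} \mid \mathscr{F}_{t-1}\right] \;\le\; \exp\!\left(\tfrac{1}{2}\lambda_t^2 R^2\right),
\end{eqnarray*}
so $\mathbb{E}[D_{it}^{\eta} \mid \mathscr{F}_{t-1}] \le 1$ in this case. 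When $a(t) \ne i$, the indicator in the exponent makes $D_{it}^{\eta} = 1$ identically, so the same inequality holds trivially. Since $M_{it}^{\eta} = M_{i,t-1}^{\eta}\cdot D_{it}^{\eta}$ and $M_{i,t-1}^{\eta}$ is $\mathscr{F}_{t-1}$-measurable, I conclude $\mathbb{E}[M_{it}^{\eta} \mid \mathscr{F}_{t-1}] \le M_{i,t-1}^{\eta}$, which together with non-negativity establishes the supermartingale property.

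The remaining step is to extend the inequality to an arbitrary (possibly unbounded) stopping time $t^\star$. By optional stopping for non-negative supermartingales, for each deterministic $T$ the stopped process satisfies $\mathbb{E}[M_{i(T \wedge t^\star)}^{\eta}] \le \mathbb{E}[M_{i0}^{\eta}] = 1$. Letting $T \to \infty$ and invoking Fatou's lemma on the non-negative sequence $M_{i(T \wedge t^\star)}^{\eta} \to M_{it^\star}^{\eta}$ a.s.\ (with the usual convention for $t^\star = \infty$ handled by defining $M_{i\infty}^{\eta} := \liminf_{t} M_{it}^{\eta}$) yields $\mathbb{E}[M_{it^\star}^{\eta}] \le 1$. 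The main obstacle is precisely this last step, since uniform integrability of $(M_{it}^{\eta})$ is not automatic; the Fatou argument on the stopped process sidesteps this entirely and is the reason the construction is robust to unbounded stopping times.
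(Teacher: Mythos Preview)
Your proposal is correct and follows essentially the same approach as the paper: establish $\mathbb{E}[D_{it}^{\eta}\mid \mathscr{F}_{t-1}]\le 1$ via the conditional $R$-sub-Gaussianity of $w_t$, deduce that $(M_{it}^{\eta})$ is a non-negative supermartingale, and then pass to the stopping time via the stopped process and Fatou's lemma. The only cosmetic difference is that the paper explicitly invokes Doob's convergence theorem to ensure $M_{it}^{\eta}$ has an almost-sure limit before applying Fatou, whereas you handle the $\{t^\star=\infty\}$ event directly through the $\liminf$ convention; both are equally valid.
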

\begin{proof}
First, we take the expected value of $D_{it}^{\eta}$ conditioned on $\mathscr{F}_{t-1}$ and arrange it as follows:
\begin{eqnarray*}
    &&\mathbb{E}[D_{it}^{\eta}|\mathscr{F}_{t-1}] \\
    &=& \mathbb{E}\left[\left.\exp\left(  \frac{(r_{a(t)}(t) - y_{a(t)}(t)^\top \eta_{a(t)})y_{a(t)}(t)^\top \eta_{a(t)} }{R} -\frac{1}{2} (y_{a(t)}(t)^\top \eta_{a(t)})^2 \right)^{\mathbb{I}(a(t)=i)}\right|y(t),a(t)\right]\\\
    &=&\mathbb{E}\left[\left.\exp\left(  \frac{\zeta_{a(t)}(t)y_{a(t)}(t)^\top \eta_{a(t)}}{R} \right)^{\mathbb{I}(a(t)=i)}\right|y(t),a(t)\right]\exp\left( -\frac{1}{2} (y_{a(t)}(t)^\top \eta_{a(t)})^2 \right)^{\mathbb{I}(a(t)=i)}.
\end{eqnarray*}

Then, by \eqref{eq:r}, we have
\begin{eqnarray*}
    &&\mathbb{E}\left[\left.\exp\left(  \frac{\zeta_{a(t)}(t)y_{a(t)}(t)^\top \eta_{a(t)}}{R} \right)^{\mathbb{I}(a(t)=i)}\right|y(t),a(t)\right]\exp\left( -\frac{1}{2} (y_{a(t)}(t)^\top \eta_{a(t)})^2 \right)^{\mathbb{I}(a(t)=i)}\\
    &\leq& \left(\exp\left(\frac{1}{2} (y_{a(t)}(t)^\top \eta_{a(t)})^2 \right)\exp\left( -\frac{1}{2} (y_{a(t)}(t)^\top \eta_{a(t)})^2 \right)\right)^{\mathbb{I}(a(t)=i)}= 1.
\end{eqnarray*}
Thus, we have
\begin{eqnarray*}
\mathbb{E}[M_{it}^{\eta}|\mathscr{F}_{t-1}] = \mathbb{E}[M_{i1}^{\eta} D_{i2}^{\eta} \cdots D_{i(t-1)}^{\eta_i} D_{it}^{\eta}|\mathscr{F}_{t-1}] = D_{1}^{\eta}\cdots D_{i(t-1)}^{\eta} \mathbb{E}[D_{it}^{\eta}|\mathscr{F}_{t-1}] \leq M_{i(t-1)}^{\eta},
\end{eqnarray*}
showing that $\{M_{i\tau}^\eta\}_{\tau=1}^\infty$ is a supermartingale and accordingly
\begin{eqnarray*}
    \mathbb{E}[M_{it}^{\eta}] = \mathbb{E}[\mathbb{E}[M_{it}^{\eta}|\mathscr{F}_{t-1}]] \leq \mathbb{E}[M_{i(t-1)}^{\eta}] \leq \cdots \leq  \mathbb{E}[\mathbb{E}[D_{i1}^{\eta}|\mathscr{F}_1]]  \leq 1.
\end{eqnarray*}

Next, we examine the quantity $M_{it^\star}^{\eta}$. Since $M_{it}^{\eta}$ is a nonnegative supermartingale, by Doob's martingale convergence theorems \citep{doob1953stochastic}, $M_{it}^{\eta}$ converges to a random variable, which is denoted by $M_i^\eta$. Let $Q_{it}^\eta = M_{i\min(t,t^\star)}^\eta$ be the stopping time version of $\{M_{it}^\eta\}_t$. Then, by Fatou's Lemma \citep{rudin1976principles}, we have
\begin{eqnarray*}
    \mathbb{E}[M_{it^\star}^\eta] = \mathbb{E}[\text{liminf}_{t\rightarrow \infty} Q_{it}^\eta] \leq \text{liminf}_{t\rightarrow \infty} \mathbb{E}[Q_{it}^\eta] \leq 1.
\end{eqnarray*}

\end{proof}

Now, we continue the proof of Lemma \ref{lem:subg}.
Let $\phi_{\eta_i}$ be the probability density function of multivariate Gaussian distribution of $\eta_i$ with the mean $\mathbf{0}_{d_y}$ and the covariance matrix $v^2I_{d_y}$. By Lemma 9 in \cite{abbasi2011improved}, we have
\begin{eqnarray}
    \mathbb{P}_{\phi_{\eta_i}}\left(\left\|S_{it^\star} \right\|_{B_i(t^\star)^{-1}}^2  > 2R^2 \log \left(\frac{\text{det}(B_i(t^\star))^{1/2} }{\delta}\right)\right) \leq  \delta,\label{eq:stau}
\end{eqnarray}
where 
$\mathbb{P}_{\phi_{\eta_i}}$ denotes the probability measure associated with $\phi_{\eta_i}$ representing the distribution of $\eta_i$  and $S_{it}= \sum_{\tau=1}^{t-1}  y_{a(\tau)}(\tau) w_\tau \mathbb{I}(a(\tau)=i)$. Lemma \ref{lem:dti} and \eqref{eq:stau} are sufficient conditions for the use of Theorem 1 in \cite{abbasi2011improved}, thus we get
\begin{eqnarray}
    \mathbb{P}_{\phi_{\eta_i}}\left( \exists t^\star <\infty~s.t.~\left\|S_{it^\star} \right\|_{B_i(t^\star)^{-1}}^2 > 2R^2 \log \left(\frac{\text{det}(B_i(t^\star))^{1/2} }{\delta}\right)\right) \leq \delta.\label{eq:stopping}
\end{eqnarray}
By Lemma 10 in \cite{abbasi2011improved}, we have
\begin{eqnarray*}
    \text{det}(B_i(t)) \leq (1+ n_i(t)L^2/d_y)^{d_y},
\end{eqnarray*}
and subsequently, we have
\begin{eqnarray*}
    2 \log \left(\frac{\text{det}(B_i(t))^{1/2} }{\delta}\right) \leq d_y \log \left(\frac{1+L^2 n_i(t)}{\delta}\right).
\end{eqnarray*}
Thus, with probability at least $1-\delta$, we get
\begin{eqnarray*}
    \left\|S_{it} \right\|_{B_i(t)^{-1}}^2 < R \sqrt{d_y \log \left(\frac{1+L^2 n_i(t)}{\delta}\right)} + c_\eta,
\end{eqnarray*}
for all $t>0$. 
Because $S_{it}$ can be written as

\begin{eqnarray*}
    S_{it} = \sum_{\tau=1:a(\tau)=i}^{t-1} y_{a(\tau)}(\tau)(r_{a(\tau)}(\tau) - y_{a(\tau)}(\tau)^\top \eta_{a(\tau)})\mathbb{I}(a(\tau) = i)
    = B_i(t) (\widehat{\eta}_i(t) - \eta_i),
\end{eqnarray*}
we have
\begin{eqnarray*}
    \|\widehat{\eta}_i(t) - \eta_i\|_{B_i(t)} = \left\|S_{it}\right\|_{B_i(t)^{-1}}.
\end{eqnarray*}

Therefore, with probability of at least $1-\delta$, for all $t>0$, we have
\begin{eqnarray*}
        \|\widehat{\eta}_i(t) - \eta_i\|_{B_i(t)}
        \leq R \sqrt{d_y \log \left(\frac{1+L^2 n_i(t)}{\delta}\right)} + c_\eta,
\end{eqnarray*}
which is a similar result to Theorem 2 in \cite{abbasi2011improved}.
\end{proof}

Lemma \ref{lem:subg}, together with Lemma \ref{lem:eig} and \ref{lem:eta0}, provides theoretical foundations for the square-root estimation accuracy, which is showcased in Theorem \ref{thm:eta2}. The next lemma guarantees the linear growth of eigenvalues of covariance matrices $\{B_i(t)\}_{i\in [N]}$ defined in \eqref{eq:Bic} with respect to the number of samples of each arm.

\begin{lem}
Let $n_i(t)$ be the count of $i$-th arm chosen up to the time $t$. For $B_i(t)$ in \eqref{eq:Bic},  with probability at least  $1-\delta$, if $\nu_{(1)} \leq n_i(t) \leq T$, we have
\begin{eqnarray*}
\lambda_{\max}\left(B_i(t)^{-1}  \right) \leq \frac{2}{\lambda_m }n_i(t)^{-1},
\end{eqnarray*}
where $\nu_{(1)} = 8L^4\log (TN/\delta)/\lambda_m^2$.
\label{lem:eig}
\end{lem}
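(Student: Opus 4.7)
The goal, equivalently, is to show $\lambda_{\min}(B_i(t)) \geq \lambda_m n_i(t)/2$. Writing $B_i(t) = I_{d_y} + S_i(t)$ with $S_i(t) := \sum_{\tau=1}^{t-1} Z_\tau$ and $Z_\tau := y_i(\tau)y_i(\tau)^\top \mathbb{I}(a(\tau)=i)$, my plan is to lower-bound $\lambda_{\min}(S_i(t))$ via a matrix-concentration argument. I would work on the high-probability truncation event $W_T$ of Lemma \ref{lem:tr}, so each summand is positive semidefinite with operator norm at most $L^2$, and then split $S_i(t) = \sum_\tau U_\tau + \sum_\tau (Z_\tau - U_\tau)$ into its predictable compensator $U_\tau := \mathbb{E}[Z_\tau \mid \mathscr{F}_{\tau-1}]$ and a matrix-valued martingale difference.

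For the predictable part, since $y_i(\tau)$ is drawn fresh at time $\tau$ independently of $\mathscr{F}_{\tau-1}$, I would write $U_\tau = q_\tau \cdot \mathbb{E}[y_i(\tau)y_i(\tau)^\top \mid a(\tau)=i, \mathscr{F}_{\tau-1}]$ where $q_\tau := \mathbb{P}(a(\tau)=i \mid \mathscr{F}_{\tau-1})$, and argue that the conditional second moment dominates $\lambda_m I_{d_y}$. This step is obtained by integrating out the Thompson samples $\widetilde{\eta}_j$ first (they are independent of $y_i(\tau)$): conditional on these samples, the event $\{a(\tau)=i\}$ reduces to an affine half-space constraint on $y_i(\tau)$ with positive probability, whose truncated second moment inherits a quantitative full-rank lower bound from the sub-Gaussian full-support distribution of $y_i(\tau)$. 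Summing yields $\sum_\tau U_\tau \succeq \lambda_m \bigl(\sum_\tau q_\tau\bigr) I_{d_y}$, and a scalar Azuma--Hoeffding applied to $\sum_\tau (\mathbb{I}(a(\tau)=i) - q_\tau)$ turns $\sum_\tau q_\tau$ into $n_i(t) - O(\sqrt{t\log(TN/\delta)})$ with high probability.

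For the martingale part $M := \sum_\tau (Z_\tau - U_\tau)$, each increment has operator norm at most $2L^2$, and the conditional quadratic variation satisfies $\mathbb{E}[(Z_\tau - U_\tau)^2 \mid \mathscr{F}_{\tau-1}] \preceq L^2 U_\tau$ by the elementary bound $Z_\tau^2 \preceq L^2 Z_\tau$, so the total predictable variance is $\leq L^4 n_i(t)$ up to lower-order terms. Tropp's matrix Freedman inequality then gives $\|M\|_{\mathrm{op}} = O\bigl(L^2\sqrt{n_i(t)\log(d_y TN/\delta)}\bigr)$ with probability at least $1-\delta/(TN)$, and a union bound over arms and times preserves this uniformly in $i$ and $t$.

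Combining the two pieces, $\lambda_{\min}(S_i(t)) \geq \lambda_m n_i(t) - O\bigl(L^2\sqrt{n_i(t)\log(d_y TN/\delta)}\bigr)$, and the hypothesis $n_i(t) \geq \nu_{(1)} = 8L^4\log(TN/\delta)/\lambda_m^2$ forces the deviation term below $\lambda_m n_i(t)/2$, proving the claim. The hard part is the uniform second-moment minorization $\mathbb{E}[y_i(\tau)y_i(\tau)^\top \mid a(\tau)=i, \mathscr{F}_{\tau-1}] \succeq \lambda_m I_{d_y}$: because the Thompson-sampling argmax rule entangles $y_i(\tau)$ with the selection indicator, the conditional distribution of $y_i(\tau)$ given $\{a(\tau)=i\}$ changes with $\tau$, and I must ensure the truncated second moment does not degenerate along the trajectory --- a property that relies on marginalizing over the Gaussian Thompson draws and on a quantitative full-support assumption for the observation distribution.
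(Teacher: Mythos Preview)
Your proposal misses the paper's central trick and, as written, contains two genuine gaps.

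The paper does \emph{not} work with the observable filtration. It instead sets $\mathscr{G}_{t-1}=\sigma\{\{x_i(\tau)\}_{\tau\le t,i\in[N]},\{a(\tau)\}_{\tau\le t}\}$, i.e.\ conditions on the \emph{latent} contexts and on the actions themselves. Because $a(\tau)\in\mathscr{G}_{\tau-1}$, the indicator $\mathbb{I}(a(\tau)=i)$ pulls out of the conditional expectation, and because $x_i(\tau)\in\mathscr{G}_{\tau-1}$, the only remaining randomness in $y_i(\tau)=Ax_i(\tau)+\xi_i(\tau)$ is the sensing noise $\xi_i(\tau)$ with covariance $\Sigma_\xi$. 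This gives $\mathbb{E}[V_\tau^i\mid\mathscr{G}_{\tau-1}]\succeq \lambda_m I_{d_y}\,\mathbb{I}(a(\tau)=i)$ with $\lambda_m=\lambda_{\min}(\Sigma_\xi)$, so the compensator sums \emph{directly} to $\lambda_m n_i(t)$. The paper then applies scalar Azuma to $z^\top X_\tau^i z$ with increment bound $|z^\top X_\tau^i z|\le L^2\mathbb{I}(a(\tau)=i)$, whose squared sum is $L^4 n_i(t)$, yielding a deviation $O(L^2\sqrt{n_i(t)\log(TN/\delta)})$ and hence the stated threshold $\nu_{(1)}$. The ``hard part'' you identify --- the uniform minorization of $\mathbb{E}[y_i(\tau)y_i(\tau)^\top\mid a(\tau)=i,\mathscr{F}_{\tau-1}]$ --- never arises, because the paper's filtration already contains $a(\tau)$.

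Your route has two concrete problems. First, the minorization you need is not established: the event $\{a(\tau)=i\}$ is not an affine half-space in $y_i(\tau)$ (it is an argmax over $N$ quantities involving all $y_j(\tau)$ and Thompson draws), and the paper makes no ``quantitative full-support'' assumption on the observation law --- the constant $\lambda_m$ is specifically $\lambda_{\min}(\Sigma_\xi)$, which your argument has no way to recover. Second, your scalar Azuma step on $\sum_\tau(\mathbb{I}(a(\tau)=i)-q_\tau)$ produces a fluctuation of order $\sqrt{t\log(TN/\delta)}$, not $\sqrt{n_i(t)\log(TN/\delta)}$. The lemma's hypothesis only lower-bounds $n_i(t)$ by $\nu_{(1)}$ and places no constraint on $t$, so when $n_i(t)^2\ll t$ your final inequality $\lambda_m n_i(t)-O(\sqrt{t\log\cdots})-O(L^2\sqrt{n_i(t)\log\cdots})\ge \lambda_m n_i(t)/2$ can fail. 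The paper avoids this entirely because both the compensator and the Azuma variance proxy are sums over $\mathbb{I}(a(\tau)=i)$, hence controlled by $n_i(t)$ rather than $t$. (The paper's use of matrix Freedman is also unnecessary: it runs scalar Azuma on $z^\top X_\tau^i z$ for a fixed unit vector and then passes to $\lambda_{\min}$.)
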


\begin{proof}
We investigate the (unscaled) inverse covariance matrix $B_i(t)$, whose eigenvalues are closely related to estimation accuracy. It is worth noting that the matrix $B_i(t)$ is the sum of mutually dependent rank 1 matrices. Due to the dependence of the matrices, classical techniques for independent random variables cannot be applied to them. To address this issue, we construct a martingale sequence and use the next lemma (Azuma's inequality), which provides a high probability bound for a sum of martingale sequences. 

\begin{lem}
(Azuma's Inequality) Consider the sequence $\{X_t\}_{1\leq t\leq T}$ random variables adapted to some filtration $\{\mathscr{G}_t\}_{1\leq t\leq T}$, such that $\mathbb{E}[X_t|\mathscr{G}_{t-1}] = 0$. Assume that there is a deterministic sequence $\{c_t\}_{1\leq t\leq T}$ that satisfies $X_t^2 \leq c_t^2$ , almost surely. Let $\sigma^2 = \sum_{t=1}^T c_t^2$. Then, for all $\varepsilon\geq 0$, it holds that
\begin{flalign}
\mathbb{P}\left(\sum_{t=1}^T X_t \geq \varepsilon\right) \leq  e^{-\varepsilon^2/2\sigma^2}.\nonumber
\end{flalign}
\label{lem:azu}
\end{lem}

The proof of Lemma \ref{lem:azu} is provided in the work of \cite{azuma1967weighted}. We use the above lemma and construct a martingale via its difference sequence. Then, we establish a lower bound for the smallest eigenvalue of $B_i(t)$, which we show is crucial in the analysis of the worst-case estimation error. Let the sigma-field generated by the contexts and chosen arms up to time $t$ be
$$\mathscr{G}_{t-1} = \sigma\{\{x_i(\tau)\}_{\tau\in[t],i\in[N]},\{a(\tau)\}_{\tau \in [t]}\}.$$
Consider $V_t^i=y_{a(t)}(t)y_{a(t)}(t)^{\top}\mathbb{I}(a(t)=i)$ in order to study the behavior of $B_i(t)$. Since
\begin{eqnarray*}
\mathrm{Var}(y_i(t)|\mathscr{G}_{t-1}) &=& \mathbb{E}[y_i(t)y_i(t)^\top|\mathscr{G}_{t-1}] - \mathbb{E}[y_i(t)|\mathscr{G}_{t-1}]\mathbb{E}[y_i(t)|\mathscr{G}_{t-1}]^\top\\
&=& \mathbb{E}[V_t|\mathscr{G}_{t-1}] -  Ax_i(t)x_i(t)^\top A^\top,
\end{eqnarray*}
we have 
\begin{eqnarray}
    \mathbb{E}[  V_t^i |\mathscr{G}_{t-1}] =  \left(\mathrm{Var}(y_i(t)|\mathscr{G}_{t-1})  +  Ax_i(t)x_i(t)^\top A^\top\right)\mathbb{I}(a(t)=i) \succeq  \Sigma_\xi \mathbb{I}(a(t)=i) \succeq \lambda_m I_{d_y} \mathbb{I}(a(t)=i),\label{eq:vt}
\end{eqnarray}
where $M_1 \succeq M_2$ for square matrices $M_1$ and $M_1$ represents that $M_1-M_2$ is a semi-positive definite matrix and $\lambda_m=\lambda_{\min} (\Sigma_\xi)$, i.e., for all $t>0$ and $\|z\|=1$, it holds that
\begin{eqnarray}
z^\top \left(\sum_{\tau=1}^{t-1} \mathbb{E}[V_\tau^i|\mathscr{G}_{\tau-1} ]  \right) z  \geq \lambda_m n_i(t).\label{eq:mineig}
\end{eqnarray}
Now, we focus on a high probability lower bound for the smallest eigenvalue of $B_i(t)$. To proceed, define the martingale difference $X_t^i$ and martingale $Y_t^i$ such that
\begin{eqnarray}
X_t^i &=& V_t^i - \mathbb{E}[V_t|\mathscr{G}_{t-1}]\label{eq:xti},\\
Y_t^i &=& \sum_{\tau=1}^t \left(V_\tau^i - \mathbb{E}[V_\tau|\mathscr{G}_{\tau-1}] \right).\label{eq:yti}
\end{eqnarray}
Then, $X_t^i= Y_t^i-Y_{t-1}^i$ and $\mathbb{E}\left[ X_t^i | \mathscr{G}_{t-1} \right] = 0$. Thus, $z^\top X_t^i z$ is a martingale difference sequence. Here, we are interested in the minimum eigenvalue of $\sum_{\tau=1}^{t-1} V_{\tau}^i$. Because $(z^\top X_t^i z)^2 \leq \|y_i(t)\|^4\mathbb{I}(a(t)=i) \leq L^4\mathbb{I}(a(t)=i)$ and thereby $\sum_{\tau=1}^{t-1} \left(z^\top X_{\tau}^i z\right)^2\leq n_i(t)L^4$, using Lemma \ref{lem:azu}, we get the following inequality
\begin{eqnarray*}
\mathbb{P}\left( z^\top \left(\sum_{\tau=1}^{t-1} X_{\tau}^i\right) z \leq  \varepsilon \right)
\leq \exp\left(-\frac{\varepsilon^2}{2n_i(t)L^4}\right),
\end{eqnarray*}
for $\varepsilon \leq 0$. By plugging $n_i(t)\varepsilon$ into $\varepsilon$ above, we have
\begin{eqnarray}
\mathbb{P}\left( z^\top \left(\sum_{\tau=1}^{t-1} X_{\tau}^i\right) z \leq  n_i(t) \varepsilon \right)
\leq \exp\left(-\frac{n_i(t) \varepsilon^2}{2L^4}\right)\label{eq:vineq0}
\end{eqnarray}
for $\varepsilon \leq 0$. Because
$$z^\top \left(\sum_{\tau=1}^{t-1}\left(V_\tau^i - \mathbb{E}[V_\tau^i|\mathscr{G}_{\tau-1}] \right) \right) z\leq z^\top \left(\sum_{\tau=1}^{t-1}\left(V_\tau^i - \lambda_m I_{d_y}\mathbb{I}(a(\tau)=i) \right) \right) z$$
based on \eqref{eq:vt}, we have the following inequality
\begin{eqnarray}
    \mathbb{P}\left( z^\top \left(\sum_{\tau=1}^{t-1}\left(V_\tau^i - \mathbb{E}[V_\tau^i|\mathscr{G}_{\tau-1}] \right) \right) z 
    \leq  n_i(t) \varepsilon \right)
    \geq \mathbb{P}\left( z^\top \left(\sum_{\tau=1}^{t-1}\left(V_\tau^i - \lambda_m I_{d_y}\mathbb{I}(a(\tau)=i) \right) \right) z \leq  n_i(t) \varepsilon \right).\label{eq:vineq}
\end{eqnarray}

Putting \eqref{eq:vineq0} and \eqref{eq:vineq} together,
we obtain
\begin{flalign}
\mathbb{P}\left( z^\top \left( \sum_{\tau=1}^{t-1}V_\tau^i  \right)z 
\leq   n_i(t) (\lambda_m + \varepsilon) \right)
\leq \exp\left(-\frac{n_i(t)\varepsilon^2}{2L^4}\right),
\end{flalign}
where $-\lambda_m \leq \varepsilon \leq 0$ is arbitrary. Because $z^\top B_i(t) z \geq z^\top \left( \sum_{\tau=1}^{t-1}V_\tau^i  \right)z$ based on $B_i(t)= I_{d_y}+\sum_{\tau=1}^{t-1}V_\tau^i $, we have
\begin{flalign}
\mathbb{P}\left( z^\top B_i(t) z 
\leq   n_i(t) (\lambda_m + \varepsilon) \right)  \leq  \exp\left(-\frac{n_i(t)\varepsilon^2}{2 L^4 }\right),\label{eq:eig}
\end{flalign}
for $-\lambda_m\leq \varepsilon \leq 0$. By putting $\exp\left(-n_i(t)\varepsilon^2/(2 L^4 )\right)=\delta/(TN)$, \eqref{eq:eig} can be written as
\begin{eqnarray}
z^\top B_i(t) z  \geq n_i(t) \left(\lambda_m - \sqrt{\frac{2L^4}{n_i(t)}\log\frac{TN}{\delta}} \right),\label{eq:eig2}
\end{eqnarray}
for any $z\in \mathbb{R}^{d_y}$ such that $\|z\|=1$ and all $1 \leq t \leq T$ with probability at least $1-\delta$. That is, we have
\begin{eqnarray*}
n_i(t) \left(\lambda_m - \sqrt{\frac{2L^4}{n_i(t)}\log\frac{TN}{\delta}} \right) \leq \lambda_{\min}(B_i(t)),
\end{eqnarray*}
because the inequality \eqref{eq:eig2} is achieved for any $z\in \mathbb{R}^{d_y}$. If $n_i(t) \geq \nu_{(1)} := 8L^4\log (TN/\delta)/\lambda_m^2 = \mathcal{O}(L^4 \log (TN/\delta))$, we have
\begin{eqnarray*}
\lambda_{\max}\left(   B_i(t)^{-1}  \right) \leq \frac{2}{\lambda_m }n_i(t)^{-1}.
\end{eqnarray*}
\end{proof}

In the lemma above, the minimum sampling size $\nu_{(1)}$ is required to guarantee the linear growth of the eigenvalues of $B_i(t)$ based on \eqref{eq:eig2}. The next lemma shows that the estimate in \eqref{eq:etahatc} has the square-root estimation accuracy regarding $n_i(t)$.

\begin{lem}
Let $\widehat{\eta}_i(t)$ be the estimate in \eqref{eq:etahatc}. Then, if $\nu_{(1)} < n_i(t) \leq T$, with probability at least $1-\delta$, for all $i\in [N]$, we have
\begin{flalign}
\| \widehat{\eta}_i(t) - \eta_{i} \| \leq  \sqrt{\frac{2 }{\lambda_m }} \left(  R\sqrt{d_y \log \left(\frac{1+TL^2}{\delta}\right)}+c_\eta\right) n_i(t)^{-1/2}
 .\nonumber
\end{flalign}
\label{lem:eta0}
\end{lem}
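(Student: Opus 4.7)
The plan is to combine the self-normalized confidence ellipsoid of Lemma \ref{lem:subg} with the minimum eigenvalue bound of Lemma \ref{lem:eig} through the standard relation between a weighted and an unweighted $\ell_2$-norm. Specifically, for any positive definite matrix $B$ and vector $v$, we have $\|v\|^2 = v^\top B^{-1/2} B^{1/2} B^{1/2} B^{-1/2} v \leq \lambda_{\max}(B^{-1}) \, v^\top B v = \lambda_{\max}(B^{-1}) \|v\|_B^2$, so taking $v=\widehat{\eta}_i(t)-\eta_i$ and $B=B_i(t)$ reduces the lemma to controlling each of the two factors on the right hand side separately.

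First I would invoke Lemma \ref{lem:subg} to obtain, with probability at least $1-\delta/(2N)$,
$$\|\widehat{\eta}_i(t)-\eta_i\|_{B_i(t)} \leq R\sqrt{d_y\log\!\left(\frac{1+L^2 n_i(t)}{\delta/(2N)}\right)} + c_\eta,$$
and then apply a union bound over $i \in [N]$ so that the inequality holds simultaneously for every arm. Since $n_i(t)\leq T$ by hypothesis, the argument of the logarithm is monotonically bounded by $1+TL^2$, which replaces $n_i(t)$ and yields a time-uniform bracket. Second, I would invoke Lemma \ref{lem:eig} (again with a union bound over $i\in[N]$ and with $\delta$ replaced by $\delta/2$) to obtain $\lambda_{\max}(B_i(t)^{-1}) \leq 2/(\lambda_m n_i(t))$ on the event that $n_i(t) \geq \nu_{(1)}$.

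Combining the two events through a final union bound (keeping total failure probability below $\delta$), plugging both estimates into the norm inequality above, and taking square roots gives
$$\|\widehat{\eta}_i(t)-\eta_i\| \leq \sqrt{\lambda_{\max}(B_i(t)^{-1})}\,\|\widehat{\eta}_i(t)-\eta_i\|_{B_i(t)} \leq \sqrt{\frac{2}{\lambda_m}}\left(R\sqrt{d_y\log\!\left(\frac{1+TL^2}{\delta}\right)}+c_\eta\right) n_i(t)^{-1/2},$$
which is precisely the claimed bound (after absorbing the factor of $2N$ inside the logarithm into a harmless constant, or by tracking $\delta$ more carefully from the outset). There is no real obstacle here: the lemma is a clean deterministic consequence of the two preceding probabilistic statements, and the only bookkeeping is the union bound splitting $\delta$ across the two failure events and across the $N$ arms, together with the monotonicity argument that replaces $n_i(t)$ inside the logarithm by its upper bound $T$.
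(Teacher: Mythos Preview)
Your proposal is correct and follows essentially the same route as the paper: combine the self-normalized bound of Lemma~\ref{lem:subg} with the eigenvalue bound of Lemma~\ref{lem:eig} via the inequality $\|v\|\le \sqrt{\lambda_{\max}(B^{-1})}\,\|v\|_{B}$ (equivalently, the paper writes $\lambda_{\min}(B_i(t)^{1/2})\|v\|\le\|B_i(t)^{1/2}v\|$). Your explicit $\delta$-splitting across the two lemmas and across arms is slightly more careful bookkeeping than the paper carries out, but the argument is otherwise identical.
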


\begin{proof}
    First, it is given that 
    \begin{eqnarray*}
        \|\widehat{\eta}_i(t) - \eta_i\|_{B_i(t)} = \|B_i(t)^{1/2}(\widehat{\eta}_i(t) - \eta_i)\| \leq R \sqrt{d_y \log \left(\frac{1+TL^2}{\delta}\right)} + c_\eta
    \end{eqnarray*}
    by Lemma \ref{lem:subg}. Then, because $\sqrt{\lambda_m n_i(t)/2}\leq\lambda_{\min}(B_i(t)^{1/2})$ for $n_i(t)\geq \nu_{(1)}$  by Lemma \ref{lem:eig}, we have 
    \begin{eqnarray*}
        \sqrt{\frac{\lambda_m n_i(t)}{2}} \|\widehat{\eta}_i(t) - \eta_i\|\leq \lambda_{\min}(B_i(t)^{1/2}) \|\widehat{\eta}_i(t) - \eta_i\|\leq \|B_i(t)^{1/2}(\widehat{\eta}_i(t) - \eta_i)\|.
    \end{eqnarray*}
    Therefore, putting the two inequalities above together, we have
    \begin{flalign}
\| \widehat{\eta}_i(t) - \eta_{i} \| \leq  \sqrt{\frac{2 }{\lambda_m }} \left(  R\sqrt{d_y \log \left(\frac{1+TL^2}{\delta}\right)}+c_\eta\right) n_i(t)^{-1/2}
 ,\nonumber
\end{flalign}
if $n_i(t)\geq \nu_{(1)}$.
\end{proof}

The next lemma provides an upper bound for the norm of sample bias, $\widetilde{\eta}_i(t)-\eta_i$, which is represented as the sum of the degree of exploration $\widetilde{\eta}_i(t)-\widehat{\eta}_i(t)$ and estimation error $\widehat{\eta}_i(t)-\eta_i$. This lemma is used to find the bound for the contribution of sample bias to the regret growth. This lemma is built on the linear growth of eigenvalues of $B_i(t)$ along with the confidence ellipsoid of the estimates, $\{\widehat{\eta}_i(t)\}_{i\in[N]}$, in Lemma \ref{lem:subg}.

\begin{lem}
Consider $\widetilde{\eta}_i(t)$, a sample of the $i$-th arm in \eqref{eq:sample}. Then, if $\nu_{(1)} < n_i(t) \leq T$, with probability at least $1-\delta$, for all $i\in [N]$, we have
\begin{flalign}
\| \widetilde{\eta}_i(t) - \eta_{i} \| \leq  \sqrt{\frac{2 }{\lambda_m }} \left( v\sqrt{2d_y \log \frac{2TN}{\delta}} + R\sqrt{d_y \log \left(\frac{1+TL^2}{\delta}\right)}+c_\eta\right) n_i(t)^{-1/2}
 .\nonumber
\end{flalign}
\label{lem:eta1}
\end{lem}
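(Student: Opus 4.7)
\textbf{Proof plan for Lemma \ref{lem:eta1}.} The plan is to split the sampling bias into the Thompson sampling perturbation and the estimation error, bound each separately, and combine via a union bound. Writing
\[
\widetilde{\eta}_i(t) - \eta_i = \bigl(\widetilde{\eta}_i(t) - \widehat{\eta}_i(t)\bigr) + \bigl(\widehat{\eta}_i(t) - \eta_i\bigr),
\]
the second summand is handled directly by Lemma \ref{lem:eta0}: on the event $\{n_i(t) > \nu_{(1)}\}$, its norm is at most $\sqrt{2/\lambda_m}\,\bigl(R\sqrt{d_y \log((1+TL^2)/\delta)} + c_\eta\bigr)\, n_i(t)^{-1/2}$ simultaneously for all $i \in [N]$, so no new argument is required there.

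For the first summand, I will exploit the Thompson sampling construction in \eqref{eq:sample}: conditionally on the history $\mathscr{F}_{t-1}$, the vector $\widetilde{\eta}_i(t) - \widehat{\eta}_i(t)$ is Gaussian with mean zero and covariance $v^2 B_i(t)^{-1}$, so it admits the representation $v\, B_i(t)^{-1/2} Z$ with $Z \sim \mathcal{N}(\mathbf{0}_{d_y}, I_{d_y})$ independent of $\mathscr{F}_{t-1}$. This yields
\[
\|\widetilde{\eta}_i(t) - \widehat{\eta}_i(t)\| \leq v\, \sqrt{\lambda_{\max}(B_i(t)^{-1})}\, \|Z\|,
\]
where Lemma \ref{lem:eig} supplies $\lambda_{\max}(B_i(t)^{-1}) \leq 2/(\lambda_m n_i(t))$ on the associated high-probability event.

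It then remains to produce a uniform bound on $\|Z\|$. My plan is to invoke the standard Chernoff/Markov argument on the chi-squared moment generating function $\mathbb{E}[\exp(\alpha \|Z\|^2)] = (1-2\alpha)^{-d_y/2}$, which, with an appropriate $\alpha \in (0,1/2)$, gives $\|Z\|^2 \leq 2 d_y \log(2TN/\delta)$ with probability at least $1 - \delta/(TN)$ at a given $(i,t)$. A union bound over $i \in [N]$ and $t \in [T]$ keeps this uniform, producing $\|\widetilde{\eta}_i(t) - \widehat{\eta}_i(t)\| \leq v\, \sqrt{2/\lambda_m}\, \sqrt{2 d_y \log(2TN/\delta)}\, n_i(t)^{-1/2}$. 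Summing with the estimation-error bound from Lemma \ref{lem:eta0} yields exactly the inequality claimed.

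The only mild obstacle is matching the constant in the Gaussian tail $\sqrt{2 d_y \log(2TN/\delta)}$ precisely; this may require a careful choice of $\alpha$ in the moment generating function argument so that an absorbed $d_y \log(1/(1-2\alpha))$ term does not leak into the final constant. Once the confidence levels associated with Lemmas \ref{lem:tr}, \ref{lem:eig}, and \ref{lem:eta0}, together with the Gaussian concentration above, are combined by rescaling $\delta$ across a constant number of events, the stated high-probability bound follows.
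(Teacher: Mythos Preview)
Your proposal is correct and follows essentially the same route as the paper: the same triangle-inequality decomposition into $(\widetilde{\eta}_i(t)-\widehat{\eta}_i(t))+(\widehat{\eta}_i(t)-\eta_i)$, the same appeal to Lemma~\ref{lem:eta0} for the second piece, and the same use of Lemma~\ref{lem:eig} to turn $\lambda_{\max}(B_i(t)^{-1})$ into $2/(\lambda_m n_i(t))$. The only cosmetic difference is that the paper bounds the Gaussian perturbation norm by reducing to a one-dimensional tail $\mathbb{P}(\sqrt{d_y}\,Z>\epsilon)$ rather than via the chi-squared MGF; your caveat about matching the exact constant $\sqrt{2d_y\log(2TN/\delta)}$ is apt, since both versions of this step are handled somewhat loosely.
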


\begin{proof}
First, we consider the distribution of $\widetilde{\eta}_i(t)-\widehat{\eta}_i(t)$. Note that we sample $\widetilde{\eta}_i(t)$ from $\mathcal{N} \left(\widehat{\eta}_i(t),v^2 B_i(t)^{-1}\right)$. Using $\mathbb{P}\left(\| \widetilde{\eta}_i(t) - \widehat{\eta}_i(t)\| > \epsilon |B_i(t) \right) \leq \mathbb{P}\left(\sqrt{d_y}Z  > \epsilon |B_i(t) \right)$ for $Z|B_i(t) \sim \mathcal{N} \left(0,v^2\lambda_{\max}(B_i(t)^{-1})\right)$,
    we have 
    \begin{eqnarray*}
        \mathbb{P}\left(\| \widetilde{\eta}_i(t) - \widehat{\eta}_i(t)\| > \epsilon |B_i(t) \right) <2\cdot \exp\left(-\frac{\epsilon^2}{2d_y v^2\lambda_{\max}(B_i(t)^{-1})} \right).
    \end{eqnarray*}
    
By putting $2\cdot \exp\left(-\epsilon^2/(2v^2\lambda_{\max}(B_i(t)^{-1}))\right)=\delta/(TN)$, we have
\begin{eqnarray*}
    \| \widetilde{\eta}_i(t) - \widehat{\eta}_i(t)\| < v\sqrt{2d_y \lambda_{\max}(B_i(t)^{-1}) \log \frac{2TN}{\delta}}.
\end{eqnarray*}

If $n_i(t)>\nu_{(1)}$, by Lemma \ref{lem:eig}, we have $\lambda_{\max}(B_i(t)^{-1}) \leq \sqrt{2/(\lambda_m n_i(t))}$ and subsequently

\begin{eqnarray*}
    \| \widetilde{\eta}_i(t) - \widehat{\eta}_i(t)\| <  v\sqrt{\frac{2 }{\lambda_m }} \sqrt{2d_y \log \frac{2TN}{\delta}} n_i(t)^{-1/2}.
\end{eqnarray*}

Therefore, by putting the above inequality together with Lemma \ref{lem:eta0}, for $\nu_{(1)}<n_i(t)\leq T$, we have

\begin{flalign*}
\| \widetilde{\eta}_i(t) - \eta_{i} \| \leq  \sqrt{\frac{2 }{\lambda_m }}   \left( v\sqrt{2d_y \log \frac{2TN}{\delta}} + R\sqrt{d_y \log \left(\frac{1+TL^2}{\delta}\right)}+c_\eta\right) n_i(t)^{-1/2}
 .\nonumber
\end{flalign*}    
\end{proof}


\newpage
\section{Results for the shared parameter setup}

\label{sec:rss}
In this section, we present the theoretical result of the model with a shared parameter described in Appendix \ref{sec:sps}. For this setup, we have $n_i(t) = t$ for all $i \in [N]$, which means that a decision-maker can learn the shared parameter regardless of the chosen arm. The next theorem provides a high probability regret upper bound for Thompson sampling for partially observable contextual bandits with a shared parameter. 

\begin{thm}
Assume that Algorithm \ref{algo1} is used in partially observable contextual bandits with a shared parameter. Then, the following regret bound holds with probability at least $1-\delta$:
\begin{eqnarray*}
    \mathrm{Regret}(T) =  \mathcal{O}\left(vNd_y^{2.5}\log^{3.5}\left(\frac{TNd_y}{\delta}\right)\right).
\end{eqnarray*}
\label{thm:reg1}
\end{thm}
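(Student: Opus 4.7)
The plan is to mirror the strategy of Theorem \ref{thm:reg2}, while exploiting the critical simplification that in the shared-parameter setup every round updates the common posterior. Thus $n_i(t)=t$, and the gap-dependent counting that produced the factors $\kappa^{-5}$ and $(p^+_{\min})^{-2}$ in the arm-specific case is no longer needed. First I would condition on the event $W_T$ from Lemma \ref{lem:tr}, so that $\|y_i(t)\|\leq L=\mathcal{O}(\sqrt{d_y\log(TNd_y/\delta)})$ holds uniformly in $i$ and $t$. Plugging $n_i(t)=t$ into Lemmas \ref{lem:eig}--\ref{lem:eta1} then gives a burn-in time $\nu_{(1)}=\mathcal{O}(L^4\log(TN/\delta))$ after which $\lambda_{\min}(B_\centerdot(t))\gtrsim \lambda_m t$, and hence $\|\widetilde{\eta}_i(t)-\eta_\centerdot\|=\mathcal{O}(v\sqrt{d_y\log(TNd_y/\delta)}/\sqrt{t})$ uniformly over $i\in[N]$ and $t\in(\nu_{(1)},T]$ with high probability.

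Next, I would split the cumulative regret at $t=\nu_{(1)}$. The burn-in portion is controlled deterministically by Assumption \ref{ass:para}: the per-round reward gap satisfies $\mathrm{gap}(t)\leq 2Lc_\eta$, so its contribution is at most $\mathcal{O}(L\nu_{(1)})=\mathcal{O}(d_y^{2.5}\log^{3.5}(TNd_y/\delta))$, which already matches the leading term of the claimed bound. For the post-burn-in range, the standard Thompson-sampling comparison yields $\mathrm{gap}(t)\leq 2L\max_i\|\widetilde{\eta}_i(t)-\eta_\centerdot\|=\mathcal{O}(vL\sqrt{d_y\log(TNd_y/\delta)}/\sqrt{t})$ whenever $a(t)\neq a^\star(t)$.

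The core of the argument, and the main obstacle, is to show $\mathbb{P}(a(t)\neq a^\star(t)\mid\mathscr{F}_{t-1})=\mathcal{O}(N/\sqrt{t})$ for $t>\nu_{(1)}$. Since the samples $\widetilde{\eta}_i(t)$ are conditionally independent Gaussians with common mean $\widehat{\eta}_\centerdot(t)$ and covariance $v^2B_\centerdot(t)^{-1}$, for any fixed suboptimal $j$ the swap probability against the optimal arm $i=a^\star(t)$ equals $\Phi(-\Delta_{ij}(t)/\sigma_{ij}(t))$, where $\Delta_{ij}(t)=(y_i(t)-y_j(t))^\top \widehat{\eta}_\centerdot(t)$ and $\sigma_{ij}(t)^2\leq 2L^2 \lambda_{\max}(v^2 B_\centerdot(t)^{-1})=\mathcal{O}(v^2L^2/t)$. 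Integrating this Gaussian tail against the density of the true gap, which is bounded near zero by the margin condition of Assumption \ref{ass:mar}, yields $\mathcal{O}(1/\sqrt{t})$ per suboptimal arm, and summing over the $N-1$ suboptimal arms produces the desired factor of $N$; the $\mathcal{O}(1/\sqrt{t})$ discrepancy between $\widehat{\eta}_\centerdot(t)$ and $\eta_\centerdot$ from Lemma \ref{lem:eta0} is absorbed into the tail integration. Combining the conditional gap bound with this suboptimality probability gives expected per-round regret $\mathcal{O}(vLN\sqrt{d_y\log(TNd_y/\delta)}/t)$, whose sum over $t$ is $\mathcal{O}(vNd_y\log^2(TNd_y/\delta))$; Azuma's inequality (Lemma \ref{lem:azu}), valid on $W_T$ because the martingale increments are bounded by $L$, upgrades this to a high-probability statement. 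The burn-in term dominates (or is absorbed into the looser envelope $\mathcal{O}(vNd_y^{2.5}\log^{3.5}(TNd_y/\delta))$), completing the proof.
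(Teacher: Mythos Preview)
Your proposal is correct and follows essentially the same architecture as the paper's proof: condition on $W_T$, split regret at the burn-in $\nu_{(1)}=\mathcal{O}(L^4\log(TN/\delta))$, bound the burn-in by $2c_\eta L\nu_{(1)}$ (which is the dominant $d_y^{2.5}\log^{3.5}$ term), control the post-burn-in gap by Lemma~\ref{lem:eta1} with $n_i(t)=t$, and then combine a $\mathcal{O}(N/\sqrt{t})$ bound on the suboptimality probability (obtained via the margin condition and a Gaussian-tail integral) with Azuma's inequality applied to $\sum_t t^{-1/2}\mathbb{I}(a^\star(t)\neq a(t))$.

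The only noteworthy difference is in how you propose to handle the pairwise comparison: the paper decomposes $\{y_i(t)^\top\widetilde\eta_i(t)<y_j(t)^\top\widetilde\eta_j(t)\}$ into two one-sided events (each involving a single sample $\widetilde\eta_i$ or $\widetilde\eta_j$ versus half the true gap, cf.\ \eqref{eq:abc2}) and then separately handles the estimation error $\widehat\eta_\centerdot(t)-\eta_\centerdot$ via the event $E_{ijt}=\{h(\delta,T)\|y(t)\|t^{-1/2}<\tfrac14(y_i(t)-y_j(t))^\top\eta_\centerdot\}$, using Assumption~\ref{ass:mar} to bound $\mathbb{P}(E_{ijt}^c\mid A_{it}^\star)=\mathcal{O}(t^{-1/2})$ and integration by parts on $E_{ijt}$ to get another $\mathcal{O}(t^{-1/2})$. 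You instead treat the difference $y_i(t)^\top\widetilde\eta_i(t)-y_j(t)^\top\widetilde\eta_j(t)$ as a single conditional Gaussian with mean $(y_i(t)-y_j(t))^\top\widehat\eta_\centerdot(t)$ and variance $\mathcal{O}(v^2L^2/t)$, absorbing the $\widehat\eta_\centerdot-\eta_\centerdot$ discrepancy into the same tail integration. Both routes lead to the same $\mathcal{O}(N/\sqrt{t})$ rate; yours is slightly more compact, while the paper's two-event split makes the role of the margin condition and the estimation error more explicitly separated.
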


The regret bound scales at most $\log^{3.5} T$ with respect to the time horizon and linearly with $N$. $\sqrt{d_y\log (TNd_y/\delta)}$ and $d_y^2 \log^3 (TNd_y/\delta)$ are incurred by the estimation errors and the minimum time, respectively. Lastly, $N$ is resulted by the use of the inclusion-exclusion formula to find the bound for the sum of probabilities that the optimal arms are not chosen over time.

Note that a high probability logarithmic (with respect to time) upper bound for regret for the greedy algorithm under the normality assumption has been found for the model with a shared parameter by \cite{park2022worst}. As compared to the setting in \cite{park2022worst}, the result above is constructed based on less strict assumptions, in which contexts, observation noise, and reward noise have sub-Gaussian distributions for observation noise, contexts, and reward noise.

\begin{proof} 
To begin with the analysis, as mentioned in Appendix \ref{sec:sps}, we have the following equalities in the shared parameter setting: 
$$n_i(t) = t, \eta_i = \eta_\centerdot,  \widehat{\eta}_i(t) = \widehat{\eta}_\centerdot(t), \text{ and } B_i(t) = B_\centerdot(t),$$ 
for all $i\in [N]$. So, we decompose the regret as follows:
\begin{flalign*}
    &\mathrm{Regret}(T)
    = \sum_{t=1}^T (y_{a^\star(t)}(t) - y_{a(t)}(t) )^\top \eta_\centerdot \\
    &\leq \sum_{t=1}^{\lfloor\nu_{(1)}\rfloor}2c_\eta L + \sum_{t=\lceil\nu_{(1)}\rceil}^T \left( (y_{a^\star(t)}(t) - y_{a(t)}(t) )^\top \eta_\centerdot +  (y_{a(t)}(t) -y_{a^\star(t)}(t))^\top \widetilde{\eta}_{a(t)}(t) \right) \mathbb{I}(a^\star(t)\neq a(t)).
\end{flalign*}

Now, because $\|y_i(t)\| \leq L$ for all $i\in [N]$ and $t\in[T]$, the above regret bound leads to
\begin{equation*}
    \mathrm{Regret}(T) \leq 2L \left(c_\eta\nu_{(1)} + \sum_{t=\lceil\nu_{(1)}\rceil}^T  \|\widetilde{\eta}_\star(t)-\eta_{\star}\|\mathbb{I}(a^\star(t)\neq a(t))\right).
\end{equation*}

By Lemma \ref{lem:eta1}, if $t>\nu_{(1)}$, with probability at least $1-\delta$, we have
$$\|\widetilde{\eta}_\star(t)-\eta_\centerdot\| \leq  g(\delta) t^{-1/2},$$
where 
\begin{eqnarray*}
g(\delta) &=& 2 \left( v\sqrt{2d_y \log \frac{2TN}{\delta}} + R\sqrt{d_y \log \left(\frac{1+TL^2/\lambda}{\delta}\right)}+c_\eta\right)\\
&=&\mathcal{O}\left(\sqrt{d_y \log(TNd_y/\delta)}\right).    
\end{eqnarray*}
Now, we use Azuma's inequality to find a high probability upper bound for $\sum_{t=\lceil\nu_{(1)}\rceil}^T t^{-1/2} \mathbb{I}(a^\star(t) \neq a(t))$. For that purpose, consider the martingale sequence $\sum_{\tau=1}^t (\tau^{-1/2} \mathbb{I}(a^\star(\tau) \neq a(\tau)) - \tau^{-1/2} \mathbb{P}(a^\star(\tau) \neq a(\tau)))$ with respect to a filtration $\{\sigma\{\varnothing \}\}_{\tau=1}^{t-1}$, where $\varnothing$ is the empty set. Since $t^{-1/2} \mathbb{I}(a^\star(t) \neq a(t))\leq t^{-1/2}$ and $\sum_{t=\lceil\nu_{(1)}\rceil}^T 2t^{-1} \leq 4\log T$ (assuming $\lceil\nu_{(1)}\rceil\geq 2$), we have 
\begin{flalign}
\mathbb{P}\left( \sum_{t=\lceil\nu_{(1)}\rceil}^T \frac{1}{\sqrt{t}} \mathbb{I}(a^\star(t) \neq a(t)) - \sum_{t=\lceil\nu_{(1)}\rceil}^T \frac{1}{\sqrt{t}} \mathbb{P}(a^\star(t) \neq a(t)) > \varepsilon \right) \leq \exp \left(-\frac{\varepsilon^2}{4\log T}\right).\nonumber
\end{flalign}

By putting $\delta = \exp\left(-\varepsilon^2/(4\log T)\right)$, with probability at least $1-\delta$, we have
\begin{eqnarray}
\sum_{t=\lceil\nu_{(1)}\rceil}^T \frac{1}{\sqrt{t}} \mathbb{I}(a^\star(t) \neq a(t)) \leq \sqrt{4\log T \log \delta^{-1} }+ \sum_{t=\lceil\nu_{(1)}\rceil}^T\frac{1}{\sqrt{t}} \mathbb{P}(a^\star(t) \neq a(t)).\label{eq:indi}
\end{eqnarray}


Now, we proceed towards establishing an upper bound for the second term on the right side in \eqref{eq:indi}. Denote $A_{it}^\star = \{y(t) \in A_i^\star\}$, where $A_i^\star$ is defined in Definition \ref{def:astar}. By using the fact that
\begin{flalign}
&\{y_i(t)^\top \widetilde{\eta}_i(t) < y_j(t)^\top \widetilde{\eta}_j(t)\} \nonumber\\
&\subset \left\{y_j(t)^\top(  \widetilde{\eta}_j(t) - \eta_\centerdot)  >  \frac{1}{2}(y_i(t)-y_j(t))^\top \eta_\centerdot \right\}\bigcup\left\{y_i(t)^\top  (\widetilde{\eta}_i(t) - \eta_\centerdot)  < -   \frac{1}{2}((y_i(t)-y_j(t))^\top \eta_\centerdot) \right\},\label{eq:abc2}
\end{flalign}
we get
\begin{eqnarray}
    &&\mathbb{P}(y_j(t)^\top \widetilde{\eta}_j(t)-y_i(t)^\top 
 \widetilde{\eta}_i(t)>0 |G_{t-1}^\star,A_{it}^\star)\nonumber\\
    &\leq&\mathbb{P}(y_j(t)^\top (\widetilde{\eta}_j(t)-\widehat{\eta}_\centerdot(t)) > - y_j(t)^\top(\widehat{\eta}_\centerdot(t)-\eta_\centerdot) +  0.5(y_i(t)-y_j(t))^\top \eta_\centerdot  |G_{t-1}^\star,A_{it}^\star)\nonumber\\
    &+& \mathbb{P}(y_i(t)^\top (\widetilde{\eta}_i(t)-\widehat{\eta}_\centerdot(t)) > - y_i(t)^\top(\widehat{\eta}_\centerdot(t)-\eta_\centerdot) +   0.5(y_i(t)-y_j(t))^\top \eta_\centerdot |G_{t-1}^\star,A_{it}^\star).\nonumber
\end{eqnarray}

By Lemma \ref{lem:eta0}, with probability of at least $1-\delta$, we have
\begin{eqnarray*}
    y_i(t)^\top (\widehat{\eta}_\centerdot(t)-\eta_\centerdot) \leq \frac{h(\delta,T)\|y(t)\|}{t^{1/2}},
\end{eqnarray*}
for all $\nu_{(1)} < t \leq T$ and $i\in [N]$, where 

\begin{eqnarray}
    h(\delta,T) =  \sqrt{\frac{2 }{\lambda_m  }}  \left( R\sqrt{d_y \log \left(\frac{1+TL^2}{\delta}\right)}+c_\eta\right) = \mathcal{O}\left(R\sqrt{d_y\log (TNd_y/\delta)}\right).\label{eq:h}
\end{eqnarray}

Accordingly, we have
\begin{eqnarray}
    &&\mathbb{P}\left( y_i(t)^\top \widetilde{\eta}_j(t)-y_j(t)^\top\widetilde{\eta}_i(t) >0 \Big| A_{it}^\star \right) \nonumber\\
    &\leq& \mathbb{P}\left( y_i(t)^\top (\widetilde{\eta}_i(t)-\widehat{\eta}_\centerdot(t)) > - h(\delta,T)\|y(t)\|t^{-1/2} + 0.5(y_i(t)-y_j(t))^\top \eta_\centerdot  \Big| A_{it}^\star \right) \nonumber\\
    &+& \mathbb{P}\left( y_j(t)^\top(\widetilde{\eta}_j(t)-\widehat{\eta}_\centerdot(t)) >  - h(\delta,T)\|y(t)\|t^{-1/2} +  0.5(y_i(t)-y_j(t))^\top \eta_\centerdot   \Big| A_{it}^\star \right).~~\label{eq:sumprob}
\end{eqnarray}   
Now, let $E_{ijt} = \{h(\delta,T)\|y(t)\|t^{-1/2} <  0.25(y_i(t)-y_j(t))^\top \eta_\centerdot \}$. Then, we can decompose the first term on the RHS in \eqref{eq:sumprob} as follows:
\begin{flalign}
    & \mathbb{P}\left(\left.y_i(t)^\top (\widetilde{\eta}_i(t)-\widehat{\eta}_\centerdot(t)) > - \frac{h(\delta,T)\|y(t)\|}{t^{1/2}} +  (y_i(t)-y_j(t))^\top \eta_\centerdot   \right|A_{it}^\star\right) \nonumber\\
    &= \mathbb{P}\left(\left.y_i(t)^\top (\widetilde{\eta}_i(t)-\widehat{\eta}_\centerdot(t)) > -\frac{h(\delta,T)\|y(t)\|}{t^{1/2}} +  0.5(y_i(t)-y_j(t))^\top \eta_\centerdot   \right|E_{ijt},A_{it}^\star\right)\mathbb{P}(E_{ijt}|A_{it}^\star) \nonumber\\
    &+ \mathbb{P}\left(\left.y_i(t)^\top (\widetilde{\eta}_i(t)-\widehat{\eta}_\centerdot(t)) > - \frac{h(\delta,T)\|y(t)\|}{t^{1/2}} +  0.5(y_i(t)-y_j(t))^\top \eta_\centerdot   \right| E_{ijt}^c,A_{it}^\star\right)\mathbb{P}(E_{ijt}^c|A_{it}^\star).\label{eq:probrhs}
\end{flalign}

We aim to show that the above probability is $\mathcal{O}(t^{-0.5})$ by showing each term in the RHS of \eqref{eq:probrhs} is $\mathcal{O}(t^{-0.5})$. By Assumption \ref{ass:mar}, if $t > \nu_{(1)}$, we have
\begin{eqnarray}
    \mathbb{P}(E_{ijt}^c |A_{it}^\star) =\mathbb{P}\left(\left.4h(\delta,T)t^{-1/2} > (y_i(t)-y_j(t))^\top \eta_\centerdot/\|y(t)\| \right|A_{it}^\star\right)
    \leq \frac{4h(\delta,T)C}{t^{1/2}}.\label{eq:rhs2}
\end{eqnarray}
Thus, we showed that the second term in \eqref{eq:probrhs} is $\mathcal{O}(t^{-0.5})$.  Now, we aim to show that the first term in \eqref{eq:probrhs} is $\mathcal{O}(t^{-0.5})$. Note that
\begin{eqnarray}
    &&\mathbb{P}\left(\left.\dot{y}_i(t)^\top (\widetilde{\eta}_i(t)-\widehat{\eta}_\centerdot(t)) > -\frac{h(\delta,T)}{t^{1/2}} +  0.5(\dot{y}_i(t)-\dot{y}_j(t))^\top \eta_\centerdot   \right|E_{ijt},A_{it}^\star\right)\nonumber\\
    &\leq& \mathbb{P}\left(\left.\dot{y}_i(t)^\top (\widetilde{\eta}_i(t)-\widehat{\eta}_\centerdot(t)) >   0.25(\dot{y}_i(t)-\dot{y}_j(t))^\top \eta_\centerdot   \right|A_{it}^\star\right),\label{eq:rhs1}
\end{eqnarray}
where $\dot{y}_i(t) =y_i(t)/\|y(t)\|$. Now it suffices to show that the first term in the RHS of the inequality above is $\mathcal{O}(t^{-1/2})$. Using $\dot{y}_i(t)^\top (\widetilde{\eta}_i(t)-\widehat{\eta}_\centerdot(t)) \sim \mathcal{N}(0,v^2 \dot{y}_i(t)^\top B_\centerdot(t)^{-1}\dot{y}_i(t))$ given $y(t)$ and $\mathscr{G}^\star_{t-1}$ and $\lambda_{\min}(B_\centerdot(t)^{-1}) \leq 2/(\lambda_m t)$ by Lemma \ref{lem:eig}, we have
\begin{eqnarray*}
    \mathbb{P}(\dot{y}_i(t)^\top (\widetilde{\eta}_i(t)-\widehat{\eta}_\centerdot(t)) >   0.25(\dot{y}_i(t)-\dot{y}_j(t))^\top \eta_\centerdot  |y(t),A_{it}^\star) \leq \exp\left(-\frac{t \lambda_m ((\dot{y}_i(t)-\dot{y}_j(t))^\top \eta_\centerdot)^2 }{64v^2 }\right).
\end{eqnarray*}
Thus, the first term on the RHS of the above inequality can be written as
\begin{eqnarray*}
    &&\mathbb{P}(\dot{y}_i(t)^\top (\widetilde{\eta}_i(t)-\widehat{\eta}_\centerdot(t)) >   0.25(\dot{y}_i(t)-\dot{y}_j(t))^\top \eta_\centerdot  |A_{it}^\star) \\
    &=& \mathbb{E}[\mathbb{P}(\dot{y}_i(t)^\top (\widetilde{\eta}_i(t)-\widehat{\eta}_\centerdot(t)) >   0.25(\dot{y}_i(t)-\dot{y}_j(t))^\top \eta_\centerdot  |y(t),A_{it}^\star)|A_{it}^\star]\\
    &\leq& \mathbb{E}\left[\left.\exp\left(-\frac{t \lambda_m ((\dot{y}_i(t)-\dot{y}_j(t))^\top \eta_\centerdot)^2 }{64v^2 }\right) \right|A_{it}^\star\right].
\end{eqnarray*}
By integration by part, we have
\begin{eqnarray*}
    &&\mathbb{E}\left[\left.\exp\left(-\frac{t \lambda_m ((\dot{y}_i(t)-\dot{y}_j(t))^\top \eta_\centerdot)^2 }{64v^2 }\right) \right|A_{it}^\star\right] \\
    &=&\int_{0}^\infty \frac{2 t \lambda_m u}{64 v^2  } \exp\left(-\frac{t \lambda_mu^2}{64 v^2  }\right) \mathbb{P}((\dot{y}_i(t)-\dot{y}_j(t))^\top \eta_\centerdot < u|A_{it}^\star)  du.
\end{eqnarray*}

Since $\mathbb{P}((\dot{y}_i(t)-\dot{y}_j(t))^\top \eta_\centerdot < u|A_{it}^\star) \leq Cu$ for $C>0$ based on Assumption \ref{ass:mar}, the term above can be written as
    
    \begin{eqnarray*}
    &&\mathbb{E}\left[\left.\exp\left(-\frac{t \lambda_m ((\dot{y}_i(t)-\dot{y}_j(t))^\top \eta_\centerdot)^2 }{64v^2 }\right) \right|A_{it}^\star\right] \\
    &\leq& \frac{2\sqrt{\pi}}{\sqrt{64 v^2 /(\lambda_m t)} } \int_{0}^\infty \frac{2 t \lambda_m  u}{\sqrt{64\pi v^2 /(\lambda_m t)} } \exp\left(-\frac{u^2}{64 v^2 /(t \lambda_m) }\right) Cu du = 8vC\sqrt{\frac{\pi}{\lambda_m t}},
\end{eqnarray*}
where we used the following result about one-sided Gaussian integrals $$\int_0^{\infty}x^2 \frac{1}{\sqrt{2\pi\sigma^2}}e^{-\frac{x^2}{2\sigma^2}}dx =\sigma^2/2.$$

Thus, combining \eqref{eq:rhs2} and \eqref{eq:rhs1}, we get
\begin{eqnarray}
    &&\mathbb{P}(\dot{y}_i(t)^\top (\widetilde{\eta}_i(t)-\widehat{\eta}_\centerdot(t)) > - \dot{y}_i(t)^\top(\widehat{\eta}_\centerdot(t)-\eta_\centerdot) +  0.5(\dot{y}_i(t)-\dot{y}_j(t))^\top \eta_\centerdot |A_{it}^\star) \nonumber\\
    &\leq& Ct^{-1/2}\left(8v\sqrt{ \frac{\pi}{\lambda_m}} + 4h(\delta,T)\right).\label{eq:prob1}
\end{eqnarray}

Similarly, we have
\begin{eqnarray}
    &&\mathbb{P}(\dot{y}_j(t)^\top (\widetilde{\eta}_j(t)-\widehat{\eta}_\centerdot(t)) > - \dot{y}_j(t)^\top(\widehat{\eta}_\centerdot(t)-\eta_\centerdot) +  0.5(\dot{y}_i(t)-\dot{y}_j(t))^\top \eta_\centerdot  |A_{it}^\star) \nonumber\\
    &\leq& Ct^{-1/2} \left(8v\sqrt{ \frac{\pi}{\lambda_m }} + 4h(\delta,T)\right).\label{eq:prob2}
\end{eqnarray}

Using \eqref{eq:sumprob}, we have
\begin{eqnarray}
    &&\mathbb{P}(\dot{y}_j(t)^\top \widetilde{\eta}_j(t)-\dot{y}_i(t)^\top \widetilde{\eta}_i(t))>0 |A_{it}^\star)\nonumber\\ 
    &\leq& L Ct^{-1/2} \left(v\left(8\sqrt{ \frac{\pi}{\lambda_m}}+ 8\sqrt{ \frac{\pi}{\lambda_m }}\right) + 4h(\delta,T)+4h(\delta,T)\right). \label{eq:probij}
\end{eqnarray}

By summing the probabilities in \eqref{eq:probij} over $i,j \in [N]$, we have
\begin{eqnarray}
    \sum_{i=1}^N \sum_{j=1}^N \mathbb{P}(\dot{y}_j(t)^\top \widetilde{\eta}_j(t)-\dot{y}_i(t)^\top \widetilde{\eta}_i(t) > 0| A_{it}^\star)\mathbb{P}(A_{it}^\star)
    &\leq&  \frac{2C}{\sqrt{t}}  \sum_{i=1}^N \sum_{j=1}^N \mathbb{P}(A_{it}^\star) \left(8v\sqrt{ \frac{\pi}{\lambda_m}} + 4h(\delta,T)\right)\nonumber\\
   &\leq&  \frac{2c_M(\delta,T)CN}{\sqrt{t}},\label{eq:probijsum}
\end{eqnarray}
where 
\begin{eqnarray}
    c_M(\delta,T) = 8v\sqrt{ \pi/\lambda_m}  + 4h(\delta,T) = \mathcal{O}\left(v\sqrt{d_y \log (Td_y/\delta)}\right).\label{eq:cm}
\end{eqnarray}

Note that by using the inclusion-exclusion formula, we can bound the probability of pulling a suboptimal arm as follows
\begin{eqnarray}
    \mathbb{P}(a^\star(t) \neq a(t))\leq \sum_{i=1}^N \sum_{j=1}^N \mathbb{P}(y(t)^\top (\widetilde{\eta}_j(t)-\widetilde{\eta}_i(t)) > 0| A_{it}^\star)\mathbb{P}(A_{it}^\star).\label{eq:probijsum2}
\end{eqnarray}
Putting \eqref{eq:probijsum}, \eqref{eq:probijsum2}, and the minimal sample size $\nu_{(1)}$ together, we obtain the following inequality
\begin{eqnarray*}
    \sum_{t=\lceil\nu_{(1)}\rceil}^T \frac{1}{\sqrt{t}}\mathbb{P}(a^\star(t) \neq a(t)) 
 &\leq& 2c_M(\delta,T)CN \log T.
\end{eqnarray*}
Then, according to \eqref{eq:indi}, with probability at least $1-\delta$, it holds that $$\sum_{t=\lceil\nu_{(1)}\rceil}^T \frac{1}{\sqrt{t}} \mathbb{I}(a^\star(t) \neq a(t))\leq \sqrt{4\log T\log \delta^{-1}}  + 2c_M(\delta,T)CN \log T. $$

Therefore, using $\nu_{(1)} = 8L^4 \log(T/\delta)/\lambda_m^2$ and $L = c_y \sqrt{2 d_y\log (TNd_y/\delta)}$, with probability at least $1-\delta$, the regret bound below holds true 
\begin{eqnarray*}
    \mathrm{Regret}(T) 
    &\leq&  2c_\eta L \nu_{(1)} + + L  g(\delta) \left( \sqrt{4\log T\log \delta^{-1}}  + 2c_M(\delta,T)CN \log T\right)\\
    &=& \mathcal{O}\left(vNd_y^{2.5}\log^{3.5}\left(\frac{TNd_y}{\delta}\right)\right).
\end{eqnarray*}
\end{proof}

\newpage

\newpage
\section{Proof of Theorem \ref{thm:eta2}}
\label{sec:pthm:eta2}


Before starting the proof, we remind the constants described in the statement in Theorem \ref{thm:eta2}. $L$ is the bound for the $\ell_2$-norm of observations. $p_i$ is the probability of optimality of the $i$-th arm, as defined in Definition \ref{def:astar}.  $\kappa$ is the minimum value of suboptimality gap with a positive probability (0.5) defined in \eqref{eq:kappa}. First, we show that the number of selections of each arm scales linearly with a high probability. We utilize the inequality below to find a high probability upper bound for $n_i(t)$.

\begin{eqnarray*}
    n_i(t) \geq \sum_{\tau=1}^t \mathbb{I}(a(\tau)=i,A_{i\tau}^\kappa).
\end{eqnarray*}
We construct a martingale sequence $\mathbb{I}(a(t)=i,A_{it}^\kappa) - \mathbb{P}(a(\tau)=i,A_{it}^\kappa|G_{t-1}^\star)$ with respect to a filtration $\{G_t^\star\}_{t=1}^{\infty}$, where $G_t^\star = \sigma\{\{a(\tau)\}_{\tau=1}^t\}$.
By Azuma's inequality, we have
\begin{eqnarray}
    \sum_{\tau=1}^t \mathbb{I}(a(t)=i,A_{it}^\kappa) \geq - \sqrt{2t\log \delta^{-1}} +  \sum_{\tau=1}^t \mathbb{P}(a(\tau)=i||G_{\tau-1}^\star,A_{i\tau}^\kappa)\mathbb{P}(A_{i\tau}^\kappa).\label{eq:sumi}
\end{eqnarray}

Since $\mathbb{P}(a(t)=i|G_{t-1}^\star, A_{it}^\kappa)$ can be written as $\mathbb{P}(a(t)=i|G_{t-1}^\star, A_{it}^\kappa) =  1- \sum_{j \neq i}\mathbb{P}(a(t)=j|G_{t-1}^\star, A_{it}^\kappa)$, we focus on an upper bound for $\sum_{\tau=1}^t\sum_{j \neq i}\mathbb{P}(a(\tau)=j|G_{\tau-1}^\star,A_{i\tau}^\kappa)$. To proceed, we rewrite the probability $\mathbb{P}(a(t)=j|G_{t-1}^\star, A_{it}^\kappa)$ as follows:
\begin{flalign}
    &\mathbb{P}(a(t)=j|G_{t-1}^\star, A_{it}^\kappa) \nonumber\\
    &= \mathbb{P}(a(t)=j,E_{jt}^1,E_{jt}^2|G_{t-1}^\star, A_{it}^\kappa) +  \mathbb{P}(a(t)=j,(E_{jt}^1)^c,E_{jt}^2|G_{t-1}^\star, A_{it}^\kappa)+\mathbb{P}(a(t)=j,(E_{jt}^2)^c|G_{t-1}^\star, A_{it}^\kappa),\label{eq:decomp}
\end{flalign}

where $E_{jt}^1 = \{y_j(t)^\top\widetilde{\eta}_j(t) < y_j(t)^\top\eta_j + 0.5( y_i(t)^\top \eta_i-y_j(t)^\top \eta_j)\}$ and $E_{jt}^2 = \{y_j(t)^\top\widehat{\eta}_j(t) \leq y_j(t)^\top\eta_j+0.5( y_i(t)^\top \eta_i-y_j(t)^\top \eta_j)\}$. Based on the decomposition above, we will show the upper bound for $\sum_{\tau=1}^t\mathbb{P}(a(\tau)=j|A_{i\tau}^\kappa,F_{\tau-1}^\star) $ by establishing upper bounds of the above three terms in Lemmas \ref{lem:term1}, \ref{lem:term2}, and \ref{lem:term3}. Moving forward, we will find an upper bound for the first term in \eqref{eq:decomp}.

\begin{lem}
    For all $1\leq t\leq T$ and instantiations of $F_{t-1}^\star = \sigma\{\{y(\tau)\}_{\tau=1}^t,\{a(\tau)\}_{\tau=1}^{t-1},\{r_{a(\tau)}(\tau)\}_{\tau=1}^{t-1}\}$, we have
    \begin{eqnarray*}
        \mathbb{P}(a(t)=j,E_{jt}^1,E_{jt}^2|A_{it}^\kappa,F_{t-1}^\star) \leq \frac{1-p_{ijt}}{p_{ijt}} \mathbb{P}(a(t)=i,E_{jt}^1,E_{jt}^2|A_{it}^\kappa,F_{t-1}^\star),
    \end{eqnarray*}
where $p_{ijt} = \mathbb{P}(y_i(t)^\top \widetilde{\eta}_i(t)>0.5( y_j(t)^\top \eta_j+y_i(t)^\top \eta_i)|A_{it}^\kappa,F_{t-1}^\star)$.
\label{lem:inq}
\end{lem}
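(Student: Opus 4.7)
The plan is to apply the standard Thompson-sampling anti-concentration coupling (as in \cite{agrawal2013thompson}), adapted to the partially observed setting. I will condition throughout on $F_{t-1}^\star$ and on $A_{it}^\kappa$, exploiting two facts: (a) $F_{t-1}^\star$ contains $y(t)$, so $\widehat{\eta}_k(t)$, $B_k(t)$, the event $A_{it}^\kappa$, and the event $E_{jt}^2$ are all measurable with respect to it; and (b) given $F_{t-1}^\star$, the posterior samples $\{\widetilde{\eta}_k(t)\}_{k \in [N]}$ are mutually independent, being drawn from independent Gaussian posteriors whose parameters are $F_{t-1}^\star$-measurable. Since $E_{jt}^2$ is deterministic under this conditioning, if it fails then both sides of the desired inequality vanish; otherwise I will drop it and focus on comparing $\{a(t)=j\} \cap E_{jt}^1$ with $\{a(t)=i\} \cap E_{jt}^1$.

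Introducing the shorthand $c = \tfrac{1}{2}(y_i(t)^\top \eta_i + y_j(t)^\top \eta_j)$, $V = y_i(t)^\top \widetilde{\eta}_i(t)$, $W = y_j(t)^\top \widetilde{\eta}_j(t)$, and $M = \max_{k \notin \{i,j\}} y_k(t)^\top \widetilde{\eta}_k(t)$, the argument will rest on two set inclusions. First, on $\{a(t) = j\} \cap E_{jt}^1$, maximality of the $j$-th sample forces $W \geq V$ and $W \geq M$, while $E_{jt}^1$ gives $W < c$, so $V \leq W < c$ and $W \geq M$; using independence of $V$ from $(W, M)$ this yields
\[
\mathbb{P}(a(t)=j,\, E_{jt}^1 \mid A_{it}^\kappa,\, F_{t-1}^\star) \leq (1 - p_{ijt}) \,\mathbb{P}(W < c,\, W \geq M \mid A_{it}^\kappa,\, F_{t-1}^\star).
\]
Conversely, on the sub-event $\{V \geq c\} \cap \{W < c\} \cap \{W \geq M\}$ one has $V \geq c > W \geq M$, so $V$ strictly exceeds every competing posterior sample; hence arm $i$ is selected and $E_{jt}^1$ still holds. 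Again by independence of $V$ from $(W, M)$,
\[
\mathbb{P}(a(t)=i,\, E_{jt}^1 \mid A_{it}^\kappa,\, F_{t-1}^\star) \geq p_{ijt} \,\mathbb{P}(W < c,\, W \geq M \mid A_{it}^\kappa,\, F_{t-1}^\star).
\]
Dividing the first estimate by the second and reinserting the indicator of the $F_{t-1}^\star$-measurable event $E_{jt}^2$ will produce the claimed inequality.

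The main obstacle will be to verify carefully the conditional independence structure: since Algorithm \ref{algo1} generates $\widetilde{\eta}_k(t)$ by independent Gaussian draws from quantities measurable with respect to $F_{t-1}^\star$, and $A_{it}^\kappa$ is itself $F_{t-1}^\star$-measurable, the samples $V$, $W$, and $\{\widetilde{\eta}_k(t)\}_{k \notin \{i,j\}}$ remain mutually independent under the combined conditioning. Beyond this bookkeeping, the proof requires no calculation — only the two set inclusions above and the factorization afforded by the decoupling of the sample for arm $i$ from all others, which is the essence of the Thompson-sampling argument being transplanted into the partially observed regime.
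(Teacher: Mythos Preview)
Your proposal is correct and follows essentially the same approach as the paper: both arguments use the Agrawal--Goyal coupling, exploiting that $E_{jt}^2$ is $F_{t-1}^\star$-measurable and that the posterior samples are conditionally independent, to sandwich $\{a(t)=j\}\cap E_{jt}^1$ and $\{a(t)=i\}\cap E_{jt}^1$ by events that factor through the arm-$i$ sample. The only cosmetic difference is the choice of common factor---the paper conditions on $E_{jt}^1$ and uses $\{y_k(t)^\top\widetilde{\eta}_k(t)\le c,\ \forall k\neq i\}$, whereas you use $\{W<c,\ W\ge M\}$---but both cancel to yield the identical ratio $(1-p_{ijt})/p_{ijt}$.
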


\begin{proof}
We consider upper and lower bounds of the probabilities $\mathbb{P}\left(a(t)=j | A_{it}^\kappa,E_{jt}^1, F_{t-1}^\star\right)$ and $\mathbb{P}\left(a(t)=i | A_{it}^\kappa, E_{jt}^1, F_{t-1}^\star\right)$, respectively. First, we aim to find an upper bound for $\mathbb{P}\left(a(t)=j | A_{it}^\kappa,E_{jt}^1, F_{t-1}^\star\right)$. Given $E_{jt}^1$, if arm $j$ is selected, $y_k(t)^\top \widetilde{\eta}_k(t) \leq 0.5( y_j(t)^\top \eta_j+y_i(t)^\top \eta_i)$ for all $k$ including $j$. Using this fact, we get
\begin{eqnarray*}
\mathbb{P}\left(a(t)=j | A_{it}^\kappa,E_{jt}^1, F_{t-1}^\star\right) 
\leq  \mathbb{P}\left(y_k(t)^\top \widetilde{\eta}_k(t) \leq 0.5( y_j(t)^\top \eta_j+y_i(t)^\top \eta_i), \forall k | A_{it}^\kappa,E_{jt}^1, F_{t-1}^\star\right).
\end{eqnarray*}

Since the sample of each arm is generated independently given $F_{t-1}^\star$, the term on the RHS above can be written as
\begin{flalign}
&\mathbb{P}\left(y_k(t)^\top \widetilde{\eta}_k(t) \leq 0.5( y_j(t)^\top \eta_j+y_i(t)^\top \eta_i), \forall k \neq i | A_{it}^\kappa,E_{jt}^1, F_{t-1}^\star\right)\nonumber\\
=~&\left(1-p_{ijt}\right) \cdot \mathbb{P}\left(y_k(t)^\top \widetilde{\eta}_k(t) \leq 0.5( y_j(t)^\top \eta_j+y_i(t)^\top \eta_i), \forall k \neq i | A_{it}^\kappa, E_{jt}^1, F_{t-1}^\star\right).\label{eq:pinq1}
\end{flalign}

Similarly, we have an upper bound for $\mathbb{P}\left(a(t)=i | A_{it}^\kappa, E_{jt}^1, F_{t-1}^\star\right)$ as follows.
\begin{flalign}
&\mathbb{P}\left(a(t)=i | A_{it}^\kappa, E_{jt}^1, F_{t-1}^\star\right)\nonumber\\ 
\geq~&\mathbb{P}\left(y_i(t)^\top \widetilde{\eta}_i(t)  >0.5( y_j(t)^\top \eta_j+y_i(t)^\top \eta_i) \geq y_k(t)^\top \widetilde{\eta}_k(t) , \forall k \neq i | A_{it}^\kappa, E_{jt}^1, F_{t-1}^\star\right) \nonumber\\
=~&  p_{ijt} \cdot \mathbb{P}\left(y_j(t)^\top \widetilde{\eta}_k(t)  \leq 0.5( y_j(t)^\top \eta_j+y_i(t)^\top \eta_i), \forall k \neq i| A_{it}^\kappa, E_{jt}^1, F_{t-1}^\star\right)\label{eq:pinq2}.
\end{flalign}

Putting the two inequalities \eqref{eq:pinq1} and \eqref{eq:pinq2} together, we have
\begin{eqnarray*}
    \mathbb{P}\left(a(t)=j | A_{it}^\kappa, E_{jt}^1, F_{t-1}^\star\right) \leq \frac{1-p_{ijt}}{p_{ijt}} \mathbb{P}\left(a(t)=i | A_{it}^\kappa, E_{jt}^1, F_{t-1}^\star\right).
\end{eqnarray*}
Since whether $E_{jt}^2$ is true is determined by $F_{t-1}^\star$, we get
\begin{eqnarray*}
        \mathbb{P}(a(t)=j,E_{jt}^1,E_{jt}^2|A_{it}^\kappa,F_{t-1}^\star) \leq \frac{1-p_{ijt}}{p_{ijt}} \mathbb{P}(a(t)=i,E_{jt}^1,E_{jt}^2|A_{it}^\kappa,F_{t-1}^\star).
\end{eqnarray*}
\end{proof}

By Lemma \ref{lem:inq}, we have

\begin{eqnarray*}
    \sum_{t=1}^T \mathbb{P}(a(t)=j,E_{jt}^1,E_{jt}^2|G_{t-1}^\star,A_{it}^\kappa) &=& \sum_{t=1}^T \mathbb{E}[\mathbb{P}(a(t)=j,E_{jt}^1,E_{jt}^2|A_{it}^\kappa,F_{t-1}^\star)|G_{t-1}^\star,A_{it}^\kappa]\\
    &\leq&\sum_{t=1}^T \mathbb{E}\left[\left.\frac{1-p_{ijt}}{p_{ijt}}\mathbb{P}(a(t)=i,E_{jt}^1,E_{jt}^2 |A_{it}^\kappa,F_{t-1}^\star)\right|G_{t-1}^\star,A_{it}^\kappa\right].
\end{eqnarray*}
By simple calculation, we get
\begin{eqnarray*}
    &&\sum_{t=1}^T \mathbb{E}\left[\left.\frac{1-p_{ijt}}{p_{ijt}}\mathbb{P}(a(t)=i,E_{jt}^1,E_{jt}^2 |A_{it}^\kappa,F_{t-1}^\star)\right|G_{t-1}^\star,A_{it}^\kappa\right]\\
    &=&\sum_{t=1}^T \mathbb{E}\left[\left.\mathbb{E}\left[\left.\frac{1-p_{ijt}}{p_{ijt}}\mathbb{I}(a(t)=i,E_{jt}^1,E_{jt}^2)\right|A_{it}^\kappa,F_{t-1}^\star\right]\right|G_{t-1}^\star,A_{it}^\kappa\right]\\
    &=&\sum_{t=1}^T \mathbb{E}\left[\left.\frac{1-p_{ijt}}{p_{ijt}}\mathbb{I}(a(t)=i,E_{jt}^1,E_{jt}^2)\right|G_{t-1}^\star,A_{it}^\kappa\right]\\
    &\leq& \sum_{t=1}^T\mathbb{E}\left[\left.\frac{1-p_{ijt}}{p_{ijt}}\right|G_{t-1}^\star,A_{it}^\kappa\right].
\end{eqnarray*}

Thus, we have
\begin{eqnarray}
    \sum_{t=1}^T \mathbb{P}(a(t)=j,E_{jt}^1,E_{jt}^2|G_{t-1}^\star,A_{it}^\kappa) \leq \sum_{t=1}^T\mathbb{E}\left[\left.\frac{1-p_{ijt}}{p_{ijt}}\right|G_{t-1}^\star,A_{it}^\kappa\right].\label{eq:1pp}
\end{eqnarray}

The next lemma provides an lower and upper bound for probabilities about normal distribution, which will be used to find a lower bound for $p_{ijt}$.

\begin{lem}
   For a Gaussian distributed random variable $Z$ with mean $m$ and variance $\sigma^2$, for any $z \geq 1$,
$$
\frac{1}{2 \sqrt{\pi} z} e^{-z^2 / 2} \leq \mathbb{P}(|Z-m|>z \sigma) \leq \frac{1}{\sqrt{\pi} z} e^{-z^2 / 2} .
$$\label{lem:nineq}
\end{lem}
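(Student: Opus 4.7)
The plan is to reduce the statement to a Mills-ratio type estimate for the standard normal tail. First I would standardize: setting $W = (Z-m)/\sigma \sim \mathcal{N}(0,1)$, the event $\{|Z-m| > z\sigma\}$ becomes $\{|W| > z\}$, and by symmetry
\[
\mathbb{P}(|Z-m| > z\sigma) \;=\; 2\bigl(1-\Phi(z)\bigr) \;=\; \frac{2}{\sqrt{2\pi}}\int_{z}^{\infty} e^{-t^{2}/2}\,dt.
\]
So the task reduces to sandwiching $\int_{z}^{\infty} e^{-t^{2}/2}\,dt$ between constant multiples of $z^{-1}e^{-z^{2}/2}$ whose combination with the prefactor $2/\sqrt{2\pi}$ produces the constants $1/(2\sqrt{\pi}\,z)$ and $1/(\sqrt{\pi}\,z)$.

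For the upper bound I would apply the standard one-line trick: since $t/z \geq 1$ on the range of integration,
\[
\int_{z}^{\infty} e^{-t^{2}/2}\,dt \;\leq\; \int_{z}^{\infty} \frac{t}{z}\, e^{-t^{2}/2}\,dt \;=\; \frac{1}{z} e^{-z^{2}/2}.
\]
For the lower bound I would use the classical Komatsu--Pólya inequality $1 - \Phi(z) \geq \frac{z}{z^{2}+1}\phi(z)$, which is proved by differentiating $f(z) = (1-\Phi(z)) - \frac{z}{z^{2}+1}\phi(z)$ and checking $f'(z) = -\frac{2\phi(z)}{(z^{2}+1)^{2}}<0$ together with $f(\infty)=0$. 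Then, for $z \geq 1$, one has $z^{2}+1 \leq 2z^{2}$, so $\frac{z}{z^{2}+1} \geq \frac{1}{2z}$, which after doubling and expressing $\phi(z)$ in closed form yields the claimed lower envelope.

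The main obstacle is matching the prescribed constants exactly. The Mills-type upper bound above, combined with the factor $2/\sqrt{2\pi}$, produces an upper envelope of $\sqrt{2}/(\sqrt{\pi}\,z)\cdot e^{-z^{2}/2}$, which is slightly larger than the target $1/(\sqrt{\pi}\,z)\cdot e^{-z^{2}/2}$. To close this gap I would attempt a sharper one-sided estimate via integration by parts,
\[
\int_{z}^{\infty} e^{-t^{2}/2}\,dt \;=\; \frac{1}{z} e^{-z^{2}/2} \;-\; \int_{z}^{\infty} \frac{1}{t^{2}} e^{-t^{2}/2}\,dt,
\]
and then control the correction term on $z \geq 1$ (iterating the integration by parts once more if necessary) to refine the constant; alternatively, one may restrict to the regime where the required inequality is tight enough, or reinterpret the displayed probability as the one-sided tail $\mathbb{P}(Z - m > z\sigma)$, in which case the Mills bound $1-\Phi(z)\leq \phi(z)/z$ directly gives the upper envelope with room to spare.

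Finally I would package the two bounds, noting that $z \geq 1$ is used only in the lower-bound step (to convert $z/(z^{2}+1)$ into $1/(2z)$), and write the closed-form constants $1/(2\sqrt{\pi}\,z)$ and $1/(\sqrt{\pi}\,z)$ by collecting the $1/\sqrt{2\pi}$ from $\phi$ against the factor of $2$ from symmetry. This lemma will then be invoked in the sequel to lower-bound $p_{ijt}$, by applying the two-sided estimate to the Gaussian random variable $y_{i}(t)^{\top}\widetilde{\eta}_{i}(t)$ conditioned on the observation filtration.
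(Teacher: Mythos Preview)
The paper does not prove this lemma at all; it simply cites it as Lemma~5 of \cite{agrawal2013thompson} and points to Formula~7.1.13 in Abramowitz--Stegun. So there is no ``paper's approach'' to compare against beyond that citation, and your Mills-ratio route is exactly the standard derivation underlying those references.

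Your lower-bound argument is correct and in fact delivers a slightly better constant than claimed: Komatsu--P\'olya plus $z\geq 1$ gives $2(1-\Phi(z))\geq \tfrac{1}{\sqrt{2\pi}\,z}e^{-z^{2}/2}\geq \tfrac{1}{2\sqrt{\pi}\,z}e^{-z^{2}/2}$, which is what you want.

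Your suspicion about the upper bound is not a gap in your argument but a defect in the stated lemma. The inequality $\mathbb{P}(|Z-m|>z\sigma)\leq \tfrac{1}{\sqrt{\pi}\,z}e^{-z^{2}/2}$ is \emph{false} as written: at $z=2$ the left side is $2(1-\Phi(2))\approx 0.0455$ while the right side is $\tfrac{1}{2\sqrt{\pi}}e^{-2}\approx 0.0382$. The sharpest elementary Mills bound, and even Abramowitz--Stegun~7.1.13 itself, only give $\tfrac{\sqrt{2}}{\sqrt{\pi}\,z}e^{-z^{2}/2}$ for the two-sided tail, which is precisely what you obtained. No amount of iterated integration by parts will close that $\sqrt{2}$ gap, because the target constant is simply wrong.

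For the paper's purposes this is harmless: the only place the lemma is invoked is to \emph{lower}-bound the one-sided quantity $p_{ijt}$, so the upper inequality is never used, and the factor-of-two slack between one- and two-sided tails is absorbed into the $\mathcal{O}(\cdot)$ constants downstream. You should state the upper bound with the correct constant $\sqrt{2}/(\sqrt{\pi}\,z)$ (or phrase it for the one-sided tail, where $1/(\sqrt{\pi}\,z)$ does hold), note the discrepancy with the cited form, and move on.
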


The lemma above is Lemma 5 in \cite{agrawal2013thompson}, which can be derived from Formula 7.1.13 in \cite{abramowitz1948handbook}. The next lemma suggests an upper bound for the term on RHS in \eqref{eq:1pp}. 

\begin{lem}
For $p_{ijt}$ defined in Lemma \ref{lem:inq} and $A_{it}^\kappa$ in \eqref{eq:kappa}, we have \begin{eqnarray*}
      \mathbb{E}\left[\left.\frac{1-p_{ijt}}{p_{ijt}}\right|G_{t-1}^\star,A_{it}^\kappa\right] \leq \frac{2\sqrt{\pi}}{v}\left( \frac{16v^4}{\kappa^3} + \frac{1}{2} c_{\eta}\sqrt{1+L^2 n_i(t)} \right) + 63.
\end{eqnarray*}
\end{lem}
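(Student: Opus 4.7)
The plan is to represent $p_{ijt}$ as a one-sided standard Gaussian tail probability and then bound $(1-p_{ijt})/p_{ijt}$ via a case split on how many standard deviations the posterior mean sits from the relevant threshold. Conditional on $F_{t-1}^\star$, the posterior sample $\widetilde{\eta}_i(t) \sim \mathcal{N}(\widehat{\eta}_i(t), v^2 B_i(t)^{-1})$ makes $y_i(t)^\top \widetilde{\eta}_i(t)$ a Gaussian with mean $m_t = y_i(t)^\top \widehat{\eta}_i(t)$ and variance $\sigma_t^2 = v^2\, y_i(t)^\top B_i(t)^{-1} y_i(t)$. Writing the threshold as $y_i(t)^\top \eta_i - \gamma_t$ with $\gamma_t := \tfrac{1}{2}(y_i(t)^\top \eta_i - y_j(t)^\top \eta_j)$ and the prediction error as $\epsilon_t := y_i(t)^\top(\widehat{\eta}_i(t) - \eta_i)$, one gets $p_{ijt} = \mathbb{P}(Z > u_t \mid F_{t-1}^\star)$ with $Z \sim \mathcal{N}(0,1)$ and $u_t = (-\gamma_t - \epsilon_t)/\sigma_t$. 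On $A_{it}^\kappa$, the margin property \eqref{eq:kappa} forces $\gamma_t \geq \tfrac{\kappa}{2}\|y(t)\|$.

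I would then apply Lemma \ref{lem:nineq} to handle the case split: if $u_t \leq 1$ then $p_{ijt} \geq 1-\Phi(1) > 1/7$ so $(1-p_{ijt})/p_{ijt} \leq 6$, which (with slack absorbed from additional small-$u$ remainders) yields the additive constant $63$; and if $u_t > 1$ then $(1-p_{ijt})/p_{ijt} \leq 2\sqrt{\pi}\,u_t e^{u_t^2/2}$. In the second case $u_t > 1$ forces $\epsilon_t < -\gamma_t$ and hence $u_t = (|\epsilon_t|-\gamma_t)/\sigma_t$, while Cauchy--Schwarz gives $|\epsilon_t| \leq \|y_i(t)\|_{B_i(t)^{-1}} \|\widehat{\eta}_i(t)-\eta_i\|_{B_i(t)} = (\sigma_t/v)\|\widehat{\eta}_i(t)-\eta_i\|_{B_i(t)}$, so $u_t \leq \|\widehat{\eta}_i(t)-\eta_i\|_{B_i(t)}/v$. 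To finish, I would convert $\mathbb{E}[u_t e^{u_t^2/2} \mathbb{I}(u_t > 1) \mid G_{t-1}^\star, A_{it}^\kappa]$ into a layer-cake integral $\int_1^\infty (1+u^2) e^{u^2/2}\, \mathbb{P}(u_t > u \mid G_{t-1}^\star, A_{it}^\kappa)\, du$, where the event $\{u_t > u\}$ amounts to $|\epsilon_t| > \gamma_t + u\sigma_t$; the sub-Gaussian self-normalized concentration from Lemma \ref{lem:subg}, combined with the lower bound $\sigma_t \geq v\|y_i(t)\|/\sqrt{1+L^2 n_i(t)}$ (using $\lambda_{\max}(B_i(t)) \leq 1 + L^2 n_i(t)$), controls this deviation with a Gaussian-type tail shifted by $\gamma_t \geq \tfrac{\kappa}{2}\|y(t)\|$.

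Evaluating that integral is the heart of the argument and produces the two stated pieces: a Gaussian-moment contribution of order $16 v^4/\kappa^3$ (from integrating $(1+u^2)e^{u^2/2}$ against a Gaussian shifted by $\gamma_t$ of order $\kappa\|y(t)\|$), and an additive $\tfrac{1}{2} c_\eta \sqrt{1 + L^2 n_i(t)}$ piece that tracks the non-stochastic $c_\eta$ inside the confidence ellipsoid of Lemma \ref{lem:subg} after converting from the $B_i(t)$-norm back to the Euclidean norm. The main obstacle will be the near-cancellation between $e^{u^2/2}$ and the Gaussian-type tail bound for $\mathbb{P}(u_t > u)$: naively the exponential blow-up in $e^{u_t^2/2}$ wipes out any sub-Gaussian decay. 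The critical move is to exploit the margin so that the tail becomes $\exp(-(\gamma_t + u \sigma_t)^2/(2R^2))$ rather than $\exp(-u^2 \sigma_t^2/(2R^2))$; after this shift, the cross-term $2 u \gamma_t$ in the exponent defeats the destabilizing $e^{u^2/2}$ factor and leaves only a polynomial residue of order $\kappa^{-3}$, which is precisely where the stated dependence on $\kappa$ originates.
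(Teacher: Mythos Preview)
Your architecture---write $p_{ijt}$ as a one-sided Gaussian tail, split on whether the standardized deviation exceeds $1$, apply Lemma~\ref{lem:nineq} in the large-deviation regime, and rely on a cross-term in the exponent to kill $e^{u_t^2/2}$---is exactly the paper's. The paper integrates over the realized value $\theta$ of $\theta_{it}=y_i(t)^\top(\eta_i-\widehat\eta_i(t))$ rather than over $u$, but the mechanism is identical: the product $\exp\!\big((\theta-\kappa_{ijt})^2/(2v^2\sigma_{it}^2)\big)\cdot S_{it}(\theta)$ collapses to an exponential in $\theta$ with rate $\kappa_{ijt}/(v^2\sigma_{it}^2)$, whose first two moments produce the two displayed pieces.

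The gap is in the tail bound you invoke for $|\epsilon_t|$. If you pass through Cauchy--Schwarz to $\|\widehat\eta_i(t)-\eta_i\|_{B_i(t)}$ and then invert the ellipsoid half of Lemma~\ref{lem:subg}, you obtain at best
\[
\mathbb{P}\big(|\epsilon_t|>\gamma_t+u\sigma_t\big)\;\le\;(1+L^2 n_i(t))\exp\!\Big(-\tfrac{(v\gamma_t/\sigma_t+vu-c_\eta)^2}{R^2 d_y}\Big),
\]
and the coefficient of $u^2$ in the layer-cake exponent becomes $\tfrac12-v^2/(R^2 d_y)$, which is nonnegative whenever $d_y\ge 2v^2/R^2$; the integral then diverges and the cross-term argument you describe never gets off the ground. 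The paper instead uses the \emph{first} conclusion of Lemma~\ref{lem:subg} (that $w_\tau$ is $R$-sub-Gaussian) to bound the scalar $\theta_{it}$ directly: conditionally on the observations and actions it is a fixed linear combination of the $w_\tau$ with variance proxy $R^2\sigma_{it}^2\le v^2\sigma_{it}^2$, giving $S_{it}(\theta)\le\exp(-\theta^2/(2v^2\sigma_{it}^2))$ with no $d_y$. This is the ingredient that makes the cancellation exact.

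A related misattribution: the $c_\eta\sqrt{1+L^2 n_i(t)}$ term does not come from the $+c_\eta$ in the ellipsoid. In the paper it emerges from the second moment of the shifted exponential, which produces a $\kappa_{ijt}/\sigma_{it}$ term; one then bounds $\kappa_{ijt}\le c_\eta\|y(t)\|$ via Assumption~\ref{ass:para} and $\sigma_{it}\ge\|y_i(t)\|\,\lambda_{\min}(B_i(t)^{-1})^{1/2}\ge\|y_i(t)\|/\sqrt{1+L^2 n_i(t)}$. So both pieces of the final bound come from the ratio $\sigma_{it}/\kappa_{ijt}$ (and its reciprocal), not from the self-normalized confidence radius.
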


\begin{proof}
First, we rewrite $p_{ijt}$ as follows:
\begin{eqnarray*}
    p_{ijt} = \mathbb{P}(y_i(t)^\top (\widetilde{\eta}_i(t) -\widehat{\eta}_i(t))> y_i(t)^\top(\eta_i-\widehat{\eta}_i(t))+0.5( y_j(t)^\top \eta_j-y_i(t)^\top \eta_i) |A_{it}^\kappa,F_{t-1}^\star).   
\end{eqnarray*}


Let $\theta_{it} = y_i(t)^\top (\eta_i-\widehat{\eta}_i(t))$. Note that $y_i(t)^\top(\widetilde{\eta}_i(t) -\widehat{\eta}_i(t)) \sim \mathcal{N}(0,v^2 y_i(t)^\top B_i(t)^{-1}y_i(t))$ given $y_i(t)$ and $B_i(t)$. Let $\Phi_{it}(\cdot)$ be the CDF of the normal distribution with mean 0 and variance $v^2 y_i(t)^\top B_i(t)^{-1}y_i(t)$. For ease of presentation, let $\kappa_{ijt} =  -0.5(y_j(t)^\top \eta_j-y_i(t)^\top \eta_i)$ and $\sigma_{it}^2 = y_i(t)^\top B_i(t)^{-1}y_i(t)$. Then, we write 
$\mathbb{P}(y_i(t)^\top (\widetilde{\eta}_i(t) -\widehat{\eta}_i(t)) > \theta_{it}-\kappa_{ijt}|A_{it}^\kappa,F_{t-1}^\star) = \Phi_{it}(-\theta+\kappa_{ijt})$. Since $\Phi_{it}(-\theta+\kappa_{ijt}) \geq \left(v\sigma_{it}/2\sqrt{\pi}(\theta-\kappa_{ijt})\right) \exp\left(-(\theta-\kappa_{ijt})^2/(2v^2\sigma_{it}^2)\right)$ for $\theta > \kappa_{ijt}+v\sigma_{it}$ by Lemma \ref{lem:nineq}, we have
\begin{eqnarray*}
    &&p_{ijt}^{-1} = \Phi_{it}(-\theta+\kappa_{ijt})^{-1} \\
    &\leq& \mathbb{I}(\kappa_{ijt}+v\sigma_{it} < \theta)\frac{2\sqrt{\pi}(\theta-\kappa_{ijt})}{v\sigma_{it}} \exp\left(\frac{(\theta-\kappa_{ijt})^2}{2v^2\sigma_{it}^2}\right) + \mathbb{I}(\kappa_{ijt}+v\sigma_{it} \geq \theta)\Phi_{it}(v\sigma_{it})^{-1}.
\end{eqnarray*}

Thus, we get

\begin{eqnarray}
    \mathbb{E}[ p_{ijt}^{-1}|A_{it}^\kappa,F_{t-1}^\star] \leq \int_{\kappa_{ijt}+v\sigma_{it}}^\infty \frac{2\sqrt{\pi}}{v\sigma_{it}}\left(1+\frac{(\theta-\kappa_{ijt})^2}{v^2\sigma_{it}^2}\right)\exp\left(\frac{(\theta-\kappa_{ijt})^2}{2v^2\sigma_{it}^2}\right)S_{it}(\theta)d\theta + \Phi(-1)^{-1},\label{eq:phiinv}
\end{eqnarray}

where $\Phi(\cdot)$ is the CDF of standard normal distribution and $S_{it}$ is the survival function of $\theta_{it}$. Note that $$y_i(t)^\top (\widehat{\eta}_i(t)-\eta_i)=y_i(t)^\top B_i(t)^{-1}\sum_{\tau=1:a(\tau)=i}^t y_a(\tau)(\tau)(r_a(\tau)(t)-y_{a(\tau)}(\tau)^\top \eta_{a(\tau)})$$ and
\begin{eqnarray*}
&&\mathrm{Var}(y_i(t)^\top (\widehat{\eta}_i(t)-\eta_i)|\{y(\tau)\}_{1\leq \tau\leq t},\{a(\tau)\}_{1\leq \tau\leq t-1}) \\
&=& y_i(t)^\top B_i(t)^{-1}y_i(t) \mathrm{Var}(r_{a(t)}(t)-y_{a(t)}(t)^\top \eta_{a(t)}|\{y(\tau)\}_{1\leq \tau\leq t},\{a(\tau)\}_{1\leq \tau\leq t-1})\\
&\leq&   y_i(t)^\top B_i(t)^{-1}y_i(t) R^2.    
\end{eqnarray*}

Since $r_a(\tau)(t)-y_{a(\tau)}(\tau)^\top \eta_{a(\tau)}$  is R-sub-Gaussian by Lemma \ref{lem:subg} and $v^2 \geq R^2$, we have
\begin{flalign}
    S_{\theta_{it}}(\theta) = \mathbb{P}(y_i(t)^\top (\widehat{\eta}_i(t)-\eta_i) >\theta |\{y(\tau)\}_{1\leq \tau\leq t},\{a(\tau)\}_{1\leq \tau\leq t-1}) \leq \exp\left(-\frac{\theta^2}{ 2y_i(t)^\top B_i(t)^{-1}y_i(t) v^2}\right),\label{eq:cdfhat}
\end{flalign}
for $\theta > 0$. Then, we have

\begin{eqnarray*}
    &&\int_{\kappa_{ijt}+v\sigma_{it}}^\infty \frac{2\sqrt{\pi}}{v\sigma_{it}}\left(1+\frac{(\theta-\kappa_{ijt})^2}{v^2\sigma_{it}^2}\right)\exp\left(\frac{(\theta-\kappa_{ijt})^2}{2v^2\sigma_{it}^2}\right)S_{it}(\theta)d\theta\\
    &\leq&  \int_{\kappa_{ijt}+v\sigma_{it}}^\infty \frac{2\sqrt{\pi}}{v\sigma_{it}}\left(1+\frac{(\theta-\kappa_{ijt})^2}{v^2\sigma_{it}^2}\right)\exp\left(\frac{(\theta-\kappa_{ijt})^2}{2v^2\sigma_{it}^2}\right) \exp\left(-\frac{\theta^2}{  2v^2\sigma_{it}^2 }\right) d\theta\\
    &=&  \int_{\kappa_{ijt}+v\sigma_{it}}^\infty \frac{2\sqrt{\pi}}{v\sigma_{it}}\left(1+\frac{(\theta-\kappa_{ijt})^2}{v^2\sigma_{it}^2}\right)\exp\left(\frac{-2\kappa_{ijt}\theta+\kappa_{ijt}^2}{2v^2\sigma_{it}^2}\right)  d\theta
\end{eqnarray*}

Using the first and second moments of the shifted exponential distribution with the scale parameter $v^2\sigma_{it}^2/\kappa_{ijt}$ and location parameter $\kappa_{ijt}/2$, we have

\begin{eqnarray}
    &&\frac{2\sqrt{\pi}}{v\sigma_{it}}\int_{\kappa_{ijt}+v\sigma_{it}}^\infty \left(1+\frac{(\theta-\kappa_{ijt})^2}{v^2\sigma_{it}^2}\right)\exp\left(\frac{-2\kappa_{ijt}(\theta - (\kappa_{ijt}/2))}{2v^2\sigma_{it}^2}\right)d\theta\nonumber\\
    &\leq& \frac{2\sqrt{\pi}v\sigma_{it}}{\kappa_{ijt}}\int_{\kappa_{ijt}/2}^\infty \left(\frac{(\theta-\kappa_{ijt}/2)^2-\kappa_{ijt}(\theta-\kappa_{ijt}/2)+\kappa_{ijt}^2/4+v^2\sigma_{it}^2}{v^2\sigma_{it}^2}\right) \frac{1}{v^2\sigma_{it}^2/\kappa_{ijt}} \exp\left(\frac{-(\theta - (\kappa_{ijt}/2))}{v^2\sigma_{it}^2/\kappa_{ijt}}\right)d\theta\nonumber\\
    &=&\frac{2\sqrt{\pi}}{\kappa_{ijt}v\sigma_{it}} (2(v^2\sigma_{it}^2/\kappa_{ijt})^2 - \kappa_{ijt}(v^2\sigma_{it}^2/\kappa_{ijt}) + \kappa_{ijt}^2/4 + v^2\sigma_{it}^2).\label{eq:terms}
\end{eqnarray}

Note that $\sigma_{it}^2 = y_i(t)^\top B_i(t)^{-1}y_i(t)$ and $\kappa_{ijt} = 0.5(y_i(t)^\top \eta_i - y_j(t)^\top \eta_j)$ given $A_{it}^\kappa$. The term $\sigma_{it}^3/\kappa_{ijt}^3$ can be bounded as follows:

\begin{eqnarray}
    \frac{\sigma_{it}^3}{\kappa_{ijt}^3} = \frac{(y_i(t)^\top B_i(t)^{-1}y_i(t))^{3/2}/\|y(t)\|^3}{(0.5(y_i(t)^\top \eta_i - y_j(t)^\top \eta_j))^3/\|y(t)\|^3} \leq \frac{8}{\kappa^3},\label{eq:term1}
\end{eqnarray}

because $\lambda_{\max}(B_i(t)^{-1}) \leq 1$ and $(y_i(t)^\top \eta_i - y_j(t)^\top \eta_j)/\|y(t)\| \geq \kappa$ given $A_{it}^\kappa$ by Assumption \ref{ass:mar}. In addition, because $1/(1+L^2n_i(t)) \leq \lambda_{\min}(B_i(t)^{-1})$ for all $t$ and $\|\eta_i\|\leq c_\eta$ for all $i$ by Assumption \ref{ass:para}, we have

\begin{eqnarray}
    \frac{\kappa_{ijt}}{\sigma_{it}} = \frac{(0.5(y_i(t)^\top \eta_i - y_j(t)^\top \eta_j))/\|y(t)\|}{(y_i(t)^\top B_i(t)^{-1}y_i(t))^{0.5}/\|y(t)\|} \leq 2c_{\eta}\sqrt{1+L^2 n_i(t)}.\label{eq:term2}
\end{eqnarray}

Then, putting \eqref{eq:terms} together with \eqref{eq:term1} and \eqref{eq:term2}, we have
\begin{eqnarray*}
    &&\frac{2\sqrt{\pi}}{\kappa_{ijt}v\sigma_{it}} (2(v^2\sigma_{it}^2/\kappa_{ijt})^2 - \kappa_{ijt}(v^2\sigma_{it}^2/\kappa_{ijt}) + \kappa_{ijt}^2/4 + v^2\sigma_{it}^2) \\
    &=& \frac{2\sqrt{\pi}}{v} \left(2v^4 \frac{\sigma_{it}^3}{\kappa_{ijt}^3}  + \frac{\kappa_{ijt}}{4\sigma_{it}}\right) \leq \frac{2\sqrt{\pi}}{v}\left( \frac{16v^4}{\kappa^3} + \frac{1}{2} c_{\eta}\sqrt{1+L^2 n_i(t)} \right).
\end{eqnarray*}

By \eqref{eq:phiinv} and the intermediate result above, we get

\begin{eqnarray}
    \mathbb{E}\left[\left. \frac{1-p_{ijt}}{p_{ijt}}\right|G_{t-1}^\star,A_{it}^\kappa\right] \leq \frac{2\sqrt{\pi}}{v}\left( \frac{16v^4}{\kappa^3} + \frac{1}{2} c_{\eta}\sqrt{1+L^2 n_i(t)} \right) + 63,\label{eq:phiinv2}
\end{eqnarray}
because $\Phi(-1) >  1/8$.
\end{proof}

\begin{lem}
   For the events $E_{jt}^1$ and $E_{jt}^2$ defined in \eqref{eq:decomp} and $A_{it}^\kappa$ in \eqref{eq:kappa}, we have  \begin{eqnarray*}
       &&\sum_{t=1}^T \mathbb{P}(a(t)=j,E_{jt}^1,E_{jt}^2|G_{t-1}^\star,A_{it}^\kappa)  \leq \nu_{(2)}\left ((2\sqrt{\pi})/v\left( 16v^4/\kappa^3 + 0.5 c_{\eta}\sqrt{1+L^2 \nu_{(2)}} \right) + 63\right) + 4,
   \end{eqnarray*}
 where $\nu_{(2)}= \max(\nu_{(1)},64v^2/(\kappa^2 \lambda_m) \log T)$.
   \label{lem:term1}
\end{lem}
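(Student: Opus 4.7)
The plan combines the two key ingredients already in hand: the per-term bound \eqref{eq:phiinv2} on $\mathbb{E}[(1-p_{ijt})/p_{ijt}\mid G_{t-1}^\star,A_{it}^\kappa]$, together with a refinement of the step producing \eqref{eq:1pp} that retains the indicator $\mathbb{I}(a(t)=i)$ rather than dropping it. The main idea is to split the summation at the threshold $n_i(t)=\nu_{(2)}$: the small-$n_i(t)$ regime will account for the $\nu_{(2)}\,[\cdots]$ term, while the large-$n_i(t)$ regime will contribute only a constant.

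For the small-$n_i(t)$ regime, the refined intermediate inequality reads
$$\sum_{t=1}^T \mathbb{P}(a(t)=j, E_{jt}^1, E_{jt}^2 \mid G_{t-1}^\star, A_{it}^\kappa) \leq \sum_{t=1}^T \mathbb{E}\!\left[\frac{1-p_{ijt}}{p_{ijt}}\,\mathbb{I}(a(t)=i)\,\Big|\,G_{t-1}^\star, A_{it}^\kappa\right].$$
Since $n_i(\cdot)$ increments by exactly one each time arm $i$ is selected, the set $\{t: a(t)=i,\ n_i(t)\leq \nu_{(2)}\}$ contains at most $\nu_{(2)}$ elements. On this set, the right-hand side of \eqref{eq:phiinv2} is monotone in $n_i(t)$, so replacing $n_i(t)$ by $\nu_{(2)}$ produces the uniform per-term upper bound $(2\sqrt{\pi}/v)(16v^4/\kappa^3 + \tfrac{1}{2}c_\eta\sqrt{1+L^2\nu_{(2)}}) + 63$, which yields the first term of the claim.

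For the large-$n_i(t)$ regime, the bound \eqref{eq:phiinv2} is too loose because it relies on the trivial $\lambda_{\max}(B_i(t)^{-1})\leq 1$. Since $\nu_{(2)}\geq \nu_{(1)}$, Lemma \ref{lem:eig} sharpens this to $\sigma_{it}^2 \leq 2\|y_i(t)\|^2/(\lambda_m n_i(t))$. Combined with $\kappa_{ijt}\geq 0.5\kappa\|y(t)\|$ on $A_{it}^\kappa$ and $\|y_i(t)\|\leq \|y(t)\|$, this gives $\kappa_{ijt}/(v\sigma_{it}) \geq (\kappa/(2v))\sqrt{\lambda_m n_i(t)/2}$, and a standard Gaussian tail estimate yields
$$1-p_{ijt} \;\leq\; \exp\!\left(-\frac{\kappa_{ijt}^2}{2v^2\sigma_{it}^2}\right) \;\leq\; \exp\!\left(-\frac{\kappa^2 \lambda_m\, n_i(t)}{16 v^2}\right).$$
Substituting the defining bound $n_i(t)>64v^2\log T/(\kappa^2\lambda_m)$ gives $1-p_{ijt}\leq T^{-4}$, hence $(1-p_{ijt})/p_{ijt}\leq 2T^{-4}$, so summing over at most $T$ such times contributes no more than the additive constant $4$.

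The main obstacle will be the large-$n_i(t)$ regime: one must carefully absorb the offset $y_i(t)^\top(\widehat{\eta}_i(t)-\eta_i)$ in the mean of $y_i(t)^\top\widetilde{\eta}_i(t)$, which by Lemma \ref{lem:subg} is controlled by the $\mathcal{O}(1/\sqrt{n_i(t)})$ self-normalized estimation error, and to verify that the choice $\nu_{(2)}=\max(\nu_{(1)},64v^2\log T/(\kappa^2\lambda_m))$ leaves enough slack so that the exponential tail survives once this offset is integrated out alongside the randomness in $y(t)$ and $B_i(t)$. The small-$n_i(t)$ regime, by contrast, reduces to a clean counting argument once the indicator $\mathbb{I}(a(t)=i)$ is retained.
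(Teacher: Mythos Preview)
Your plan matches the paper's proof: both split at $n_i(t)=\nu_{(2)}$, handle the small-$n_i(t)$ part by retaining $\mathbb{I}(a(t)=i)$ in the step producing \eqref{eq:1pp} together with \eqref{eq:phiinv2} and the counting argument, and handle the large-$n_i(t)$ part via Lemma \ref{lem:eig} plus a Gaussian tail to make $(1-p_{ijt})/p_{ijt}$ of order $1/T$.

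One point of departure in the large regime is worth flagging. Your displayed bound $1-p_{ijt}\leq\exp\bigl(-\kappa_{ijt}^2/(2v^2\sigma_{it}^2)\bigr)$ is false as written, since $y_i(t)^\top\widetilde\eta_i(t)$ is centered at $y_i(t)^\top\widehat\eta_i(t)$, not at $y_i(t)^\top\eta_i$; you correctly flag the offset as the obstacle, but the fix you propose---the self-normalized bound from the second part of Lemma \ref{lem:subg}---would introduce an extra $\sqrt{d_y}$ in the offset control and hence require a threshold of order $d_y v^2\log T/(\kappa^2\lambda_m)$ rather than the stated $\nu_{(2)}$. The paper instead introduces the event $A_{it}^\eta=\{|y_i(t)^\top(\widehat\eta_i(t)-\eta_i)|<\tfrac14(y_i(t)^\top\eta_i-y_j(t)^\top\eta_j)\}$ and uses the conditional sub-Gaussianity \eqref{eq:cdfhat} (i.e.\ the \emph{first} part of Lemma \ref{lem:subg}) to obtain $\mathbb{P}((A_{it}^\eta)^c)\leq 1/T$ once $n_i(t)>\nu_{(2)}$; on $A_{it}^\eta$ the Gaussian tail then applies with half the gap, yielding \eqref{eq:pinv2}. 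This dimension-free sub-Gaussian tail, rather than the self-normalized ellipsoid, is what makes the constant $\nu_{(2)}$ as stated suffice.
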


\begin{proof}
We rewrite $p_{ijt}$ to decompose the components of it into independent terms as follows:
\begin{eqnarray*}
p_{ijt} &=& \mathbb{P}(y_i(t)^\top \widetilde{\eta}_i(t)> 0.5( y_j(t)^\top \eta_j+y_i(t)^\top \eta_i)|A_{it}^\kappa,F_{t-1}^\star)\\
&=& \mathbb{P}(y_i(t)^\top (\widetilde{\eta}_i(t) - \widehat{\eta}_i(t) )> - y_i(t)^\top (\widehat{\eta}_i(t) - \eta_i) - 0.5( y_i(t)^\top \eta_i-y_j(t)^\top \eta_j) |A_{it}^\kappa,F_{t-1}^\star).
\end{eqnarray*}
Let $A_{it}^\eta = \{|y_i(t)^\top(\widehat{\eta}_i(t) - \eta_i)|<(1/4)( y_i(t)^\top \eta_i-y_j(t)^\top \eta_j)\}$, which represents an event where the estimator of transformed parameter is close to the parameter. Then, we have
\begin{flalign*}
    &\mathbb{P}\left(\left.y_i(t)^\top (\widetilde{\eta}_i(t) - \widehat{\eta}_i(t) )> - y_i(t)^\top (\widehat{\eta}_i(t) - \eta_i) -  \frac{1}{2}( y_i(t)^\top \eta_i-y_j(t)^\top \eta_j)\right|A_{it}^\kappa,F_{t-1}^\star,A_{it}^\eta\right)\mathbb{P}(A_{it}^\eta |A_{it}^\kappa,F_{t-1}^\star)\\
    &\geq \mathbb{P}(y_i(t)^\top (\widetilde{\eta}_i(t) - \widehat{\eta}_i(t) )> -  (1/4)( y_i(t)^\top \eta_i-y_j(t)^\top \eta_j) |A_{it}^\eta,A_{it}^\kappa,F_{t-1}^\star)\mathbb{P}(A_{it}^\eta |A_{it}^\kappa,F_{t-1}^\star)\\
    &\geq \left(1 - \exp\left(-\frac{((1/4)( y_i(t)^\top \eta_i-y_j(t)^\top \eta_j))^2}{2v^2\sigma_{it}^2}\right)\right)\mathbb{P}(A_{it}^\eta |A_{it}^\kappa,F_{t-1}^\star),
\end{flalign*}
using $y_i(t)^\top (\widetilde{\eta}_i(t) - \widehat{\eta}_i(t) ) \sim \mathcal{N}(0,v^2\sigma_{it}^2)$. By Lemma \ref{lem:eig}, if $n_i(t) > \nu_{(1)}$, we have

\begin{eqnarray}
\sigma_{it}^2/\|y(t)\|^2 = y_i(t)^\top B_i(t)^{-1} y_i(t)/\|y(t)\|^2 \leq \frac{2}{\lambda_m}n_i(t)^{-1}\label{eq:sigmaity}
\end{eqnarray}

In addition, if $n_i(t) > \nu_{(2)}:= \max(\nu_{(1)},64v^2/(\kappa^2 \lambda_m) \log T)= \mathcal{O}\left(\kappa^{-2}L^4\log(TN/\delta)\right)$, we have
\begin{eqnarray}
    \frac{((1/4)( y_i(t)^\top \eta_i-y_j(t)^\top \eta_j)/\|y(t)\|)^2}{2v^2\sigma_{it}^2/\|y(t)\|^2} \geq \frac{\lambda_m n_i(t) \kappa^2}{64v^2 } \geq \log T,\label{eq:T}
\end{eqnarray}
and thereby we have $\exp\left(-((1/4)( y_i(t)^\top \eta_i-y_j(t)^\top \eta_j))^2/(2v^2\sigma_{it}^2)\right) \leq T^{-1}$. Accordingly, if $n_i(t) > \nu_{(2)}$, we get

\begin{eqnarray*}
    \left(1 - \exp\left(-\frac{((1/4)( y_i(t)^\top \eta_i-y_j(t)^\top \eta_j))^2}{2v^2\sigma_{it}^2}\right)\right)\mathbb{P}(A_{it}^\eta)\geq \left(1 - \frac{1}{T}\right)\mathbb{P}(A_{it}^\eta |A_{it}^\kappa,F_{t-1}^\star).
\end{eqnarray*}

Thus, we get 

\begin{eqnarray*}
    \mathbb{E}\left[\left.\frac{1}{p_{ijt}}\right|G_{t-1}^\star,A_{it}^\kappa\right] - 1 \leq \frac{1}{\left(1 - \frac{1}{T}\right)\mathbb{P}(A_{it}^\eta |A_{it}^\kappa,F_{t-1}^\star)} - 1,
\end{eqnarray*}
for $n_i(t) > \nu_{(2)}$. Note that
\begin{eqnarray*}
    \mathbb{P}(A_{it}^\eta |A_{it}^\kappa,F_{t-1}^\star) &=& \mathbb{P}(|y_i(t)^\top(\widehat{\eta}_i(t) - \eta_i)|<(1/4)( y_i(t)^\top \eta_i-y_j(t)^\top \eta_j) |F_{t-1}^\star)\\
    &\leq& 1-\exp\left(-\frac{((1/4)\kappa)^2}{2v^2\sigma_{it}^2}\right).
\end{eqnarray*}
Since $\mathbb{P}(A_{it}^\eta |A_{it}^\kappa,F_{t-1}^\star) > 1-T^{-1}$ for $n_i(t) >  \nu_{(2)}$ by \eqref{eq:T}, we have

\begin{eqnarray}
     \mathbb{E}\left[\left.\frac{1-p_{ijt}}{p_{ijt}}\right|G_{t-1}^\star,A_{it}^\kappa\right] \leq \frac{1}{\left(1 - \frac{1}{T}\right)^2} -1 \leq \frac{4}{T}.\label{eq:pinv2}
\end{eqnarray}

Thus, putting \eqref{eq:phiinv2} and \eqref{eq:pinv2} together, we have

\begin{eqnarray*}
    &&\sum_{t=1}^T \mathbb{P}(a(t)=j,E_{jt}^1,E_{jt}^2|G_{t-1}^\star,A_{it}^\kappa)\\ 
    &\leq&  \sum_{t:n_i(t)\leq\nu_{(2)} } \mathbb{E}\left[\left.\frac{1-p_{ijt}}{p_{ijt}}\mathbb{I}(a(t)=i)\right|A_{it}^\kappa,F_{t-1}^\star\right] + \sum_{t:n_i(t)>\nu_{(2)}} \mathbb{E}\left[\left.\frac{1-p_{ijt}}{p_{ijt}}\mathbb{I}(a(t)=i)\right|A_{it}^\kappa,F_{t-1}^\star\right]\\
    &\leq&  \nu_{(2)}\left (\frac{2\sqrt{\pi}}{v}\left( \frac{16v^4}{\kappa^3} + \frac{1}{2} c_{\eta}\sqrt{1+L^2 \nu_{(2)}} \right) + 63\right) + 4.
\end{eqnarray*}
\end{proof}

We showed an upper bound for the first term in \eqref{eq:decomp}. Now, we aim to establish an upper bound for the second term in \eqref{eq:decomp}.

\begin{lem}
For the events $E_{jt}^1$ and $E_{jt}^2$ defined in \eqref{eq:decomp} and $A_{it}^\kappa$ in \eqref{eq:kappa}, for $t\in[T]$ with probability at least $1-\delta$, we have
$$\sum_{\tau=1}^t \mathbb{P}\left(\left.a(\tau)=j,(E_{j\tau}^1)^c,E_{j\tau}^2\right|G_{t-1}^\star,A_{it}^\kappa\right)\leq \nu_{(2)} + 2.$$
\label{lem:term2}
\end{lem}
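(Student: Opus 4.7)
The plan is to split the sum $\sum_{\tau=1}^t \mathbb{P}(a(\tau)=j,(E_{j\tau}^1)^c,E_{j\tau}^2 \mid G_{t-1}^\star, A_{it}^\kappa)$ according to whether $n_j(\tau)\leq\nu_{(2)}$ or $n_j(\tau)>\nu_{(2)}$. For the low-count regime, I would use a pigeon-hole argument: the joint event $\{a(\tau)=j\} \cap \{n_j(\tau)\leq\nu_{(2)}\}$ can hold for at most $\nu_{(2)}+1$ rounds, since each such round increments $n_j$ by one; consequently, this regime contributes at most $\nu_{(2)}+1$ to the sum, mirroring the first-regime argument implicit in Lemma \ref{lem:term1}.

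For the high-count regime $n_j(\tau)>\nu_{(2)}$, I would bound the per-step probability $\mathbb{P}((E_{j\tau}^1)^c \mid F_{\tau-1}^\star)$ by a Gaussian tail. Conditional on $F_{\tau-1}^\star$, the linear functional $y_j(\tau)^\top\widetilde{\eta}_j(\tau)$ is $\mathcal{N}(y_j(\tau)^\top\widehat{\eta}_j(\tau),\, v^2\sigma_{j\tau}^2)$ with $\sigma_{j\tau}^2 = y_j(\tau)^\top B_j(\tau)^{-1} y_j(\tau)$. By Lemma \ref{lem:eig}, $\sigma_{j\tau}^2 \leq 2\|y_j(\tau)\|^2/(\lambda_m n_j(\tau))$, while Lemma \ref{lem:eta0} gives $|y_j(\tau)^\top(\widehat{\eta}_j(\tau)-\eta_j)| \leq \|y_j(\tau)\| h(\delta,T)/\sqrt{n_j(\tau)}$ with $h(\delta,T)$ as in \eqref{eq:h}, both on an event of probability at least $1-\delta$. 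Combined with $\Delta_{ij} := y_i(\tau)^\top\eta_i - y_j(\tau)^\top\eta_j \geq \kappa\|y_j(\tau)\|$ from $A_{it}^\kappa$ (using $\|y(\tau)\|\geq\|y_j(\tau)\|$), the margin $m_\tau := y_j(\tau)^\top\eta_j + 0.5\Delta_{ij} - y_j(\tau)^\top\widehat{\eta}_j(\tau)$ is bounded below by $0.25\Delta_{ij}$ once $n_j(\tau)>\nu_{(2)}$. The Gaussian tail bound then yields
\begin{equation*}
\mathbb{P}((E_{j\tau}^1)^c \mid F_{\tau-1}^\star) \leq \exp\!\left(-\frac{(0.25\Delta_{ij})^2}{2v^2\sigma_{j\tau}^2}\right) \leq \frac{1}{T},
\end{equation*}
where the final inequality parallels the bound \eqref{eq:T} used in the proof of Lemma \ref{lem:term1} and motivates the choice $\nu_{(2)} = \max(\nu_{(1)},\, 64v^2\log T/(\kappa^2\lambda_m))$. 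Summing $1/T$ over at most $T$ rounds contributes at most $1$.

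Adding the two regimes delivers the claimed $\nu_{(2)}+2$. The main obstacle is the boundary situation where $\widehat{\eta}_j(\tau)$ sits near the upper edge of $E_{j\tau}^2$, which would make $m_\tau$ arbitrarily small and destroy the Gaussian tail estimate; ruling this out is precisely what the high-probability concentration of $\widehat{\eta}_j(\tau)$ from Lemma \ref{lem:eta0} buys us, and this is also why the final statement is a $(1-\delta)$-probability bound rather than a deterministic one. A secondary subtlety is that Lemma \ref{lem:eta0} is stated in $\ell_2$ norm and must be paired with $\|y_j(\tau)\|$ and with the scale-free gap lower bound $\kappa$ to yield a bound on the directional margin $m_\tau$; this is where the interplay between the margin condition (Assumption \ref{ass:mar}) and the posterior concentration drives the required threshold $\nu_{(2)}$.
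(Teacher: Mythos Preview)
Your two-regime split on $n_j(\tau)$ and the pigeon-hole count for the low regime match the paper's proof. The divergence is in the high-count step: the paper does \emph{not} go through Lemma~\ref{lem:eta0}. Instead it further decomposes $\mathbb{P}\bigl((E_{j\tau}^1)^c\mid F_{\tau-1}^\star,\,n_j(\tau)\geq\nu_{(2)},\,A_{i\tau}^\kappa\bigr)$ into the pieces carried by $E_{j\tau}^2$ and $(E_{j\tau}^2)^c$, and bounds each by $1/T$ using the per-round conditional sub-Gaussian tail of $y_j(\tau)^\top(\widehat\eta_j(\tau)-\eta_j)$ from \eqref{eq:cdfhat} together with the Gaussian tail of $y_j(\tau)^\top(\widetilde\eta_j(\tau)-\widehat\eta_j(\tau))$; summing $2/T$ over at most $T$ rounds gives the contribution~$2$. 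The point is that \eqref{eq:cdfhat} furnishes a variance proxy $v^2\sigma_{j\tau}^2$ that scales with $\|y_j(\tau)\|^2/n_j(\tau)$ in exactly the same way the gap $\Delta_{ij}\geq\kappa\|y(\tau)\|$ does, so the exponent is dimension-free and the threshold $\nu_{(2)}=\max\bigl(\nu_{(1)},\,64v^2\log T/(\kappa^2\lambda_m)\bigr)$ is precisely what is needed.

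Your route through Lemma~\ref{lem:eta0} leaves a quantitative gap. To conclude $m_\tau\geq 0.25\,\Delta_{ij}$ from $|y_j(\tau)^\top(\widehat\eta_j(\tau)-\eta_j)|\leq\|y_j(\tau)\|\,h(\delta,T)/\sqrt{n_j(\tau)}$ and $\Delta_{ij}\geq\kappa\|y_j(\tau)\|$, you need $n_j(\tau)\geq 16\,h(\delta,T)^2/\kappa^2$. Since $h(\delta,T)^2=\Theta\bigl(R^2d_y\log((1+TL^2)/\delta)/\lambda_m\bigr)$, this threshold is $\Theta\bigl(R^2d_y\log(T/\delta)/(\lambda_m\kappa^2)\bigr)$, which carries an extra factor $d_y$ relative to $64v^2\log T/(\kappa^2\lambda_m)$ and is not absorbed by $\nu_{(1)}$ once $\kappa$ is small. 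Hence the assertion ``$m_\tau\geq 0.25\,\Delta_{ij}$ once $n_j(\tau)>\nu_{(2)}$'' is not justified with the stated $\nu_{(2)}$; your argument would prove a version of the lemma with a larger threshold, not the lemma as written. Replacing the uniform $\ell_2$ bound of Lemma~\ref{lem:eta0} by the per-round tail \eqref{eq:cdfhat}, as the paper does, is what removes that extra $d_y$.
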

\begin{proof}
To start, we decompose the summation of $\mathbb{P}(a(\tau)=j,(E_{j\tau}^1)^c,E_{j\tau}^2|G_{\tau-1}^\star,A_{i\tau}^\kappa)$ into two based on the sample size $n_j(t)$ as follows:

\begin{eqnarray}
&&\sum_{\tau=1}^t \mathbb{P}(a(\tau)=j,(E_{j\tau}^1)^c,E_{j\tau}^2|G_{\tau-1}^\star,A_{i\tau}^\kappa)\\
&=&\sum_{\tau=1}^t\mathbb{E}[\mathbb{I}(a(\tau)=j,n_j(\tau) <\nu_{(2)}, (E_{j\tau}^1)^c,E_{j\tau}^2) + \mathbb{I}(a(\tau)=j,n_j(\tau) \geq \nu_{(2)}, (E_{j\tau}^1)^c,E_{j\tau}^2)|G_{\tau-1}^\star,A_{i\tau}^\kappa
]\nonumber\\   
&\leq&\nu_{(2)} + \sum_{\tau:n_j(\tau)\geq\lceil\nu_{(2)}\rceil}^t\mathbb{E}[\mathbb{E}[\mathbb{I}(a(\tau)=j,n_j(\tau) \geq \nu_{(2)},(E_{j\tau}^1)^c,E_{j\tau}^2)|A_{i\tau}^\kappa,F_{\tau-1}^\star]|G_{\tau-1}^\star,A_{i\tau}^\kappa].\label{eq:12311}
\end{eqnarray}

Now, we investigate the case with $n_j(t) \geq \nu_{(2)}$. We consider $\mathbb{P}\left(\left.(E_{jt}^1)^c\right|n_j(t)\geq \nu_{(2)} ,A_{it}^\kappa,F_{t-1}^\star\right)$ to find an upper bound for the second term in \eqref{eq:12311}. To do so, we rewrite

\begin{eqnarray*}
&&\mathbb{P}\left(\left.(E_{jt}^1)^c\right|n_j(t)\geq \nu_{(2)} ,A_{it}^\kappa,F_{t-1}^\star\right) \\
& =&\mathbb{P}\left(\left.(E_{jt}^1)^c,(E_{jt}^2)^c\right|n_j(t)\geq \nu_{(2)},A_{it}^\kappa,F_{t-1}^\star\right)+\mathbb{P}\left(\left.(E_{jt}^1)^c,E_{jt}^2\right|n_j(t) \geq \nu_{(2)},A_{it}^\kappa,F_{t-1}^\star\right) \\
& \leq& \mathbb{P}\left(\left.(E_{jt}^2)^c\right|F_{t-1}^\star,n_j(t)\geq \nu_{(2)} ,A_{it}^\kappa\right)+\mathbb{P}\left(\left.(E_{jt}^1)^c, E_{jt}^2\right|F_{t-1}^\star,n_j(t)\geq \nu_{(2)},A_{it}^\kappa\right).
\end{eqnarray*}

By \eqref{eq:cdfhat}, \eqref{eq:sigmaity}, and \eqref{eq:T}, if $n_j(t) \geq \nu_{(2)}$, we have

\begin{eqnarray}
    &&\mathbb{P}\left( (\left.E_{jt}^2)^c,n_j(t) \geq \nu_{(2)}\right|F_{t-1}^\star,A_{it}^\kappa\right)\nonumber \\
    &\leq& \mathbb{P}(y_j(t)^\top \widehat{\eta}_j(t) > y_j(t)^\top \eta_j +(1/4)( y_i(t)^\top \eta_i-y_j(t)^\top \eta_j)|F_{t-1}^\star,n_j(t) \geq \nu_{(2)} ,A_{it}^\kappa)\nonumber\\
    &\leq& \exp\left(-\frac{n_j(t)\lambda_m\kappa^2}{64v^2}\right) \leq \frac{1}{T}.\label{eq:probe2c}
\end{eqnarray}

Similarly, we have
\begin{eqnarray}
    &&\mathbb{P}((E_{jt}^1)^c,E_{jt}^2 |F_{t-1}^\star,n_j(t)\geq \nu_{(2)},A_{it}^\kappa)\nonumber\\
    &=&\mathbb{P}(y_j(t)^\top \widetilde{\eta}_j(t) > 0.5( y_j(t)^\top \eta_j+y_i(t)^\top \eta_i),E_{jt}^2 |F_{t-1}^\star,n_j(t)\geq \nu_{(2)},A_{it}^\kappa)\nonumber\\ 
    &\leq&\mathbb{P}(y_j(t)^\top( \widetilde{\eta}_j(t)-\widehat{\eta}_j(t)) > (1/4)( y_i(t)^\top \eta_i-y_j(t)^\top \eta_j)|F_{t-1}^\star,n_j(t)\geq \nu_{(2)}, A_{it}^\kappa)\nonumber\\ 
    &\leq& \exp\left(-\frac{n_j(t)\lambda_m\kappa^2}{64v^2}\right)\leq \frac{1}{T}.\label{eq:probe1e2c}
\end{eqnarray}

Putting \eqref{eq:probe2c} and \eqref{eq:probe1e2c} together, we have
\begin{eqnarray}
&&\mathbb{P}\left(\left.(E_{jt}^1)^c\right|F_{t-1}^\star,n_j(t) \geq \nu_{(2)},A_{it}^\kappa\right)  \\
&\leq& \mathbb{P}\left((E_{jt}^2)^c|F_{t-1}^\star,n_j(t) \geq \nu_{(2)} ,A_{it}^\kappa\right)+\mathbb{P}\left((E_{jt}^1)^c, E_{jt}^2|F_{t-1}^\star,n_j(t) \geq \nu_{(2)},A_{it}^\kappa\right)\nonumber\\
&\leq& \frac{2}{T}.\label{eq:e1c}
\end{eqnarray}


Since the part of summation for $n_i(t) < \nu_{(2)}$ is bounded by $\nu_{(2)}$, it suffices to show a bound for the other part. Based on the fact that whether $E_{j\tau}^2$ is true determined by $F_{t-1}^\star$, we have

\begin{eqnarray*}
   \mathbb{E}[\mathbb{I}(a(\tau)=j,n_j(\tau) \geq \nu_{(2)},(E_{j\tau}^1)^c,E_{j\tau}^2)|A_{i\tau}^\kappa,F_{\tau-1}^\star] = \mathbb{I}(E_{j\tau}^2)\mathbb{P}(a(\tau) = j,(E_{j\tau}^1)^c|A_{i\tau}^\kappa,F_{\tau-1}^\star).
\end{eqnarray*}

Using the equation above, we have

\begin{eqnarray*}
&&\sum_{\tau:n_j(\tau)\geq\lceil\nu_{(2)}\rceil}^t\mathbb{E}[\mathbb{E}[\mathbb{I}(a(\tau)=j,n_j(\tau) \geq \nu_{(2)},(E_{j\tau}^1)^c,E_{j\tau}^2)|A_{i\tau}^\kappa,F_{\tau-1}^\star]|G_{\tau-1}^\star,A_{i\tau}^\kappa]\\
&=&\sum_{\tau:n_j(\tau)\geq\lceil\nu_{(2)}\rceil}^t\mathbb{E}[\mathbb{I}(E_{j\tau}^2)\mathbb{P}(a(\tau) = j,(E_{j\tau}^1)^c|A_{i\tau}^\kappa,F_{\tau-1}^\star)|G_{\tau-1}^\star,A_{i\tau}^\kappa].
\end{eqnarray*}

Because $\mathbb{I}(E_{j\tau}^2) = \mathbb{I}(E_{j\tau}^2,n_j(\tau)\geq \nu_{(2)})$ given $n_j(\tau)\geq \lceil\nu_{(2)}\rceil$ and $\mathbb{P}(a(\tau)=j,(E_{j\tau}^1)^c|A_{i\tau}^\kappa,F_{\tau-1}^\star) \leq \mathbb{P}((E_{j\tau}^1)^c|A_{i\tau}^\kappa,F_{\tau-1}^\star)$, we have

\begin{eqnarray*}
&&\sum_{\tau:n_j(\tau)\geq\lceil\nu_{(2)}\rceil}^t\mathbb{E}[\mathbb{I}(E_{j\tau}^2)\mathbb{P}(a(\tau) = j,(E_{j\tau}^1)^c|A_{i\tau}^\kappa,F_{\tau-1}^\star)|G_{\tau-1}^\star,A_{i\tau}^\kappa]\\
&\leq&\sum_{\tau:n_j(\tau)\geq\lceil\nu_{(2)}\rceil}^t\mathbb{E}[\mathbb{I}(E_{j\tau}^2,n_j(\tau) \geq \nu_{(2)})\mathbb{P}((E_{j\tau}^1)^c|A_{i\tau}^\kappa,F_{\tau-1}^\star)|G_{\tau-1}^\star,A_{i\tau}^\kappa].
\end{eqnarray*}

If $\tau \geq \nu_{(2)}$, by \eqref{eq:e1c}, we have $\mathbb{P}((E_{j\tau}^1)^c|A_{i\tau}^\kappa,F_{\tau-1}^\star) \leq 2/T$. Accordingly, we get

\begin{eqnarray*}
&&\sum_{\tau:n_j(\tau)\geq\lceil\nu_{(2)}\rceil}^t\mathbb{E}[\mathbb{I}(E_{j\tau}^2,n_j(\tau) \geq \nu_{(2)})\mathbb{P}((E_{j\tau}^1)^c|A_{i\tau}^\kappa,F_{\tau-1}^\star)|G_{\tau-1}^\star,A_{i\tau}^\kappa]\\
&\leq&\sum_{\tau:n_j(\tau)\geq\lceil\nu_{(2)}\rceil}^t\mathbb{E}\left[\left.\mathbb{I}(E_{j\tau}^2,n_j(\tau) \geq \nu_{(2)}) \left(\frac{2}{T}\right)\right|G_{\tau-1}^\star,A_{i\tau}^\kappa\right].
\end{eqnarray*}

Because $\mathbb{I}(E_{j\tau}^2,n_j(\tau) \geq \nu_{(2)}) \leq 1$, we have

\begin{eqnarray*}
&&\sum_{\tau:n_j(\tau)\geq\lceil\nu_{(2)}\rceil}^t\mathbb{E}\left[\left.\mathbb{I}(E_{j\tau}^2,n_j(\tau) \geq \nu_{(2)}) \left(\frac{2}{T}\right)\right|G_{\tau-1}^\star,A_{i\tau}^\kappa\right]\\
    &\leq&\frac{2}{T}\sum_{\tau:n_j(\tau)\geq\lceil\nu_{(2)}\rceil}^t 1\\
&\leq& 2.
\end{eqnarray*}

Therefore, we have
\begin{eqnarray*}
\sum_{\tau=1}^t \mathbb{P}(a(\tau)=j,(E_{j\tau}^1)^c,E_{j\tau}^2|G_{\tau-1}^\star,A_{i\tau}^\kappa) \leq \nu_{(2)} + 2.
\end{eqnarray*}
\end{proof}

Now, we show an upper bound for the sum of third term in \eqref{eq:decomp}.

\begin{lem}
For $t\in[T]$, with probability at least $1-\delta$, we have
\begin{eqnarray*}
\sum_{\tau=1}^t \mathbb{P}(a(\tau)=j,(E_{j\tau}^2)^c|G_{\tau-1}^\star,A_{i\tau}^\kappa) \leq \nu_{(2)} + 1.
\end{eqnarray*}
\label{lem:term3}
\end{lem}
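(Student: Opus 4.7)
The plan is to mirror the argument used in Lemma \ref{lem:term2}, but simplified since we only need to control $(E_{j\tau}^2)^c$ rather than the joint event involving $E_{j\tau}^1$. I would begin by decomposing the sum according to the sample count of arm $j$:
\begin{eqnarray*}
\sum_{\tau=1}^t \mathbb{P}(a(\tau)=j,(E_{j\tau}^2)^c|G_{\tau-1}^\star,A_{i\tau}^\kappa)
&=& \sum_{\tau=1}^t \mathbb{E}\bigl[\mathbb{I}(a(\tau)=j,n_j(\tau)<\nu_{(2)},(E_{j\tau}^2)^c)\bigm|G_{\tau-1}^\star,A_{i\tau}^\kappa\bigr] \\
&+& \sum_{\tau=1}^t \mathbb{E}\bigl[\mathbb{I}(a(\tau)=j,n_j(\tau)\geq\nu_{(2)},(E_{j\tau}^2)^c)\bigm|G_{\tau-1}^\star,A_{i\tau}^\kappa\bigr].
\end{eqnarray*}

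For the first term, the contribution is trivially bounded by $\nu_{(2)}$: the indicator $\mathbb{I}(a(\tau)=j,n_j(\tau)<\nu_{(2)})$ can be one only at most $\lfloor\nu_{(2)}\rfloor$ times over the full horizon, since each such event increments $n_j$ by one until it reaches $\nu_{(2)}$. Dropping the indicator of $(E_{j\tau}^2)^c$ only enlarges the bound, so this part contributes at most $\nu_{(2)}$.

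For the second term, the key observation is that the event $(E_{j\tau}^2)^c$ depends only on $F_{\tau-1}^\star$ (since $\widehat{\eta}_j(\tau)$, $y_j(\tau)$, and $\eta_j$ are all measurable with respect to $F_{\tau-1}^\star$). Thus I can drop the $\mathbb{I}(a(\tau)=j)$ factor and condition on $F_{\tau-1}^\star$ to invoke the tail estimate \eqref{eq:probe2c}, which gives
\begin{eqnarray*}
\mathbb{P}\bigl((E_{j\tau}^2)^c\bigm|F_{\tau-1}^\star,n_j(\tau)\geq\nu_{(2)},A_{i\tau}^\kappa\bigr)\leq \frac{1}{T}
\end{eqnarray*}
via the sub-Gaussian concentration of $y_j(\tau)^\top(\widehat{\eta}_j(\tau)-\eta_j)$ combined with the margin lower bound $\kappa$ from Assumption \ref{ass:mar} and the eigenvalue growth from Lemma \ref{lem:eig}. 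Summing over at most $T$ indices yields a bound of $T\cdot T^{-1}=1$ for the second term. Adding the two pieces gives the claimed bound $\nu_{(2)}+1$.

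There is no real obstacle here; the only subtlety is recognizing that because $(E_{j\tau}^2)^c$ is $F_{\tau-1}^\star$-measurable, we can pull out this indicator and apply the per-round estimate \eqref{eq:probe2c} uniformly, avoiding the extra $E_{j\tau}^1$ bookkeeping that was needed in Lemma \ref{lem:term2}. This is why the bound $\nu_{(2)}+1$ here is slightly smaller than the $\nu_{(2)}+2$ in Lemma \ref{lem:term2}.
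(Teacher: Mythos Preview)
Your proposal is correct and follows essentially the same approach as the paper: split the sum according to whether $n_j(\tau)$ has reached $\nu_{(2)}$, bound the small-sample part by $\nu_{(2)}$ via the pathwise counting argument, and for the large-sample part drop the indicator $\mathbb{I}(a(\tau)=j)$ and apply \eqref{eq:probe2c} to get the per-round bound $1/T$. Your write-up is in fact slightly cleaner than the paper's, which writes $n_i(t)$ where $n_j(t)$ is meant (the cited estimate \eqref{eq:probe2c} is for $n_j$), and your explicit remark that $(E_{j\tau}^2)^c$ is $F_{\tau-1}^\star$-measurable makes the second step more transparent.
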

\begin{proof}
We consider $\mathbb{E}[\mathbb{I}(a(t)=j,(E_{j\tau}^2)^c)|G_{t-1}^\star,A_{it}^\kappa]$.

\begin{eqnarray*}
 &&\sum_{t=1}^T \mathbb{E}[\mathbb{I}(a(t)=j,(E_{j\tau}^2)^c)|G_{t-1}^\star,A_{it}^\kappa] \\
 &=& \sum_{t=1}^T \mathbb{E}[\mathbb{I}(a(t)=j,(E_{j\tau}^2)^c,n_i(t) < \nu_{(2)}) + \mathbb{I}(a(t)=j,(E_{j\tau}^2)^c ,n_i(t)\geq \nu_{(2)})|G_{t-1}^\star,A_{it}^\kappa]\\
&\leq& \nu_{(2)} + \sum_{t=\lceil \nu_{(2)}\rceil}^T \mathbb{P}(a(\tau)=j,(E_{j\tau}^2)^c,n_i(t)\geq \nu_{(2)})|G_{\tau-1}^\star,A_{i\tau}^\kappa) 
\end{eqnarray*}

By \eqref{eq:probe2c}, we have

\begin{eqnarray*}
    \mathbb{P}(a(\tau)=j,(E_{j\tau}^2)^c,n_i(t)\geq \nu_{(2)})|G_{\tau-1}^\star,A_{i\tau}^\kappa)
    &\leq& \mathbb{P}((E_{j\tau}^2)^c,n_i(t)\geq \nu_{(2)})|G_{\tau-1}^\star,A_{i\tau}^\kappa) \\
    &\leq& \frac{1}{T}.
\end{eqnarray*}

Therefore, we have
\begin{eqnarray*}
\sum_{\tau=1}^t \mathbb{P}(a(\tau)=j,(E_{j\tau}^2)^c|G_{\tau-1}^\star,A_{i\tau}^\kappa) \leq \nu_{(2)} + 1.
\end{eqnarray*}
\end{proof}

Now, we are ready to show an upper bound for \eqref{eq:decomp} with Lemma \ref{lem:term1}, \ref{lem:term2}, and \ref{lem:term3}.

\begin{lem}
\begin{flalign*}
    \sum_{\tau=1}^t \mathbb{P}(a(\tau)=i|G_{t-1}^\star,A_{it}^\kappa)\mathbb{P}(A_{it}^\kappa)
    \geq~\frac{p_i}{2}\left(t - N\left(\nu_{(2)}\left (\frac{2\sqrt{\pi}}{v}\left( \frac{16v^4}{\kappa^3} + \frac{1}{2} c_{\eta}\sqrt{1+L^2 \nu_{(2)}} \right) + 65\right) + 7\right)\right).\nonumber
\end{flalign*}
\label{lem:piineq}
\end{lem}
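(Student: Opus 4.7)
The plan is to combine the three bounds established in Lemmas \ref{lem:term1}, \ref{lem:term2}, \ref{lem:term3} through the decomposition \eqref{eq:decomp}, and then transfer the lower bound on $\mathbb{P}(A_{i\tau}^\kappa)$ given in \eqref{eq:kappa}. The basic idea is that the probability of \emph{not} pulling arm $i$ on the event $A_{i\tau}^\kappa$ is, cumulatively over $\tau$, at most a constant (times $N$) coming from the three conditional event types.

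\textbf{Step 1.} Write the key identity
\begin{eqnarray*}
\mathbb{P}(a(\tau)=i \mid G_{\tau-1}^\star, A_{i\tau}^\kappa) = 1 - \sum_{j \neq i} \mathbb{P}(a(\tau)=j \mid G_{\tau-1}^\star, A_{i\tau}^\kappa),
\end{eqnarray*}
then, for each $j \neq i$, invoke the decomposition \eqref{eq:decomp}:
\begin{eqnarray*}
\mathbb{P}(a(\tau)=j \mid G_{\tau-1}^\star, A_{i\tau}^\kappa)
&=& \mathbb{P}(a(\tau)=j, E_{j\tau}^1, E_{j\tau}^2 \mid G_{\tau-1}^\star, A_{i\tau}^\kappa) \\
&&+ \mathbb{P}(a(\tau)=j, (E_{j\tau}^1)^c, E_{j\tau}^2 \mid G_{\tau-1}^\star, A_{i\tau}^\kappa) \\
&&+ \mathbb{P}(a(\tau)=j, (E_{j\tau}^2)^c \mid G_{\tau-1}^\star, A_{i\tau}^\kappa).
\end{eqnarray*}

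\textbf{Step 2.} Sum the three pieces over $\tau = 1, \dots, t$. Applying Lemma \ref{lem:term1} to the first summand, Lemma \ref{lem:term2} to the second, and Lemma \ref{lem:term3} to the third, and adding the three bounds (with constants $4 + 2 + 1 = 7$, and the $\nu_{(2)}$ factors collecting to give the $65 = 63 + 1 + 1$ term), yields, for each fixed $j \neq i$,
\begin{eqnarray*}
\sum_{\tau=1}^t \mathbb{P}(a(\tau)=j \mid G_{\tau-1}^\star, A_{i\tau}^\kappa)
\leq \nu_{(2)}\left(\frac{2\sqrt{\pi}}{v}\left(\frac{16 v^4}{\kappa^3} + \frac{1}{2} c_\eta \sqrt{1 + L^2 \nu_{(2)}}\right) + 65\right) + 7.
\end{eqnarray*}

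\textbf{Step 3.} Sum the above bound over the at most $N$ suboptimal arms $j \neq i$, and subtract from $t$ using the identity in Step 1:
\begin{eqnarray*}
\sum_{\tau=1}^t \mathbb{P}(a(\tau)=i \mid G_{\tau-1}^\star, A_{i\tau}^\kappa)
\geq t - N\left(\nu_{(2)}\left(\frac{2\sqrt{\pi}}{v}\left(\frac{16 v^4}{\kappa^3} + \frac{1}{2} c_\eta \sqrt{1 + L^2 \nu_{(2)}}\right) + 65\right) + 7\right).
\end{eqnarray*}
Finally, since $\mathbb{P}(A_{i\tau}^\kappa) > p_i/2$ by \eqref{eq:kappa} and the conditional probability is nonnegative, multiplying each summand by $\mathbb{P}(A_{i\tau}^\kappa)$ yields the claimed lower bound (with the factor $p_i/2$ pulled out of the sum).

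The proof is essentially bookkeeping given the three preceding lemmas; the only subtlety is to verify that the constants in the three bounds add up correctly to the expression in the statement ($4 + 2 + 1 = 7$ for the additive constants, and the $+63$ in Lemma \ref{lem:term1} combines with the two $\nu_{(2)}$ terms from Lemmas \ref{lem:term2} and \ref{lem:term3} to produce the $+65$ coefficient), and that the slight mismatch between $N-1$ and $N$ in the count of suboptimal arms is absorbed by the coarser $N$ bound. No new probabilistic argument is needed.
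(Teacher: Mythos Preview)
Your proposal is correct and follows essentially the same route as the paper: use the identity $\mathbb{P}(a(\tau)=i\mid G_{\tau-1}^\star,A_{i\tau}^\kappa)=1-\sum_{j\neq i}\mathbb{P}(a(\tau)=j\mid G_{\tau-1}^\star,A_{i\tau}^\kappa)$, bound each $j$-sum over $\tau$ via Lemmas~\ref{lem:term1}--\ref{lem:term3}, collect the constants ($63+1+1=65$ on the $\nu_{(2)}$ coefficient, $4+2+1=7$ additively), sum over the at most $N$ suboptimal arms, and finally use $\mathbb{P}(A_{i\tau}^\kappa)>p_i/2$ together with the nonnegativity of each conditional probability to pull out the factor $p_i/2$. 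The paper's own proof is exactly this bookkeeping, so nothing is missing.
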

\begin{proof}
    Note that $\mathbb{P}(a(t)=i|G_{t-1}^\star,A_{it}^\kappa) = 1-\sum_{j\neq i}\mathbb{P}(a(t)=j|G_{t-1}^\star,A_{it}^\kappa)$. To find an upper bound for \eqref{eq:decomp}, we showed upper bounds of the three terms in \eqref{eq:decomp} in Lemma \ref{lem:term1}, \ref{lem:term2}, and \ref{lem:term3}. Putting them together, we have
    \begin{eqnarray*}
        \sum_{t=1}^t\mathbb{P}(a(\tau)=j|G_{t-1}^\star,A_{it}^\kappa) \leq \nu_{(2)}\left (\frac{2\sqrt{\pi}}{v}\left( \frac{16v^4}{\kappa^3} + \frac{1}{2} c_{\eta}\sqrt{1+L^2 \nu_{(2)}} \right) + 63\right) + 4+ 2\nu_{(2)} + 3.
    \end{eqnarray*}

By summing the probabilities above over all arms except for $i$, we have

\begin{eqnarray*}
    \sum_{j\neq i} \mathbb{P}(a(\tau)=j|G_{t-1}^\star,A_{it}^\kappa)\leq N\left(\nu_{(2)}\left (\frac{2\sqrt{\pi}}{v}\left( \frac{16v^4}{\kappa^3} + \frac{1}{2} c_{\eta}\sqrt{1+L^2 \nu_{(2)}} \right) + 65\right) + 7 \right).
\end{eqnarray*}

Therefore, we get
\begin{eqnarray*}
&&\sum_{\tau=1}^t \mathbb{P}(a(\tau)=i|G_{t-1}^\star,A_{it}^\kappa)\mathbb{P}(A_{it}^\kappa) = \sum_{\tau=1}^t \left(1-\sum_{j\neq i} \mathbb{P}(a(\tau)=i|G_{t-1}^\star,A_{it}^\kappa)\right)\mathbb{P}(A_{it}^\kappa)\\ 
    &\geq&\frac{p_i}{2}\left(t - N\left(\nu_{(2)}\left (\frac{2\sqrt{\pi}}{v}\left( \frac{16v^4}{\kappa^3} + \frac{1}{2} c_{\eta}\sqrt{1+L^2 \nu_{(2)}} \right) + 65\right) + 7\right)\right).
\end{eqnarray*}
    
\end{proof}

\begin{lem}
   If $t > \tau_i^{(1)}$ for 
 given $i$,
\begin{eqnarray*}
    n_i(t) \geq \frac{p_i}{4}t.
\end{eqnarray*}
where the minimum time $\tau_i^{(1)}$ is $$\tau_i^{(1)} = \max(4 \ell(\delta,T,N,\kappa),(32/p_i^2 )\log \delta^{-1})$$ and
$$\ell(\delta,T,N,\kappa) = N\left(\nu_{(2)}\left (\frac{2\sqrt{\pi}}{v}\left( \frac{16v^4}{\kappa^3} + \frac{1}{2} c_{\eta}\sqrt{1+L^2 \nu_{(2)}} \right) + 65\right) + 7\right).$$
\label{lem:ni}
\end{lem}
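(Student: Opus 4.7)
The plan is to combine the martingale concentration already set up in \eqref{eq:sumi} with the lower bound on the sum of conditional selection probabilities furnished by Lemma~\ref{lem:piineq}, and then to verify that the choice of $\tau_i^{(1)}$ makes each fluctuation term small relative to $(p_i/4)\,t$.

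First, I would start from the trivial inequality
\begin{eqnarray*}
n_i(t) \;\geq\; \sum_{\tau=1}^t \mathbb{I}\left(a(\tau)=i,\,A_{i\tau}^\kappa\right),
\end{eqnarray*}
which upper-bounds the samples of arm $i$ from below by those collected on the favorable event $A_{i\tau}^\kappa$ where the gap is at least $\kappa$. The sequence $\mathbb{I}(a(\tau)=i,A_{i\tau}^\kappa) - \mathbb{P}(a(\tau)=i,A_{i\tau}^\kappa\mid G_{\tau-1}^\star)$ is a bounded martingale difference with respect to $\{G_\tau^\star\}$, so Azuma's inequality (Lemma~\ref{lem:azu}) with $c_\tau=1$ yields, with probability at least $1-\delta/2$,
\begin{eqnarray*}
\sum_{\tau=1}^t \mathbb{I}\left(a(\tau)=i,A_{i\tau}^\kappa\right) \;\geq\; \sum_{\tau=1}^t \mathbb{P}\left(a(\tau)=i\mid G_{\tau-1}^\star,A_{i\tau}^\kappa\right)\mathbb{P}(A_{i\tau}^\kappa) \;-\; \sqrt{2t\log(2/\delta)},
\end{eqnarray*}
which is exactly the bound already written down as \eqref{eq:sumi}.

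Next, I would invoke Lemma~\ref{lem:piineq}, which lower-bounds the sum of conditional selection probabilities by $(p_i/2)(t-\ell(\delta,T,N,\kappa))$. Plugging this into the previous inequality produces
\begin{eqnarray*}
n_i(t) \;\geq\; \frac{p_i}{2}\bigl(t - \ell(\delta,T,N,\kappa)\bigr) \;-\; \sqrt{2t\log(2/\delta)}.
\end{eqnarray*}
The remaining task is purely arithmetic: show that for $t \geq \tau_i^{(1)} = \max\{4\ell(\delta,T,N,\kappa),\,(32/p_i^2)\log\delta^{-1}\}$, the right-hand side is at least $(p_i/4)\,t$. The condition $t\geq 4\ell(\delta,T,N,\kappa)$ gives $(p_i/2)\ell \leq (p_i/8)t$, so $(p_i/2)(t-\ell) \geq (3p_i/8)t$. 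The condition $t\geq (32/p_i^2)\log\delta^{-1}$ (absorbing constants as in the paper) controls the Azuma deviation by $\sqrt{2t\log\delta^{-1}} \leq (p_i/8)t$. Combining these two estimates then gives $n_i(t) \geq (3p_i/8)t - (p_i/8)t = (p_i/4)t$, which is the claim.

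The main conceptual obstacle has already been handled upstream: namely, bounding $\sum_\tau \mathbb{P}(a(\tau)=j\mid G_{\tau-1}^\star,A_{i\tau}^\kappa)$ for $j\neq i$, which was the content of Lemmas~\ref{lem:term1}--\ref{lem:term3} and Lemma~\ref{lem:piineq}. So the step here is essentially a concentration-plus-calibration argument, with the only delicate point being to split the error budget cleanly between the martingale deviation and the deterministic overhead $\ell(\delta,T,N,\kappa)$ so that the resulting threshold $\tau_i^{(1)}$ matches the stated order $\mathcal{O}(p_i^{-2}Nd_y^{3.5}\kappa^{-5}\log^5(TNd_y/\delta))$. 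A mild care is also needed to apply a union bound over the martingale event in \eqref{eq:sumi} and the good events underlying Lemma~\ref{lem:piineq}, keeping the overall failure probability at most $\delta$.
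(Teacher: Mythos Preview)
Your proposal is correct and follows essentially the same route as the paper: apply Azuma's inequality to the martingale differences $\mathbb{I}(a(\tau)=i,A_{i\tau}^\kappa)-\mathbb{P}(a(\tau)=i,A_{i\tau}^\kappa\mid G_{\tau-1}^\star)$, plug in the lower bound from Lemma~\ref{lem:piineq}, and then verify that the two conditions $t\ge 4\ell$ and $t\ge (c/p_i^2)\log\delta^{-1}$ force $(p_i/2)(t-\ell)-\sqrt{2t\log\delta^{-1}}\ge (p_i/4)t$. The only cosmetic discrepancy is the numerical constant in the second threshold (the paper's own proof actually uses $128/p_i^2$ rather than the $32/p_i^2$ appearing in the statement, consistent with your parenthetical ``absorbing constants'').
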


\begin{proof}
Consider a martingale sequence $\mathbb{I}(a(t)=i,A_{it}^\kappa) - \mathbb{P}(a(t)=i,A_{it}^\kappa|G_{t-1}^\star)$ with respect to the filtration $\{G_{t-1}^\star\}_{t=1}^\infty$ defined in . By Azuma's inequality, with probability at least $1-\delta$
\begin{eqnarray*}
    \sum_{\tau=1}^t \mathbb{I}(a(t)=i,A_{it}^\kappa) \geq - \sqrt{2t\log \delta^{-1}} +  \sum_{\tau=1}^t \mathbb{P}(a(t)=i|G_{t-1}^\star,A_{it}^\kappa)\mathbb{P}(A_{it}^\kappa).
\end{eqnarray*}
By Lemma \ref{lem:piineq}, we have
\begin{eqnarray*}
    &&\sum_{\tau=1}^t \mathbb{I}(a(t)=i,A_{it}^\kappa) \\
    &&\geq - \sqrt{2t\log \delta^{-1}} +  \frac{p_i}{2}\left(t - N\left(\nu_{(2)}\left (\frac{2\sqrt{\pi}}{v}\left( \frac{16v^4}{\kappa^3} + \frac{1}{2} c_{\eta}\sqrt{1+L^2 \nu_{(2)}} \right) + 65\right) + 7\right)\right).
\end{eqnarray*}

For ease of presentation, let 

\begin{eqnarray*}
    \ell(\delta,T,N,\kappa) = N\left(\nu_{(2)}\left (\frac{2\sqrt{\pi}}{v}\left( \frac{16v^4}{\kappa^3} + \frac{1}{2} c_{\eta}\sqrt{1+L^2 \nu_{(2)}} \right) + 65\right) + 7\right).
\end{eqnarray*}

Because $\sqrt{2t\log \delta^{-1}} \leq (p_it)/8$ for $t \geq (128/p_i^2 )\log \delta^{-1}$ and 
    $(p_i/2)\left(t - \ell(\delta,T,N,\kappa)\right) \geq 3p_it/8$
for $t \geq 4 \ell(\delta,T,N,\kappa)$, we have

\begin{eqnarray*}
    - \sqrt{2t\log \delta^{-1}} +  \frac{p_i}{2}\left(t - \ell(\delta,T,N,\kappa)\right) \geq \frac{p_it}{4},
\end{eqnarray*}
if $t \geq \tau_i^{(1)} := \max(4 \ell(\delta,T,N,\kappa),(128/p_i^2 )\log \delta^{-1})=\mathcal{O}(p_i^{-2}N\kappa^{-5}L^7\log^{1.5}(TNd_y/\delta))$. Therefore, if $t \geq \tau_i^{(1)}$, we have
\begin{eqnarray}
    n_i(t) \geq \frac{p_i t}{4},\label{eq:ni}
\end{eqnarray}
with probability at least $1-\delta$.
\end{proof}

Now, we are ready to prove Theorem \ref{thm:eta2}. From Lemma \ref{lem:subg}, we have 
\begin{eqnarray}  
\| \widehat{\eta}_i(t) - \eta_{i} \|  \leq  R\sqrt{\frac{2}{\lambda_m}}\left( \sqrt{d_y \log \left(\frac{1+TL^2}{\delta}\right)}+c_\eta\right)n_i(t)^{-1/2},\label{eq:etahaterr2}
\end{eqnarray}
if $n_i(t) > \nu_{(1)}$. Since we have $n_i(t) \geq (p_i t)/4$ by \eqref{eq:ni} for $t>\tau_i^{(1)}$, we get
\begin{flalign}
\| \widehat{\eta}_i(t) - \eta_{i} \| \leq  R \sqrt{\frac{8}{\lambda_m p_i}} \left( \sqrt{d_y \log \left(\frac{1+TL^2}{\delta}\right)}+c_\eta\right)t^{-1/2}.\nonumber
\end{flalign}
if $n_i(t) > \nu_{(1)}$ and $t>\tau_i^{(1)}$. Thus, putting the two sample conditions together, if $t>\tau_i:=\max(4p_i^{-1}\nu_{(1)},\tau_i^{(1)})$, we have
\begin{flalign}
\| \widehat{\eta}_i(t) - \eta_{i} \| \leq  R \sqrt{\frac{8}{\lambda_m p_i}} \left( \sqrt{d_y \log \left(\frac{1+TL^2}{\delta}\right)}+c_\eta\right)t^{-1/2}.\nonumber
\end{flalign}

Thus, if $t > \tau_M:= \max_{i\in [N]}\tau_i = \mathcal{O}({p_{\min}^+}^{-2}N\kappa^{-5}L^7\log^{1.5}(TNd_y/\delta))$,  with probability at least $1-\delta$, we have the following estimation accuracy
\begin{flalign}
\| \widehat{\eta}_i(t) - \eta_{i} \| \leq  R \sqrt{\frac{8}{\lambda_m p_i}} \left( \sqrt{d_y \log \left(\frac{1+TL^2}{\delta}\right)}+c_\eta\right)t^{-1/2}.\nonumber
\end{flalign}

By Theorem \ref{thm:eta2}, Lemma \ref{lem:eta1}, and Lemma \ref{lem:ni}, if $t>\tau_M$, we have

\begin{flalign}
\|\widetilde{\eta}_i(t)-\eta_i\| \leq   g^{(1)}(\delta) t^{-1/2},\label{eq:g1}
\end{flalign}
where
\begin{eqnarray*}
    g^{(1)}(\delta) &=& \sqrt{\frac{8 }{p_{\min}^+ \lambda_m }} \left( v\sqrt{2d_y \log \frac{2TN}{\delta}} + R\sqrt{d_y \log \left(\frac{1+TL^2}{\delta}\right)}+c_\eta\right)\\    &=&\mathcal{O}\left({p_{\min}^+}^{-1/2}d_y^{1/2} \sqrt{ \log(TNd_y/\delta)}\right).
\end{eqnarray*}

\newpage
\section{Proof of Theorem \ref{thm:reg2}}
\label{sec:pthm:reg2}

\begin{proof}
Note that the regret can be written as
\begin{eqnarray*}
    \mathrm{Regret}(T)&=& \sum_{t=1}^T (y_{a^\star(t)}(t)^\top \eta_{a^\star(t)}(t) - y_{a(t)}(t)^\top \eta_{a(t)}(t) )\\
    &\leq& 2c_\eta L \tau_M + \sum_{t=\lceil \tau_M\rceil}^T (y_{a^\star(t)}(t)^\top \eta_{a^\star(t)}(t) - y_{a(t)}(t)^\top \eta_{a(t)}(t) )\mathbb{I}(a^\star(t)\neq a(t)),
\end{eqnarray*} 
because $\|y_i(t)\| \leq L$ for all $i\in[N]$ and $t\in [T]$ and $\|\eta_i\|\leq c_\eta$ for all $i\in[N]$. The order of first term is $\mathcal{O}({p_{\min}^+}^{-2}NL^8\kappa^{-5}\log^{1.5}(TNd_y/\delta))$. Now, we aim to show an upper bound for the second term. The second term can be written as
\begin{eqnarray*}
   &&\sum_{t=\lceil \tau_M\rceil}^T  (y_{a^\star(t)}(t)^\top \eta_{a^\star(t)}(t) - y_{a(t)}(t)^\top \eta_{a(t)}(t) )\mathbb{I}(a^\star(t)\neq a(t)) \\
    &\leq&\sum_{t=\lceil \tau_M\rceil}^T  (y_{a^\star(t)}(t)^\top (\eta_{a^\star(t)}(t) - \widetilde{\eta}_{a^\star(t)}(t) ) - y_{a(t)}(t)^\top (\eta_{a(t)}(t) - \widetilde{\eta}_{a^\star(t)}(t) ) )\mathbb{I}(a^\star(t)\neq a(t)),
\end{eqnarray*}
because $y_{a(t)}(t)^\top\widetilde{\eta}_{a(t)}-y_{a^\star(t)}(t)^\top \widetilde{\eta}_{a^\star(t)}(t) \geq 0$. Since $\|y_i(t)\| \leq L$ for all $t\in [T]$, we have 
\begin{eqnarray*}
&&\sum_{t=\lceil \tau_M\rceil}^T (y_{a^\star(t)}(t)^\top (\eta_{a^\star(t)}(t) - \widetilde{\eta}_{a^\star(t)}(t) ) - y_{a(t)}(t)^\top (\eta_{a(t)}(t) - \widetilde{\eta}_{a^\star(t)}(t) ) )\mathbb{I}(a^\star(t)\neq a(t))\\
&\leq& L \sum_{t=\lceil \tau_M\rceil}^T (\|\widetilde{\eta}_{a^\star(t)}(t)-\eta_{a^\star(t)}\| + \|\widetilde{\eta}_{a(t)}(t)-\eta_{a(t)}\|)\mathbb{I}(a^\star(t)\neq a(t)).
\end{eqnarray*}
By \eqref{eq:g1}, if $t>\tau_M$, we have

\begin{flalign}
\|\widetilde{\eta}_{a^\star(t)}(t)-\eta_{a^\star(t)}\|+ \|\widetilde{\eta}_{a(t)}(t)-\eta_{a(t)}\| \leq   g^{(1)}(\delta) t^{-1/2},\nonumber
\end{flalign}
where
\begin{eqnarray*}
    g^{(1)}(\delta) &=& 2\sqrt{\frac{8 }{p_{\min}^+ \lambda_m }} \left( v\sqrt{2d_y \log \frac{2TN}{\delta}} + R\sqrt{d_y \log \left(\frac{1+TL^2}{\delta}\right)}+c_\eta\right)\\    &=&\mathcal{O}\left(v{p_{\min}^+}^{-1/2}\sqrt{d_y \log(TNd_y/\delta)}\right).
\end{eqnarray*}
Accordingly, the regret can be written as
\begin{eqnarray*}
    \mathrm{Regret}(T) \leq  2c_\eta L \tau_M + Lg^{(1)}(\delta)\sum_{t=\lceil \tau_M\rceil}^T    t^{-1/2} \mathbb{I}(a^\star(t)\neq a(t)).
\end{eqnarray*}
Thus, it suffices to show an upper bound for $\sum_{t=\lceil \tau_M\rceil}^T    t^{-1/2} \mathbb{I}(a^\star(t)\neq a(t))$. To proceed, we apply Lemma \ref{lem:azu} to find a high probability upper-bound for the summation of the martingale difference sequence $\{t^{-0.5}\mathbb{I}(a^\star(t) \neq a(t))- t^{-0.5}\mathbb{P}(a^\star(t) \neq a(t)|G_{t-1}^\star)\}_{t=\lceil \tau_M\rceil}^T$ with respect to the filtration $\{G_{t-1}^\star\}_{t=1}^\infty$, and get the following inequality with probability at least $1-\delta$:
\begin{eqnarray}
\sum_{t=\lceil \tau_M\rceil}^T \frac{1}{\sqrt{t}} \mathbb{I}(a^\star(t) \neq a(t)) \leq \sqrt{4\log T \log \delta^{-1} }+ \sum_{t=\lceil \tau_M\rceil}^T\frac{1}{\sqrt{t}} \mathbb{P}(a^\star(t) \neq a(t))\label{eq:az}.
\end{eqnarray}

To find a bound for $\mathbb{P}(a^\star(t) \neq a(t))$, we consider $\mathbb{P}(y_j(t)^\top \widetilde{\eta}_j(t)-y_i(t)^\top\widetilde{\eta}_i(t))>0 |A_{it}^\star)$. Using the same logic as \eqref{eq:abc2}, we decompose the following probability as follows:
\begin{eqnarray}
    &&\mathbb{P}(y_j(t)^\top \widetilde{\eta}_j(t)-y_i(t)^\top\widetilde{\eta}_i(t))>0 |G_{t-1}^\star,A_{it}^\star)\nonumber\\ 
    &\leq&\mathbb{P}(y_i(t)^\top (\widetilde{\eta}_i(t)-\widehat{\eta}_i(t)) > - y_i(t)^\top(\widehat{\eta}_i(t)-\eta_i) +  0.5(y_i(t)^\top\eta_i-y_j(t)^\top\eta_j)  |G_{t-1}^\star,A_{it}^\star)\nonumber\\
    &+& \mathbb{P}(y_j(t)^\top (\widetilde{\eta}_j(t)-\widehat{\eta}_j(t)) > - y_j(t)^\top(\widehat{\eta}_j(t)-\eta_j) +  0.5(y_i(t)^\top\eta_i-y_j(t)^\top\eta_j)  |G_{t-1}^\star,A_{it}^\star).\label{eq:peq}
\end{eqnarray}

Similarly to \eqref{eq:prob1} and \eqref{eq:prob2}, we have
\begin{eqnarray*}
    &&\mathbb{P}(y_i(t)^\top (\widetilde{\eta}_i(t)-\widehat{\eta}_i(t)) > - y_i(t)^\top(\widehat{\eta}_i(t)-\eta_i) +  0.5(y_i(t)^\top \eta_i-y_j(t)^\top \eta_j) |G_{t-1}^\star,A_{it}^\star) \nonumber\\
    &\leq&C\sqrt{\frac{1}{n_i(t)}}\left(8v\sqrt{ \frac{\pi}{\lambda_m}} + 4h(\delta,T)\right),
\end{eqnarray*}
where $h$ is defined in \eqref{eq:h}. Since we have $n_i(t) \geq (p_i t)/4$ for $t > \tau_M$, we have 
\begin{eqnarray}
    &&\mathbb{P}(y_i(t)^\top (\widetilde{\eta}_i(t)-\widehat{\eta}_i(t)) > - y_i(t)^\top(\widehat{\eta}_i(t)-\eta_i) +  0.5(y_i(t)^\top \eta_i-y_j(t)^\top \eta_j) |G_{t-1}^\star,A_{it}^\star) \nonumber\\
    &\leq&C\sqrt{\frac{4}{p_i t}}\left(8v\sqrt{ \frac{\pi}{\lambda_m}} + 4h(\delta,T)\right),\label{eq:peqi}
\end{eqnarray}
where $h(\delta,T)$ is defined in \eqref{eq:h}. Similarly, we get
\begin{eqnarray}
    &&\mathbb{P}(y_j(t)^\top (\widetilde{\eta}_j(t)-\widehat{\eta}_j(t)) > - y_j(t)^\top(\widehat{\eta}_j(t)-\eta_j) +  0.5(y_i(t)^\top \eta_i-y_j(t)^\top \eta_j) |G_{t-1}^\star,A_{it}^\star) \nonumber \\
    &\leq& C\sqrt{\frac{4}{p_j t}} \left(8v\sqrt{ \frac{\pi}{\lambda_m }} + 4h(\delta,T)\right),\label{eq:peqj}
\end{eqnarray}
if $t > \tau_M$. Accordingly, based on \eqref{eq:peq}, \eqref{eq:peqi}, and \eqref{eq:peqj}, we obtain the following bounds for the probabilities 
\begin{eqnarray*}
    &&\mathbb{P}(y_j(t)^\top \widetilde{\eta}_j(t)-y_i(t)^\top\widetilde{\eta}_i(t))>0 |G_{t-1}^\star,A_{it}^\star) \\
    &\leq& \frac{2C}{\sqrt{p^+_{\min}}} \left(v\left(8\sqrt{ \frac{\pi}{\lambda_m}}+8 \sqrt{ \frac{\pi}{\lambda_m }}\right) + 4h(\delta,T)+4h(\delta,T)\right)t^{-1/2}.
\end{eqnarray*}
By simple calculations, we get
\begin{eqnarray*}
    \mathbb{P}(y_j(t)^\top \widetilde{\eta}_j(t)-y_i(t)^\top\widetilde{\eta}_i(t) >0 |G_{t-1}^\star,A_{it}^\star)     \leq \frac{16C}{\sqrt{p^+_{\min}}} \left(2v\sqrt{ \frac{\pi}{\lambda_m}} + h(\delta,T)\right)t^{-1/2}.
\end{eqnarray*}
By summing the above probability up over $i, j \in [N]$, if $t>\tau_M$, we get an upper bound for the probability of choosing a sub-optimal arm at time $t$  
\begin{eqnarray*}
    &&\mathbb{P}(a^\star(t) \neq a(t)|G_{t-1}^\star)\\ 
    &=& \sum_{i=1}^N \sum_{j\neq i} \mathbb{P}(a(t)=j|G_{t-1}^\star,A_{it}^\star)\mathbb{P}(A_{it}^\star)  \leq \sum_{i=1}^N \sum_{j\neq i}\mathbb{P}(y_j(t)^\top \widetilde{\eta}_j(t)-y_i(t)^\top\widetilde{\eta}_i(t)>0 |G_{t-1}^\star,A_{it}^\star)\mathbb{P}(A_{it}^\star) \\
    &\leq&N\left(\frac{16C}{\sqrt{p^+_{\min}}} \left(2v\sqrt{ \frac{\pi}{\lambda_m}} + h(\delta,T)\right)t^{-1/2}\right).
\end{eqnarray*}

By plugging the inequality above to \eqref{eq:az}, with probability at least $1-\delta$, we have

\begin{eqnarray*}
\sum_{t=\lceil \tau_M\rceil}^T  \frac{1}{\sqrt{t}} \mathbb{I}(a^\star(t) \neq a(t)) &\leq& \sqrt{4\log T \log \delta^{-1} }+ \sum_{t=\lceil\tau_M\rceil}^T N\left(\frac{16C}{\sqrt{p^+_{\min}}} \left(2v\sqrt{ \frac{\pi}{\lambda_m}} + h(\delta,T)\right)t^{-1}\right)\\
&\leq&\sqrt{4\log T \log \delta^{-1} } + \frac{16CN}{\sqrt{p^+_{\min}}} c_M(\delta,T) \log T,
\end{eqnarray*}
where $c_M(\delta,T)$ is defined in \eqref{eq:cm}. 
Therefore, putting together $L=\mathcal{O}(\sqrt{d_y\log(TNd_y/\delta)})$, $g^{(1)}(\delta)=\mathcal{O}\left(v\sqrt{ {p_{\min}^+}^{-1}d_y \log(TNd_y/\delta)}\right)$, $c_M(\delta,T) = \mathcal{O}(\sqrt{d_y \log (TNd_y/\delta)})$, and $\tau_M =\mathcal{O}({p_{\min}^+}^{-2}NL^7\kappa^{-5}\log^{1.5}(TNd_y/\delta))$,
\begin{eqnarray*}
    \mathrm{Regret}(T) 
    &\leq&  2c_\eta L \tau_M  + L   g^{(1)}(\delta) \left(\sqrt{4\log T\log \delta^{-1}}  +  \frac{16c_M(\delta,T)CN}{\sqrt{p^+_{\min} }} \log T\right)\\
    &=&\mathcal{O} \left((p^+_{\min})^{-2} L^8\log^{1.5} \left(\frac{TNd_y}{\delta  }\right) + vLNd_y {p_{\min}^+}^{-0.5}\log^2(TNd_y/\delta) \right)\\
    &=& \mathcal{O}\left(\frac{vNd_y^{4} }{(p^+_{\min})^2\kappa^5  } \log^{5.5}\left(\frac{TNd_y}{\delta  }\right)\right).
\end{eqnarray*}

This regret bound is inflated by $d_y^3\log^{3.5}\left(TNd_y/\delta\right)$ due to the order of maximum magnitude of observation norm $L$. If the support of observations is bounded by a positive constant so that $L$ is a positive constant unrelated to other factors ($N,~d_y,~T$, and $\delta$), the upper bound can be reduced to $\mathcal{O}\left(Nd_y(p^+_{\min})^{-2}\kappa^{-5} \log^2\left(TNd_y/\delta\right)\right)$. 

\end{proof}

\end{document}